\documentclass[]{fairmeta}

\usepackage[utf8]{inputenc} %
\usepackage[T1]{fontenc}    %
\usepackage{hyperref}       %
\usepackage{url}            %
\usepackage{booktabs}       %
\usepackage{amsfonts}       %
\usepackage{nicefrac}       %
\usepackage{microtype}      %
\usepackage{xcolor}         %

\usepackage{amsmath,amsthm,amssymb,bm}
\usepackage{amsfonts}
\usepackage{thmtools} 
\usepackage{bbm}
\usepackage{lipsum}
\usepackage{nicefrac}
\usepackage{algorithm}
\usepackage{algorithmic}

\theoremstyle{plain}  %
\newtheorem{theorem}{Theorem}[section]  %
\newtheorem{lemma}[theorem]{Lemma}

\newtheorem{corollary}[theorem]{Corollary}

\newtheoremstyle{remarkstyle}
  {} %
  {} %
  {} %
  {} %
  {\bfseries} %
  {.} %
  {.5em} %
  {} %
  
\theoremstyle{remarkstyle}

\definecolor{mygray}{gray}{0.95}
\newcommand{\graybox}[1]{%
\begingroup
\setlength{\fboxsep}{0pt}%
\colorbox{mygray} {%
\begin{minipage}{\linewidth}%
\vspace{-0.5em}%
{#1}%
\end{minipage}%
}%
\endgroup
}

\usepackage{color,soul}
\colorlet{mygreen}{green!55!black}
\definecolor{myyellow1}{HTML}{D89A3C}
\colorlet{myyellow}{myyellow1!90!black}
\colorlet{metablue}{blue!60!green}
\definecolor{nicerblue}{HTML}{417481} %

\newcommand{\mgreen}[1]{ {\color{mygreen} #1}}
\newcommand{\myellow}[1]{{\color{myyellow} #1}}
\newcommand{\mgray}[1]{{\text{ \hypersetup{hidelinks} \color{lightgray} #1}}}

\newcommand{\shortnote}[1]{{\tag*{\mgray{#1}}}}

\usepackage{empheq}

\newcommand\numberthis{\addtocounter{equation}{1}\tag{\theequation}}
    {%
        \end{gathered}\end{equation}
    }

\newcommand{\br}[1]{\left[#1\right]}
\newcommand{\pr}[1]{\left(#1\right)}

\newcommand{\norm}[1]{\|#1\|}

\newcommand{\dt}{\rd t}
\newcommand{\T}{\top}

\def\1{\bm{1}}

\def\rd{{\textnormal{d}}}

\DeclareMathAlphabet{\mathsfit}{\encodingdefault}{\sfdefault}{m}{sl}
\SetMathAlphabet{\mathsfit}{bold}{\encodingdefault}{\sfdefault}{bx}{n}

\def\calB{{\mathcal{B}}}

\def\calG{{\mathcal{G}}}

\def\calL{{\mathcal{L}}}

\def\calN{{\mathcal{N}}}

\def\calT{{\mathcal{T}}}
\def\calU{{\mathcal{U}}}

\def\calW{{\mathcal{W}}}
\def\calX{{\mathcal{X}}}

\newcommand{\E}{\mathbb{E}}

\newcommand{\R}{\mathbb{R}}

\newcommand{\KL}{D_{\mathrm{KL}}}

\DeclareMathOperator*{\argmin}{arg\,min}

\newcommand{\harphi}{\hat\varphi}
\newcommand{\pbase}{p^\text{\normalfont base}}

\usepackage{xspace}

\newcommand*{\ie}{{\it i.e.,}\@\xspace}

\usepackage{xcolor}
\usepackage{hyperref}
\hypersetup{
    colorlinks,
    linkcolor={metablue},
    citecolor={metablue},
    urlcolor={metablue}
}
\usepackage{cleveref}
\usepackage{multirow}
\usepackage{colortbl}      %
\definecolor{RoyalBlue}{rgb}{0.25, 0.41, 0.88}
\newcommand{\cellhi}{\cellcolor{metablue!15}}
\newcommand{\cellhj}{\cellcolor{myyellow!20}}

\usepackage{pifont}
\usepackage{ifsym}
\colorlet{myred}{red!85!black}
\newcommand{\cmark}{{\color{mygreen}\ding{51}}}%
\newcommand{\xmark}{{\color{myred}\ding{55}}}%

\usepackage{enumitem} %
\usepackage{caption} %
\usepackage{mdframed}
\usepackage{scalerel}[2016/12/29]
\newcommand\inttt{\scaleobj{0.8}{\int}}
\usepackage{graphicx}
\makeatletter
\DeclareRobustCommand{\cev}[1]{%
  {\mathpalette\do@cev{#1}}%
}
\newcommand{\do@cev}[2]{%
  \vbox{\offinterlineskip
    \sbox\z@{$\m@th#1 x$}%
    \ialign{##\cr
      \hidewidth\reflectbox{$\m@th#1\vec{}\mkern4mu$}\hidewidth\cr
      \noalign{\kern-\ht\z@}
      $\m@th#1#2$\cr
    }%
  }%
}
\makeatother
\usepackage{wrapfig}        %
\makeatletter
\newcommand{\FORR}[1]{\ALC@it\algorithmicfor\ #1%
  \begin{ALC@loop}}
\newcommand{\ENDFORR}{\end{ALC@loop}\ALC@it\algorithmicendfor}
\makeatother

\titleformat*{\paragraph}{\sffamily\bfseries}

\title{Adjoint Schrödinger Bridge Sampler}

\author[1,*]{Guan-Horng Liu}
\author[2,*]{Jaemoo Choi}
\author[2]{Yongxin Chen}
\author[1]{Benjamin Kurt Miller}
\author[1,*]{Ricky T. Q. Chen}

\affiliation[1]{FAIR at Meta}
\affiliation[2]{Georgia Institute of Technology}
\contribution[*]{Core contributors}

\abstract{
Computational methods for learning to sample from the Boltzmann distribution---where the target distribution is known only up to an unnormalized energy function---have advanced significantly recently.
Due to the lack of explicit target samples, however, prior diffusion-based methods, known as \emph{diffusion samplers}, often require importance-weighted estimation or complicated learning processes.
Both trade off scalability with extensive evaluations of the energy and model, thereby limiting their practical usage.
In this work, we propose \textbf{Adjoint Schrödinger Bridge Sampler (ASBS)}, a new diffusion sampler that employs simple and scalable matching-based objectives yet without the need to estimate target samples during training.
ASBS is grounded on a mathematical model---the Schrödinger Bridge---which enhances sampling efficiency via kinetic-optimal transportation.
Through a new lens of stochastic optimal control theory, we demonstrate how SB-based diffusion samplers can be learned at scale via Adjoint Matching and prove convergence to the global solution. 
Notably, ASBS generalizes the recent Adjoint Sampling \citep{AS} to arbitrary source distributions by relaxing the so-called memoryless condition that largely restricts the design space.
Through extensive experiments, we demonstrate the effectiveness of ASBS on sampling from classical energy functions, amortized conformer generation, and molecular Boltzmann distributions.
}

\date{\today}
\correspondence{\email{ghliu@meta.com}}

\metadata[Code]{\url{https://github.com/facebookresearch/adjoint_samplers}}

\begin{document}

\maketitle

\addtocontents{toc}{\protect\setcounter{tocdepth}{0}}

\section{Introduction}

Sampling from Boltzmann distributions is a fundamental problem in computational science, with widespread applications in Bayesian inference, statistical physics, and chemistry \citep{box2011bayesian,binder1992monte,tuckerman2023statistical}.
Mathematically, we aim to sample from a target distribution $\nu(x)$ known up to a unnormalized, often differentiable, energy function $E(x): \calX \subseteq \R^d \rightarrow \R$,
\begin{align}\label{eq:nu}
    \nu(x) := \frac{e^{-E(x)}}{Z}, \qquad \text{where }~ Z := \int_\calX e^{-E(x)} \rd x 
\end{align}
is an intractable normalization constant.
For instance, the energy function $E(x)$ of a molecular system quantifies the stability of a chemical structure based on the 3D positions of particles. A lower energy indicates a more stable structure and hence a higher likelihood of its occurrence, \ie $\nu(x) \propto e^{-E(x)}$.

Classical methods that generate samples from $\nu(x)$ rely on Markov Chain Monte Carlo algorithms, which run a Markov chain 
whose stationary distribution is $\nu(x)$ 
\citep{metropolis1953equation,neal2001annealed,del2006sequential}. These methods, however, tend to suffer from slow mixing time and require extensive evaluations of energy function, limiting their practical usages due to prohibitive complexity.

To improve sampling efficiency, modern samplers focus on learning better proposal distributions \citep{noe2019boltzmann,midgley2023flow}. %
Among those, recent advances in diffusion-based generative models \citep{song2021score,ho2020denoising} have given rise to a family of \textit{Diffusion Samplers}, which consider stochastic differential equations (SDEs) of the following form:
\begin{align}\label{eq:diffusion_sampler}
    \rd X_t = \br{f_t (X_t) + \sigma_t u_t^\theta (X_t)} \dt + \sigma_t \rd W_t, \qquad X_0 \sim \mu(X_0), 
\end{align}
where $f_t(x): [0,1]\times \calX \rightarrow \calX$ the base drift, $\sigma_t : [0,1] \rightarrow \R_{>0}$ the noise schedule, and $\mu(x)$ the initial source distribution.
Given $(f_t, \sigma_t, \mu)$, the diffusion sampler learns a parametrized drift $u_t^\theta(x)$ 
transporting samples to the target distribution $\nu(x)$ at the terminal time $t=1$.

Computational methods for learning diffusion samplers have grown significantly recently \citep{zhang2022path,vargas2023denoising,berner2023optimal,chen2025sequential}.
Due to the distinct problem setup in \eqref{eq:nu},
the target distribution is defined exclusively by its energy $E(x)$, rather than by explicit target samples.
This characteristic renders modern generative modeling techniques for scalability---particularly the score matching objectives\footnote{\label{ft:1}%
    The matching objective is a simple regression loss, $\E\norm{u^\theta_t(X_t) - v_t(X_t, X_1)}^2$, w.r.t. some tractable $v_t$.
}---less applicable. 
As such, prior matching-based diffusion samplers 
 \citep{phillips2024particle,akhound2024iterated,de2024target}
often require computationally intensive estimation of target samples via importance weights (IWs).

Recently, \citet{AS} introduced Adjoint Sampling (AS), a new class of diffusion samplers whose matching objectives rely only on on-policy samples, thereby greatly enhancing scalability.
By incorporating stochastic optimal control (SOC) theory \citep{kappen2005path,todorov2007linearly}, AS facilitates the use of Adjoint Matching \citep{domingo2024adjoint}, a novel matching objective that imposes self-consistency in generated samples, effectively eliminating the needs for target samples.

The efficiency of AS, however, is achieved through a specific instantiation of the SDE \eqref{eq:diffusion_sampler} to satisfy the so-called \textit{memoryless} condition.
This condition---formally discussed in \Cref{sec:2}---%
restricts its source distribution to be Dirac delta $\mu(x) := \delta$, precluding the use of common priors such as Gaussian or domain-specific priors such as the harmonic oscillators in molecular systems \citep{jing2023eigenfold}.
Notably, the memoryless condition underlies \textit{all} previous matching-based diffusion samplers, restricting the design space of \eqref{eq:diffusion_sampler} from other choices known to enhance transportation efficiency \citep{shaul2023kinetic}. While the condition has been relaxed in non-matching-based methods at extensive computational complexity \citep{richterimproved,bernton2019schr}, no existing diffusion sampler---to our best understanding---has successfully combined matching objectives with non-memoryless condition. 
\Cref{tab:compare} summarizes the comparison between prior diffusion samplers.

\begin{table}[t]
\caption{Compared to prior diffusion samplers, \textbf{Adjoint Schrödinger Bridge Sampler} (\textbf{ASBS}) offers the most flexible design for diffusion samplers \eqref{eq:diffusion_sampler}, while learning the drift $u^\theta_t$ via scalable matching objectives that do not rely on computation of importance weights (IWs). 
}
\vspace{-5pt} %
\centering
\resizebox{\textwidth}{!}{%
\renewcommand{\arraystretch}{1.2}
\setlength{\tabcolsep}{3pt}
\begin{tabular}{lcccc}
     \toprule
     & \multicolumn{2}{c}{Design condition for \eqref{eq:diffusion_sampler}}
     & \multicolumn{2}{c}{Learning method for $u^\theta_t$}
     \\ \cmidrule(lr){2-3}  \cmidrule(lr){4-5}
     Method & Non-memoryless &
     Arbitrary prior &
     Matching objective$^\text{\ref{ft:1}}$ &
     No reliance on IWs \\
     \midrule
     PIS {\tiny\citep{zhang2022path}} 
     DDS {\tiny\citep{vargas2023denoising}} 
        & \xmark & \xmark & \xmark & \cmark \\
    LV-PIS \& LV-DDS {\tiny\citep{richterimproved}} 
        & \xmark & \xmark & \xmark & \xmark \\
     PDDS {\tiny\citep{phillips2024particle}} iDEM {\tiny\citep{akhound2024iterated}} 
        & \xmark & \xmark & \cmark & \xmark \\
     AS {\footnotesize\citep{AS}} 
        & \xmark & \xmark & \cmark & \cmark \\
     Sequential SB {\tiny\citep{bernton2019schr}}
        & \cmark & \cmark & \xmark & \xmark \\
     \midrule
     \textbf{Adjoint Schrödinger Bridge Sampler (Ours)}
     & \cmark & \cmark & \cmark & \cmark \\
     \bottomrule
\end{tabular}
}
\label{tab:compare}
\end{table}

In this work, we propose \textbf{Adjoint Schrödinger Bridge Sampler (ASBS)}, a new adjoint-matching-based diffusion sampler that eliminates the requirement for memoryless condition entirely. 
Formally, ASBS recasts learning diffusion sampler as a distributionally constrained optimization, known as the Schrödinger Bridge (SB) problem \citep{schr1931uber,schrodinger1932theorie,leonard2013survey,chen2016relation}:

\graybox{%
\begin{subequations}\label{eq:sb}
    \begin{align}
    &\qquad\min_{u} \KL(p^u || \pbase) = 
    \E_{X \sim p^u} \br{\int_0^1 \tfrac{1}{2}\norm{u_t^\theta (X_t)}^2 \dt}, \label{eq:sb-obj}
    \\
    \text{s.t. }
    \rd X_t = &\br{f_t (X_t) + \sigma_t u_t^\theta (X_t)} \dt + \sigma_t \rd W_t, \qquad X_0 \sim \mu(X_0), \qquad X_1 \sim \nu(X_1). \label{eq:sb-sde}
\end{align}
\end{subequations}
}

Here, $p^u$ denotes the {path} distribution induced by the SDE in \eqref{eq:sb-sde}, whereas $\pbase := p^{u:=0}$ denotes the path distribution induced by the ``base'' SDE when $u_t := 0$.
By minimizing their KL divergence,
the SB problem \eqref{eq:sb} seeks the kinetic-optimal drift $u_t^\star$---an optimality structure well correlated with sampling efficiency in generative modeling \citep{finlay2020train,liu2023i2sb}. 
Since the SOC problem in AS corresponds to a specific case of the SB problem with $(f_t, \mu) := (0, \delta)$,
ASBS extends AS to handle non-memoryless conditions by solving more general SB problems (see \Cref{thm:sb-soc}). 
Computationally, ASBS retains all scalability advantages from AS by utilizing
an \emph{adjoint}-matching objective that removes the need for estimating target samples.
It also introduces a \emph{corrector}-matching objective to correct nontrivial biases arising from non-memoryless conditions. 
We prove that 
alternating optimization between the two matching objectives is equivalent to executing the Iterative Proportional Fitting algorithm \citep{kullback1968probability}, ensuring global convergence of ASBS to $u_t^\star$ (see \Cref{thm:global-convergence}).
Though extensive experiments, we show superior performance of ASBS over prior diffusion samplers across various benchmarks on sampling multi-particle energy functions.

In summary, we present the following contributions:\looseness=-1
\begin{itemize}[leftmargin=15pt]
    \item We introduce \textbf{ASBS}, an SB-based diffusion sampler capable of sampling target distributions using only unnormalized energy functions, by solving general SB problems with arbitrary priors.
    \item We base ASBS on a new SOC framework that removes the restrictive memoryless condition, develop a scalable matching-based algorithm, and prove theoretical convergence to global solution.
    \item We show ASBS's superior performance over prior methods on sampling Boltzmann distributions of classical energy functions, alanine dipeptide molecule and amortized conformer generation.
\end{itemize}

\section{Preliminary} \label{sec:2}

We revisit the memoryless condition introduced by \citet{domingo2024adjoint} and examine its impact on the constructions of SOC-based diffusion samplers \citep{zhang2022path,AS},
which are closely related to our ASBS.
Additional review can be found in \Cref{sec:additional-prelim}.

\vspace{3pt}
\textbf{Stochastic Optimal Control (SOC)}$\quad$
The SOC problem \eqref{eq:soc} studies an optimization problem:
\begin{equation}\label{eq:soc}
    \min_{u} \E_{X \sim p^u} \br{
        \int_0^1 \tfrac{1}{2}\norm{u_t (X_t)}^2 \dt
        + g(X_1)
    }
    \quad \text{s.t. \eqref{eq:diffusion_sampler}},
\end{equation}
which, unlike the SB problem \eqref{eq:sb}, includes an additional \textit{terminal cost} $g(x): \calX \rightarrow \R$ at the terminal time $t=1$ and considers the SDE without the terminal constraint $X_1 \sim \nu$.
The primary reason for studying this specific optimization problem is that the optimal distribution is known analytically by\footnote{
    \Cref{eq:soc-p*} can be obtained by rewriting \eqref{eq:soc} as $\KL(p^u || \pbase) + \E_{p^u_1}[g(X_1)]$ and then computing the analytic solution $p^\star(X_1 | X_0) \propto \pbase(X_1|X_0)e^{-g(X_1)}$ and normalization 
    $\int\pbase(X_1|X_0)e^{-g(X_1)} \rd X_1 = e^{-V_0(X_0)}$. See \Cref{appendix:soc} for details.
}
\begin{equation}\label{eq:soc-p*}
    p^{\star}(X_0, X_1) = \pbase(X_0, X_1) e^{-g(X_1) + V_0(X_0)},
    ~~
    \text{ where } ~~
    V_0(x) = - \log \inttt \pbase_{1|0}(y|x) e^{-g(y)} \rd y
\end{equation}
is the initial value function.
That is, the optimal distribution $p^{\star}$ is an exponentially tilted version of the base distribution, $\pbase := p^{u:=0}$. %
Specifically, $\pbase$ is tilted by the terminal cost ``$-g(X_1)$'' and the initial value function $V_0(X_0)$, which is intractable. 
Consequently, to ensure its marginal $p^{\star}(X_1)$ follows the target distribution $ \nu(X_1)$, we must eliminate the \emph{initial value function bias} from $V_0(X_0)$.%

\vspace{3pt}
\textbf{Memoryless condition \& SOC-based diffusion sampler}$\quad$
A common approach to eliminate the aforementioned initial value function bias,
adopted by most diffusion samplers, is to restrict the class of base processes to be \emph{memoryless}. Formally, the memoryless condition assumes statistical independency between $X_0$ and $X_1$ in the base distribution:
\begin{equation}\label{eq:ml}
    \pbase(X_0, X_1) 
    \stackrel{\text{memoryless}}{:=}
    \pbase(X_0) \pbase(X_1).
\end{equation}
This memoryless condition \eqref{eq:ml} simplifies the optimal distribution at the terminal time $t=1$ and, upon choosing a proper terminal cost $g(x)$, recovers the target distribution $\nu$,
\begin{equation*}
    p^{\star}(X_1)
    \stackrel{\text{memoryless}}{=}
    \inttt  \pbase(X_0) \pbase(X_1) e^{-g(X_1) + V_0(X_0)} \rd X_0 
    \propto \pbase(X_1) e^{-g(X_1)} 
    =
    \nu(X_1),
\end{equation*}
where the last equality is due to setting the terminal cost to $g(x) := \log${\scriptsize$\tfrac{\pbase_1(x)}{\nu(x)}$}.
Typically, the memoryless condition \eqref{eq:ml} is enforced by a careful design of the base distribution $\pbase$ or, equivalently, the parameters $(f_t, \sigma_t, \mu)$ in \eqref{eq:diffusion_sampler}.
For instance, the variance-preserving process \citep[VP;][]{song2021score} considers a linear base drift $f_t$, a noise schedule $\sigma_t$ that grows significantly with time, and a Gaussian prior $\mu$; see \Cref{fig:compare}. 
Alternatively, one could implement \eqref{eq:ml} with Dirac delta prior $\mu(x) := \delta_0(x)$ and $f_t := 0$, 
leading to the following SOC problem \citep{zhang2022path}:%
\begin{equation}\label{eq:as-soc}
    \min_{u} \E_{X \sim p^u}\!\br{
        \int_0^1 \tfrac{1}{2}\norm{u_t (X_t)}^2 \dt
        + \log \frac{\pbase_1(X_1)}{\nu(X_1)}
    }
    ~~\text{s.t. }
    \rd X_t = \sigma_t u_t (X_t) \dt + \sigma_t \rd W_t, ~~ X_0 {=} 0.
\end{equation}%
Based on the aforementioned reasoning, solving \eqref{eq:as-soc} results in a  diffusion sampler that transports samples to the target distribution at $t{=}1$, with Adjoint Sampling \citep{AS} as the only scalable method of this class.
Despite encouraging, the SOC problem in \eqref{eq:as-soc} is nevertheless limited by its trivial source, 
precluding potentially more effective options for sampling Boltzmann distributions.

\begin{figure}[t]
    \centering
    \includegraphics[width=.85\textwidth]{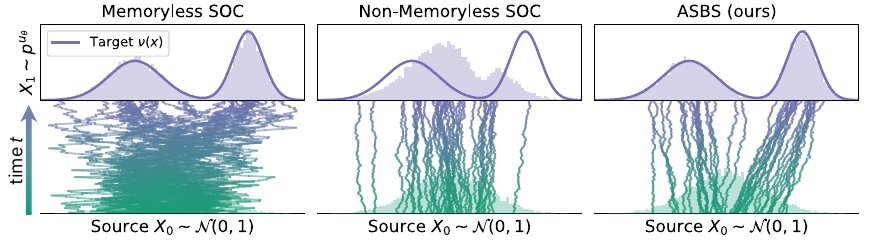}
    \vspace{-2pt}
    \caption{
        Effect of the memoryless condition on learning SOC-based diffusion samplers.
        We consider Gaussian prior $\mu(x) := \calN(x; 0,1)$ with $(f_t, \sigma_t)$ set to VP-SDE for the first plot and $(0,0.2)$ for the rest; see  \Cref{appendix:soc} for details.
        The memoryless condition injects significant noise (\textbf{left}) to correct the otherwise biased optimization (\textbf{middle}), whereas ASBS can successfully debias any non-memoryless processes (\textbf{right}).
    }
    \label{fig:compare}
\end{figure}

\section{Adjoint Schrödinger Bridge Sampler} \label{sec:asbs}

We introduce a new diffusion sampler by solving the SB problem \eqref{eq:sb}, where the target distribution $\nu(x)$ is given by its energy function $E(x)$ rather than explicit samples. All proofs are left in \Cref{sec:proof}.

\subsection{SOC Characteristics of the SB Problem}
The SB problem \eqref{eq:sb}---as an optimization problem with distribution constraints---is widely explored in optimal transport, stochastic control, and recently machine learning \citep{leonard2012schrodinger,chen2021stochastic,de2021diffusion}.
Its kinetic-optimal drift 
$u^\star$ satisfies the following optimality equations:%
\begin{subequations}
    \label{eq:u*}
    \begin{empheq}[left={\text{\normalfont
        $u^\star_t(x) = \sigma_t \nabla \log \varphi_t(x)$, \quad where 
    }\empheqlbrace}]{align}
        \varphi_t(x) = \inttt \pbase_{1|t}(y|x) \varphi_1(y) \rd y,
        & \quad \varphi_0(x)\harphi_0(x) = \mu(x) \label{eq:psi-pde} \\
        \harphi_t(x) = \inttt \pbase_{t|0}(x|y) \harphi_0(y) \rd y,
        & \quad \varphi_1(x)\harphi_1(x) = \nu(x) \label{eq:hpsi-pde}
    \end{empheq}
\end{subequations}
and $\pbase_{t|s}(y|x) := \pbase(X_t {=} y | X_s {=} x)$ is the transition kernel of the base process for observing $y$ at time $t$ given $x$ at time $s$.
The \emph{SB potentials} $\varphi_t(x), \harphi_t(x)$ $\in C^{1,2}([0,1], \R^d)$ are then defined (up to some multiplicative constant) as solutions to forward and backward time integrations w.r.t. $\pbase_{t|s}$.

\Cref{eq:u*} are computationally challenging to solve---even when $\pbase_{t|s}$ has an analytical solution---due to the intractable integration and coupled boundaries at $t=0$ and $1$.
Our key observation is that the first equation \eqref{eq:psi-pde} resembles the optimality condition of the SOC problem \eqref{eq:soc} (see \Cref{appendix:soc}).
This implies that the optimality conditions of SB hints an SOC reinterpretation, which, as we will demonstrate, is more tractable than solving \eqref{eq:u*} directly.
We formalize our finding below.

{\setlength{\fboxsep}{0pt}\colorbox{mygray}{\begin{minipage}{0.999\textwidth}\vspace{4pt}   
\begin{theorem}[SOC characteristics of SB] \label{thm:sb-soc}
    The kinetic-optimal drift $u^\star_t$ in \eqref{eq:u*} solves an SOC problem%
    \begin{equation}\label{eq:sb-soc}
        \min_{u} \E_{X \sim p^u} \br{
            \int_0^1 \tfrac{1}{2}\norm{u_t (X_t)}^2 \dt + {\log \frac{\harphi_1(X_1)}{\nu(X_1)}}
        }
        \quad \text{s.t. \eqref{eq:diffusion_sampler}}.
    \end{equation}
\end{theorem}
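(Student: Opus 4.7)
My plan is to reduce Theorem~\ref{thm:sb-soc} to the closed-form SOC optimum already recorded in \eqref{eq:soc-p*}, by choosing the terminal cost so that the associated value function coincides with $-\log\varphi_t$. The first step is to record (or recall from \Cref{appendix:soc}) the time-$t$ version of \eqref{eq:soc-p*}: under standard regularity, the SOC problem \eqref{eq:soc} admits the Fleming--Mitter / Hopf--Cole representation with value function $V_t(x) = -\log \psi_t(x)$, where $\psi_t(x) := \int \pbase_{1|t}(y|x) e^{-g(y)}\,\rd y$, and the optimal feedback control is $u^\star_t(x) = \sigma_t \nabla \log \psi_t(x)$. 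The displayed formula for $V_0$ in \eqref{eq:soc-p*} is merely the $t=0$ slice, and all that is needed is that $\pbase_{t|s}$ is the transition kernel of the uncontrolled SDE $\rd X_t = f_t(X_t)\,\dt + \sigma_t\,\rd W_t$ with the given $(f_t, \sigma_t, \mu)$; no memoryless assumption enters.

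The second step is to substitute $g(x) := \log(\harphi_1(x)/\nu(x))$. The terminal coupling $\varphi_1 \harphi_1 = \nu$ from \eqref{eq:hpsi-pde} gives $e^{-g(y)} = \nu(y)/\harphi_1(y) = \varphi_1(y)$, so $\psi_t(x) = \int \pbase_{1|t}(y|x) \varphi_1(y)\,\rd y = \varphi_t(x)$ by the forward relation in \eqref{eq:psi-pde}. Hence the SOC-optimal feedback is $\sigma_t \nabla \log \varphi_t$, which is exactly the SB-optimal drift $u^\star_t$ from \eqref{eq:u*}. In particular the SOC and SB problems yield the same drift, provided the candidate is admissible for \eqref{eq:sb}.

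It then remains to verify that the induced law satisfies the terminal marginal constraint $X_1 \sim \nu$. Using $V_0 = -\log \varphi_0$ inside \eqref{eq:soc-p*} and $\pbase(X_0, X_1) = \mu(X_0)\pbase_{1|0}(X_1|X_0)$, one gets $p^\star(X_0, X_1) = \mu(X_0)\pbase_{1|0}(X_1|X_0)\varphi_1(X_1)/\varphi_0(X_0)$. Integrating out $X_0$ and using $\mu/\varphi_0 = \harphi_0$ from \eqref{eq:psi-pde}, then the backward equation $\harphi_1(x) = \int \pbase_{1|0}(x|y)\harphi_0(y)\,\rd y$ from \eqref{eq:hpsi-pde}, gives $p^\star(X_1) = \varphi_1(X_1)\harphi_1(X_1) = \nu(X_1)$, so the SOC optimum is SB-feasible. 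Finally, on the SB-feasible set the added term $\E[g(X_1)] = \E_\nu[\log(\harphi_1/\nu)]$ is a $u$-independent constant, so it shifts the objective without moving the argmin; hence the argmin of \eqref{eq:sb} coincides with the argmin of \eqref{eq:sb-soc}.

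The main obstacle is justifying the Hopf--Cole representation in the non-memoryless regime with an arbitrary base drift $f_t$ and an arbitrary prior $\mu$, rather than in the trivial-drift / Dirac-prior case of \eqref{eq:as-soc}. This is handled by the standard verification argument for SOC problems with soft terminal costs, applied to the backward Kolmogorov semigroup of the base SDE, so that the identity $u^\star_t = \sigma_t \nabla \log \psi_t$ holds pathwise without requiring the independence \eqref{eq:ml} between $X_0$ and $X_1$ under $\pbase$; modulo this technical lemma (which I would defer to \Cref{appendix:soc}), the three steps above close the proof.
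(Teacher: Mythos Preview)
Your proposal is correct and follows essentially the same route as the paper: both identify the SOC value function with the SB potential via the Hopf--Cole substitution $V_t = -\log\varphi_t$, then use the boundary relation $\varphi_1\harphi_1 = \nu$ to read off the terminal cost $g = \log(\harphi_1/\nu)$ and match the optimal feedback $\sigma_t\nabla\log\varphi_t$ to the SB drift. Your write-up is in fact more complete than the paper's three-line proof, since you also verify the terminal marginal $p^\star_1 = \nu$ (which the paper defers to the main-text display~\eqref{eq:p*-sb}) and note that the added terminal term is constant on the SB-feasible set; these are welcome clarifications but not a different argument.
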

\vspace{1pt}\end{minipage}}}\hfill%

\Cref{thm:sb-soc} suggests 
that \emph{every} SB problem \eqref{eq:sb} can be solved like an SOC problem \eqref{eq:soc} with the terminal cost $g(x) :=\log${\scriptsize$\frac{\harphi_1(x)}{\nu(x)}$}.
Comparing to the formulation in Adjoint Sampling \citep{AS}, 
the two SOC problems, namely \eqref{eq:as-soc} and \eqref{eq:sb-soc}, differ in their terminal costs---where $\pbase_1$ is replaced by $\harphi_1$---and the relaxation of the source distribution from Dirac delta $X_0 = 0$ to general source $\mu(X_0)$. 

\vspace{2pt}
\textbf{How $\harphi_1(\cdot)$ debiases non-memoryless SOC problems}$\quad$
Taking a closer look at the effect of $\harphi_1$, notice that the optimal distribution of the SB problem---according to \Cref{thm:sb-soc} and \eqref{eq:soc-p*}---follows%
\begin{equation}
    p^\star(X_0, X_1) =
    \pbase(X_0, X_1) \exp\pr{-\log\tfrac{\harphi_1(X_1)}{\nu(X_1)} - \log \varphi_0(X_0)}, \label{eq:sb-p*}
\end{equation}
where ``$-\log \varphi_0$'' is the equivalent initial value function.
One can verify that the marginal at the terminal time $t=1$
indeed satisfies the target distribution,
\begin{equation}\begin{split}\label{eq:p*-sb}%
p^\star(X_1) =
    \inttt p^\star(X_0, X_1) \rd X_0
    \stackrel{\eqref{eq:sb-p*}}{=}&
    \tfrac{\nu(X_1)}{\harphi_1(X_1)} \inttt \pbase(X_0, X_1) \tfrac{1}{\varphi_0(X_0)}  \rd X_0 \\
    \stackrel{\eqref{eq:psi-pde}}{=}&
    \tfrac{\nu(X_1)}{\harphi_1(X_1)} \inttt \pbase(X_1 | X_0) \harphi_0(X_0)  \rd X_0
    \stackrel{\eqref{eq:hpsi-pde}}{=}  
    \nu(X_1).
\end{split}\end{equation}
That is, the optimality equations in \eqref{eq:u*}, in their essence, construct a specific function $\harphi_1(\cdot)$ that eliminates the initial value function bias associated with any non-memoryless processes, thereby ensuring that the optimal distribution satisfies the target $\nu$ at $t=1$.

\subsection{Adjoint Sampling with General Source Distribution} \label{sec:generalize-as}

We now specialize \Cref{thm:sb-soc} to sampling Boltzmann distributions \eqref{eq:nu}, where $\nu(x) \propto e^{-E(x)}$, and hence the terminal cost of the new SOC problem in \eqref{eq:sb-soc} becomes $\log ${\footnotesize$\frac{\harphi_1(x)}{\nu(x)}$}$= E(x) + \log \harphi_1(x)$.
To encourage minimal transportation cost \citep{chen2015stochastic,peyre2019computational}, we consider 
the Brownian-motion base process with a degenerate base drift $f_t := 0$.
Applying Adjoint Matching \citep[AM;][]{domingo2024adjoint} to the resulting SOC problem leads to 
\begin{equation}\label{eq:am}
    u^\star
    = \argmin_u \E_{\pbase_{t|0,1} p_{0,1}^{\bar u}} \br{\norm{
        u_t(X_t) + \sigma_t \pr{\nabla E + \nabla \log \harphi_1}(X_1)
    }^2},
    \quad \bar u = {\normalfont \texttt{stopgrad}}(u).
\end{equation}
Note that the AM objective in \eqref{eq:am} functions as a self-consistency loss---in that both the regression and its expectation depend on the optimization variable $u$.
This makes \eqref{eq:am} particularly suitable for learning SB-based diffusion samplers, unlike previous matching-based SB methods \citep{shi2023diffusion,liu2023generalized},
which all require ground-truth target samples from $X_1 \sim \nu$.

Computing the AM objective in \eqref{eq:am} requires knowing $\nabla \log \harphi_1(x)$, which, as we discussed in \eqref{eq:p*-sb}, serves as a \emph{corrector} that debiases the optimization toward the desired target.
Notably, this corrector function $\nabla \log \harphi_1(x)$ also admits a variational form \citep{peluchetti2022nondenoising,peluchetti2023diffusion,shi2023diffusion}:\footnote{
    Formally, $\nabla \log \harphi_t(x)$ is the kinetic-optimal drift along the reversed time coordinate $s:= 1-t$, and \eqref{eq:bm} is its variational formulation, \ie the Markovian projection at $s=0$;
    see \Cref{sec:sb} for details.
}%
\begin{equation}\label{eq:bm}
    \nabla \log \harphi_1
    = \argmin_h \E_{p_{0,1}^{u^\star}} \br{ \norm{
        h(X_1) - \nabla_{x_1} \log \pbase(X_1 | X_0)
    }^2}.
\end{equation}
To summarize, \Cref{eq:am,eq:bm}
characterize two distinct matching objectives that any kinetic-optimal drift $u_t^\star$ of SBs must satisfy.
When the source distribution degenerates to Dirac delta $X_0 := 0$, \eqref{eq:bm} is minimized at $\nabla \log \pbase_1$, and \eqref{eq:am} simply recovers the objective used in Adjoint Sampling \citep{AS}.
In other words, \eqref{eq:am} and \eqref{eq:bm} should be understood as a generalization of Adjoint Sampling to handle {arbitrary}---including \emph{non-memoryless}---source distributions.

\subsection{Alternating Optimization with Adjoint and Corrector Matching} 

Building upon the theoretical characterization in \Cref{sec:generalize-as}, we aim to design a learning algorithm that finds a diffusion sampler satisfying \eqref{eq:am} and \eqref{eq:bm}, which correspond to two simple matching-based objectives. 
However, these matching objectives cannot be naively implemented due to their interdependency: Solving \eqref{eq:am}  for the kinetic-optimal drift $u^\star$ requires knowing $\nabla\log\harphi_1$. Likewise, solving \eqref{eq:bm} for the corrector function $\nabla \log \harphi_1$ requires samples from $u^\star$.
We relax the interdependency %
with an alternating optimization scheme.
Specifically, given an approximation of $\nabla\log\harphi_1 \approx h^{(k-1)}$ from the previous stage $k-1$, 
we first update the drift $u^{(k)}$ with the \textit{AM} objective:%

\graybox{\vspace{6pt}
\begin{equation}\label{eq:am-obj}
    u^{(k)}
    := \argmin_u \E_{\pbase_{t|0,1} p_{0,1}^{\bar u}} \br{\norm{
        u_t (X_t) + \sigma_t ({\nabla E + h^{(k-1)}})(X_1)
    }^2},
    \quad \bar u = \texttt{stopgrad}(u).
\end{equation}}

Then, we use the resulting drift $u^{(k)}$ to update $h^{(k)}$ by minimizing the following matching objective, which---in light of the corrector role of $\nabla \log \harphi_1$---we refer to as the \emph{Corrector Matching} objective:%

\graybox{\vspace{5pt}
\begin{equation}\label{eq:cm-obj}
    h^{(k)}
    := \argmin_h \E_{p_{0,1}^{u^{(k)}}} \br{ \norm{
        h(X_1) - \nabla_{x_1} \log \pbase(X_1 | X_0)
    }^2}.
\end{equation}}

\Cref{eq:cm-obj} should be distinguish from the bridge-matching objectives in data-driven SB methods \citep{shi2023diffusion,somnath2023aligned}, where $X_1$ must be drawn from the target distribution $\nu$.
In contrast, the matching objectives in \eqref{eq:am-obj} and \eqref{eq:cm-obj} depend only on model samples at the current stage $X_1 \sim p^{u_\theta^{(k)}}(X_1|X_0)$, hence can be used to learn SB-based diffusion samplers at scale.

        \begin{algorithm}[t]
        \caption{Adjoint Schrödinger Bridge Sampler (ASBS)}
        \label{alg:asbs}
            \begin{algorithmic}[1]
                \REQUIRE Sample-able source $X_0 \sim \mu$, differentiable energy $E(x)$, parametrized $u_\theta(t,x)$ and $h_\phi(x)$%
                \STATE 
                Initialize $h_\phi^{(0)} := 0$
                \FOR{stage $k$ \textbf{in} $1, 2 , \dots$}
                    \STATE Update drift $u_\theta^{(k)}$ by solving \eqref{eq:am-obj} \COMMENT{adjoint matching}
                    \STATE Update corrector $h_\phi^{(k)}$ by solving \eqref{eq:cm-obj} \COMMENT{corrector matching$~$}
                \ENDFOR
            \end{algorithmic}
        \end{algorithm}

\begin{figure}[t]
  \vspace{-5pt}
  \centering
    \includegraphics[width=0.9\textwidth]{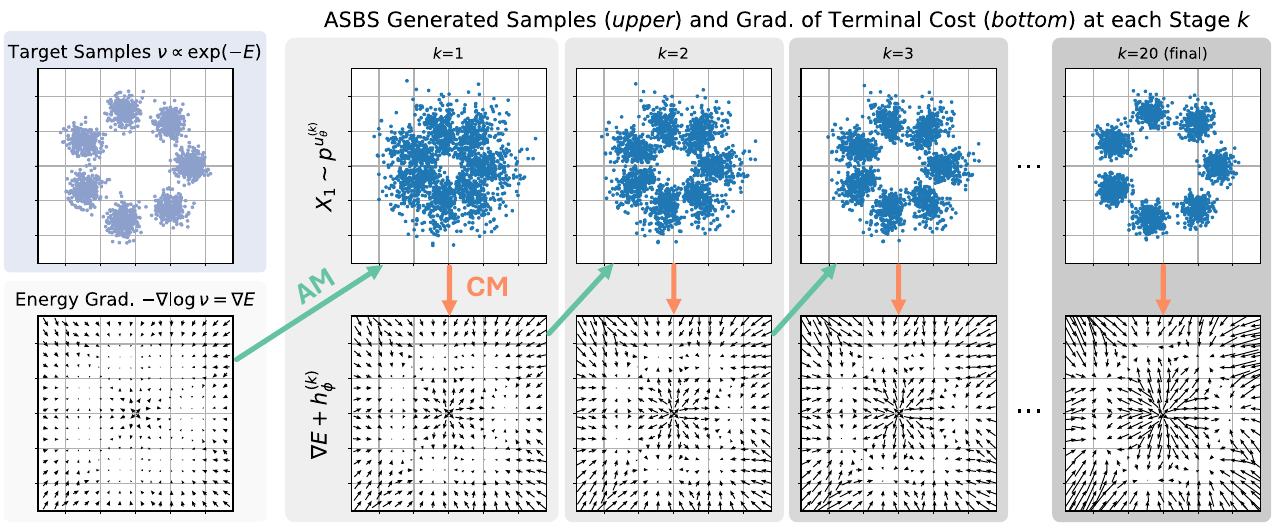}
  \caption{
    Illustration of ASBS on a 2D example. 
    By alternatively minimizing the Adjoint Matching (AM) objective \eqref{eq:am-obj} and the Corrector Matching (CM) objective \eqref{eq:cm-obj}, ASBS progressively learns a better corrector {\scriptsize$h_\phi^{(k)}$} that debiases the SOC problem for the control {\scriptsize$u_\theta^{(k)}$}.
    Note that since the corrector is initialized with {\scriptsize$h_\phi^{(0)}$}$:= 0$, the first AM stage simply regresses {\scriptsize$u_\theta^{(1)}$} to the energy gradient $\nabla E$.
  }
  \label{fig:demo}
\end{figure}

The alternating optimization between \eqref{eq:am-obj} and \eqref{eq:cm-obj} creates a sequence of updates, $(u^{(0)}, h^{(0)}) \to \cdots (u^{(k)}, h^{(k)}) \to \cdots$, that may be thought of as running coordinate descent between the control $u$ and the corrector $h$. Intuitively, at each stage  $k$, we first find the control $u^{(k)}$ that best aligns with the corrector from previous stage, $h^{(k-1)}$, then update the corrector $h^{(k)}$ accordingly to reflect the ``memorylessness'' of the current control $u^{(k)}$. We summarize our method, \textbf{Adjoint Schrödinger Bridge Sampler (ASBS)}, in \Cref{alg:asbs}, while leaving the full details with additional components, such as replay buffers, in \Cref{sec:impl-detail}. Finally, we prove that this alternating optimization indeed converges to the kinetic-optimal drift $u^\star$ in \eqref{eq:u*}.%
\begin{theorem}[Global convergence of ASBS]\label{thm:global-convergence}
    \Cref{alg:asbs} converges to the Schrödinger bridge solution of \eqref{eq:sb}, provided all matching stages achieve their critical points, \ie
    \begin{equation*}
        \lim_{k \rightarrow \infty} u^{(k)} = u^\star.
    \end{equation*}
\end{theorem}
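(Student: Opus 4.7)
The plan is to identify the ASBS iteration with the classical Iterative Proportional Fitting (IPF) recursion on the Schrödinger potentials $(\varphi_t,\harphi_t)$ and then invoke the well-established global convergence of IPF for the Schrödinger Bridge \citep{kullback1968probability,leonard2013survey}. Under the hypothesis that every matching sub-problem is driven to its critical point, it suffices to show that the exact minimizers of \eqref{eq:am-obj} and \eqref{eq:cm-obj} are precisely one forward and one backward IPF half-bridge on the potentials.

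First I would analyse the AM stage. By \Cref{thm:sb-soc} applied with the terminal cost $g(x)=\log\bigl(\harphi^{(k-1)}_1(x)/\nu(x)\bigr)$, the minimizer $u^{(k)}$ is the SOC-optimal drift, so \eqref{eq:soc-p*} gives the induced joint law
\begin{equation*}
    p^{u^{(k)}}(X_0,X_1)=\pbase(X_0,X_1)\,\frac{\varphi^{(k)}_1(X_1)}{\varphi^{(k)}_0(X_0)},\qquad \varphi^{(k)}_1:=\frac{\nu}{\harphi^{(k-1)}_1},\qquad \varphi^{(k)}_0(x):=\inttt\pbase_{1|0}(y|x)\,\varphi^{(k)}_1(y)\,\rd y.
\end{equation*}
Since $\pbase(X_0)=\mu(X_0)$, this is exactly the KL-projection of the tilted reference $\pbase\cdot(\nu/\harphi^{(k-1)}_1)$ onto $\{p:p_0=\mu\}$, i.e.\ the forward half-bridge update of IPF on the forward potential $\varphi$.

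Next I would analyse the CM stage. The unique critical point of \eqref{eq:cm-obj} is the conditional expectation $h^{(k)}(X_1)=\E_{p^{u^{(k)}}(X_0|X_1)}[\nabla_{X_1}\log\pbase(X_1|X_0)]$. Substituting the AM-optimal conditional $p^{u^{(k)}}(X_1|X_0)=\pbase(X_1|X_0)\varphi^{(k)}_1(X_1)/\varphi^{(k)}_0(X_0)$ and applying the tower identity $\nabla_{X_1}\log p^{u^{(k)}}(X_1)=\E_{p^{u^{(k)}}(X_0|X_1)}[\nabla_{X_1}\log p^{u^{(k)}}(X_1|X_0)]$ yields $h^{(k)}=\nabla\log p^{u^{(k)}}(X_1)-\nabla\log\varphi^{(k)}_1$. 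A direct computation of the marginal then gives $p^{u^{(k)}}(X_1)=\varphi^{(k)}_1(X_1)\,\harphi^{(k)}_1(X_1)$ with
\begin{equation*}
    \harphi^{(k)}_0:=\frac{\mu}{\varphi^{(k)}_0},\qquad \harphi^{(k)}_1(x):=\inttt\pbase_{1|0}(x|y)\,\harphi^{(k)}_0(y)\,\rd y,
\end{equation*}
so that $h^{(k)}=\nabla\log\harphi^{(k)}_1$. This is exactly the complementary backward half-bridge update of IPF; hence the pair $(\varphi^{(k)},\harphi^{(k)})$ produced by alternating AM and CM obeys the standard IPF recursion for the Schrödinger system \eqref{eq:u*}.

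To conclude, I would invoke the classical IPF convergence theorem: under positivity of $\mu$, $\nu$, and $\pbase_{1|0}$ and finiteness of the SB cost \eqref{eq:sb-obj}, the iterates $(\varphi^{(k)},\harphi^{(k)})$ converge pointwise to the unique SB potentials $(\varphi^\star,\harphi^\star)$ \citep{leonard2013survey}, so that $u^{(k)}_t=\sigma_t\nabla\log\varphi^{(k)}_t\to\sigma_t\nabla\log\varphi^\star_t=u^\star_t$. The main obstacle will be the CM analysis: one must argue that the regression minimizer is literally $\nabla\log\harphi^{(k)}_1$ with $\harphi^{(k)}_0=\mu/\varphi^{(k)}_0$, and this hinges on combining the precise form of the AM-optimal conditional from the previous step with the tower identity. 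The remaining work is largely bookkeeping—matching the AM/CM updates to the classical IPF half-bridges—together with checking the integrability and positivity hypotheses required by the cited convergence result.
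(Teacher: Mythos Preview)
Your proposal is correct and follows the same high-level strategy as the paper—identify the ASBS iteration with IPF and then invoke IPF convergence—but the technical route you take for the identification is more direct than the paper's. The paper proves two intermediate results (\Cref{thm:am-ipf} and \Cref{thm:cm-ipf}) that characterize the AM and CM minimizers as \emph{path-measure} half-bridges, namely $p^{u^{(k)}}=\argmin_p\{\KL(p\|q^{\bar h^{(k-1)}}):p_0=\mu\}$ and $q^{\bar h^{(k)}}=\argmin_q\{\KL(p^{u^{(k)}}\|q):q_1=\nu\}$; establishing these requires a fair amount of It\^o calculus and Girsanov-type manipulations on the backward SDE $q^{\bar h}$, after which the IPF identification follows and the paper invokes \citep{de2021diffusion}. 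You instead work directly at the level of the Schr\"odinger \emph{potentials}: the AM step is read off from the SOC optimal-distribution formula \eqref{eq:soc-p*}, and the CM step is obtained from the conditional-expectation minimizer plus the score identity, which is essentially the computation the paper already records in \eqref{eq:bm-appendix}. Your argument is more elementary and avoids the stochastic-calculus machinery, at the cost of not producing the variational characterizations of \Cref{thm:am-ipf,thm:cm-ipf} as standalone results. One minor point: the references you cite for IPF convergence (\citep{kullback1968probability,leonard2013survey}) are primarily for the static or discrete setting; you may want to point instead to a continuous-time or Schr\"odinger-system result such as \citep{de2021diffusion} or \citep{fortet1940resolution}, which is what the paper does.
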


\section{Theoretical Analysis} \label{sec:theory}

We provide the proof of \Cref{thm:global-convergence} 
and highlight theoretical insights throughout.
While ASBS is specialized to a degenerate base drift $f_t := 0$, all theoretical results here apply to general $f_t$.
To simplify notation, we omit the parameters $\theta$, $\phi$ and 
reparametrize the corrector by $h^{(k)} =\nabla\log\bar h^{(k)}$.
All proofs are left in \Cref{sec:proof}.

Our first result presents a variational characteristic to the solution of the AM objective in \eqref{eq:am-obj}.
\begin{theorem}[Adjoint Matching solves a forward half bridge] \label{thm:am-ipf}
    Let $p^{u^{(k)}}$ be the path distribution induced by the drift $u^{(k)}$ in \eqref{eq:am-obj} at stage $k$. Then, $p^{u^{(k)}}$ solves the following variational problem:
    \begin{equation}\label{eq:fwd-half-brdige}
        p^{u^{(k)}} = \argmin_{ p} \big\{ \KL(p || q^{\bar h^{(k-1)}}): p_0 = \mu \big\},
    \end{equation}
    where $q^{\bar h^{(k-1)}}$ is the path distribution induced by a ``backward'' SDE on the reversed time coordinate $s := 1-t$, defined by the corrector from the previous stage $\bar h^{(k-1)}$:
    \begin{equation}\begin{split}\label{eq:bwd-sde}%
        \rd Y_s = \br{-f_s(Y_s) + \sigma_s^2 \nabla \log \phi_s(Y_s)} \rd s + \sigma_s \rd W_s, 
        \quad \phi_s(y) = \inttt \pbase_{1-s|0}(y|z) \phi_1(z) \rd z,
    \end{split}\end{equation}
    with the boundary conditions $Y_0 \sim \nu$ and 
    $\phi_0(y) = \bar h^{(k-1)}(y)$.
\end{theorem}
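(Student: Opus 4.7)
The plan is to unfold the Adjoint Matching objective into its SOC interpretation, use Girsanov's theorem to recognize a forward half-bridge structure, and identify the resulting reference with $q^{\bar h^{(k-1)}}$ via time-reversal of \eqref{eq:bwd-sde}. First, by the Adjoint Matching identity of \citet{domingo2024adjoint}, the global minimizer $u^{(k)}$ of \eqref{eq:am-obj} is the optimal control of the SOC problem
\[
\min_u \E_{p^u}\br{\intt{\tfrac{1}{2}\norm{u_t(X_t)}^2} + g(X_1)}, \quad X_0 \sim \mu,
\]
with $g(x) := \log(\bar h^{(k-1)}(x)/\nu(x))$; this uses $\nu \propto e^{-E}$ and the reparametrization $h^{(k-1)} = \nabla \log \bar h^{(k-1)}$, so $\nabla E + h^{(k-1)} = \nabla g$, and is exactly \Cref{thm:sb-soc} with $\harphi_1 \leftarrow \bar h^{(k-1)}$.

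Second, by Girsanov's theorem, $\E_{p^u}[\intt{\tfrac{1}{2}\norm{u_t}^2}] = \KL(p^u \| \pbase)$ whenever $p^u$ and $\pbase$ share the initial marginal $\mu$. Defining the tilted reference $\tilde q$ via $d\tilde q/d\pbase := e^{-g(X_1)}/Z$ with $Z := \E_{\pbase}[e^{-g(X_1)}]$, the SOC objective becomes $\KL(p^u \| \tilde q) - \log Z$, with $\log Z$ independent of $u$. Hence $p^{u^{(k)}}$ is the forward half-bridge projection of $\tilde q$ onto $\{p : p_0 = \mu\}$, i.e., the path measure obtained by fixing the initial marginal to $\mu$ while keeping the conditional dynamics $\tilde q_{[0,1]|0}$ unchanged.

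Third and most technical, I identify the conditionals $\tilde q_{[0,1]|0} = q^{\bar h^{(k-1)}}_{[0,1]|0}$, which suffices since the half-bridge minimizer depends only on them. By Doob's $h$-transform, $\tilde q$ is Markov in forward time with drift $f_t + \sigma_t^2 \nabla \log \varphi^{(k)}_t$ where $\varphi^{(k)}_t(x) := \int \pbase_{1|t}(y|x)\, \nu(y)/\bar h^{(k-1)}(y)\, dy$. For $q^{\bar h^{(k-1)}}$, note that \eqref{eq:bwd-sde} forces $\phi_s = \hat\varphi_{1-s}$, where $\hat\varphi_t$ is the forward propagation of $\phi_1$ under the base process, with $\hat\varphi_1 = \phi_0 = \bar h^{(k-1)}$. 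Applying Anderson's time-reversal to \eqref{eq:bwd-sde} with $X_t := Y_{1-t}$, and combining with the marginal factorization $q^{\bar h^{(k-1)}}_t = \varphi^{(k)}_t \hat\varphi_t$---a consequence of the SB potential identities \eqref{eq:psi-pde}-\eqref{eq:hpsi-pde} specialized to $\varphi_1 = \nu/\bar h^{(k-1)}$ and $\hat\varphi_1 = \bar h^{(k-1)}$---the forward-time drift collapses to $f_t + \sigma_t^2 \nabla \log \varphi^{(k)}_t$, matching that of $\tilde q$.

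The main obstacle is Step 3: verifying the marginal factorization $q^{\bar h^{(k-1)}}_t = \varphi^{(k)}_t \hat\varphi_t$ via the FPK of \eqref{eq:bwd-sde}, and checking that Anderson's reversal of the $\phi_s$-driven backward dynamics indeed produces the $h$-transformed forward drift after cancellation of $\nabla\log\hat\varphi_t$. The computation is essentially SB-potential calculus, but careful bookkeeping of the time reversal of drift and diffusion is where the technical subtlety lies. Once this identification is in place, combining the three steps immediately yields $p^{u^{(k)}} = \argmin_p\{\KL(p \| q^{\bar h^{(k-1)}}) : p_0 = \mu\}$, proving the theorem.
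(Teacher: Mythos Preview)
Your proposal is correct and takes a genuinely different route from the paper.

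The paper works in the opposite direction: it starts from $\KL(p^u \| q^{\bar h^{(k-1)}})$ and reduces it to the SOC objective by direct It\^o calculus. After writing \eqref{eq:bwd-sde} in forward time, it applies It\^o's lemma to both $\log q_t(X_t)$ and $\log \phi_t(X_t)$ along the controlled SDE \eqref{eq:diffusion_sampler}, subtracts the two to obtain the pathwise identity
\[
\rd\log\phi_t - \rd\log q_t = \Bigl[\tfrac12\norm{u_t + \sigma_t\nabla\log(\phi_t/q_t)}^2 - \tfrac12\norm{u_t}^2\Bigr]\dt + (\text{martingale}),
\]
and integrates to get $\KL(p^u \| q^{\bar h^{(k-1)}}) = \E_{p^u}\bigl[\intt{\tfrac12\norm{u_t}^2} + \log(\bar h^{(k-1)}(X_1)/\nu(X_1))\bigr] + \text{const}$. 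Reciprocal Adjoint Matching then identifies the minimizer with $p^{u^{(k)}}$.

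Your argument instead starts from the SOC side (via Adjoint Matching and the terminal tilt $d\tilde q/d\pbase \propto e^{-g(X_1)}$) and pushes the technical work into Step~3: matching the forward conditionals of $\tilde q$ and $q^{\bar h^{(k-1)}}$ through Doob's $h$-transform and Anderson reversal. This is more structural---it makes the half-bridge geometry and the role of the SB potentials explicit from the outset---but the crux you flag, the marginal factorization $q^{\bar h^{(k-1)}}_t = \varphi^{(k)}_t\hat\varphi_t$, is not quite a direct citation of \eqref{eq:psi-pde}--\eqref{eq:hpsi-pde}: those carry both boundary constraints, whereas here only $\varphi^{(k)}_1\hat\varphi_1 = \nu$ is imposed. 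The factorization nonetheless follows from the forward and backward Kolmogorov equations for $\hat\varphi_t$ and $\varphi^{(k)}_t$ together with that terminal match, so your route closes. In effect the paper's approach is more self-contained (it never needs the marginal of $q^{\bar h^{(k-1)}}$), while yours trades the It\^o computation for SB-duality bookkeeping.
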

\Cref{thm:am-ipf} suggests that any SOC problems with the terminal cost $g(x) := \log\frac{\bar h^{(k)}(x)}{\nu(x)}$ can be reinterpreted as KL minimization w.r.t. a specific \emph{backward} SDE \eqref{eq:bwd-sde} that is fully characterized by $\nu$---which serves as its source distribution---and $\bar h^{(k)}$---which defines its drift through the function $\phi_s(y)$.
The objective in \eqref{eq:fwd-half-brdige} differs from the one in the original SB problem \eqref{eq:sb} by disregarding the target boundary constraint, $X_1 \sim \nu$. Consequently, \eqref{eq:fwd-half-brdige} only solves a forward half bridge.

Next, we show that the CM objective \eqref{eq:cm-obj} admits a similar variational form, except backward in time.%
\begin{theorem}[Corrector Matching solves a backward half bridge]\label{thm:cm-ipf}
    Let $\bar h^{(k)}$ be the corrector in \eqref{eq:cm-obj} at stage $k$. Then, the path distribution $q^{\bar h^{(k)}}$ solves the following variational problem:
    \begin{equation}\label{eq:bwd-half-brdige}
        q^{\bar h^{(k)}} = \argmin_{q} \big\{ \KL(p^{u^{(k)}} || q): q_1 = \nu \big\}
    \end{equation}
\end{theorem}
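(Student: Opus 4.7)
The plan is to prove \Cref{thm:cm-ipf} in three steps: disintegrate $\KL(p^{u^{(k)}} \,\|\, q)$ along the terminal marginal to pin down the minimizer $q^\star$; identify $q^\star$ as a member of the $\bar h$-parametrized family \eqref{eq:bwd-sde} via Schrödinger potentials; and match the corresponding $\bar h^\star$ to the unique $L^2$-minimizer of the CM regression \eqref{eq:cm-obj}. Structurally, this mirrors the proof of \Cref{thm:am-ipf} along the reversed time coordinate.

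For the first step, I would write
\[
\KL(p^{u^{(k)}} \,\|\, q) = \KL\!\pr{p^{u^{(k)}}_1 \,\|\, q_1} + \E_{X_1 \sim p^{u^{(k)}}_1}\!\br{\KL\!\pr{p^{u^{(k)}}(\cdot \mid X_1) \,\|\, q(\cdot \mid X_1)}}.
\]
Under the constraint $q_1 = \nu$, the minimizer $q^\star$ is characterized by $q^\star(\cdot \mid X_1) = p^{u^{(k)}}(\cdot \mid X_1)$ and $q^\star_1 = \nu$; at the two-endpoint level this reads $q^\star(X_0, X_1) = p^{u^{(k)}}(X_0 \mid X_1)\,\nu(X_1)$, and the full-path extension is the $\pbase$-bridge between these endpoints, since $p^{u^{(k)}}$ and $\pbase$ share the same diffusion coefficient (Girsanov).

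For the second step, I would show $q^\star = q^{\bar h^\star}$ for a specific $\bar h^\star$. The family \eqref{eq:bwd-sde} is precisely the family of SB-type processes with terminal marginal $\nu$ parametrized by $\phi_0 = \bar h$: in the notation of \eqref{eq:u*}, $\phi_s = \harphi_{1-s}$ with $\bar h = \harphi_1$ and $\varphi_1 = \nu/\bar h$ forced by $q_1 = \nu$. By the SB decomposition, every $q^{\bar h}$ factorizes at endpoints as $q^{\bar h}(X_0, X_1) \propto \pbase(X_0, X_1)\,\harphi_0(X_0)\,\varphi_1(X_1)$; moreover, since \Cref{thm:am-ipf} produces $p^{u^{(k)}}$ as a forward-half-bridge solution (hence of SB form), the optimal $q^\star$ inherits the same endpoint factorization with potentials $\bar h^\star, \harphi_0^\star$. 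Equating $q^\star(X_0, X_1)$ with this factorization yields
\[
\bar h^\star(X_1)\,p^{u^{(k)}}(X_0 \mid X_1) \propto \pbase(X_0, X_1)\,\harphi_0^\star(X_0).
\]
Taking $\nabla_{x_1}\log$ of both sides and averaging over $X_0 \sim p^{u^{(k)}}(X_0 \mid x_1)$ kills the conditional-density gradient on the left via the score identity $\E_{p(X_0 \mid x_1)}[\nabla_{x_1}\log p(X_0 \mid x_1)] = 0$, leaving
\[
\nabla \log \bar h^\star(x_1) = \E_{p^{u^{(k)}}(X_0 \mid x_1)}\!\br{\nabla_{x_1} \log \pbase(x_1 \mid X_0)}.
\]

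For the third step, I observe that the right-hand side above is exactly the $L^2(p^{u^{(k)}}_1)$ conditional expectation that uniquely solves \eqref{eq:cm-obj}, so $h^{(k)} \equiv \nabla\log \bar h^{(k)} = \nabla \log \bar h^\star$. Since $\bar h$ enters \eqref{eq:bwd-sde} only through $\nabla\log\phi_s$, the two potentials differ by a multiplicative constant that does not affect the SDE drift; therefore $q^{\bar h^{(k)}} = q^\star$, completing the argument. The main obstacle I expect is the identification in the second step: cleanly verifying that the $\bar h$-family \eqref{eq:bwd-sde} coincides with the family of SB backward processes having terminal marginal $\nu$. This reduces to a Nelson--Föllmer time-reversal check that the reverse-time drift of an SB process with $\harphi_1 = \bar h$ is $-f_s + \sigma_s^2 \nabla \log \harphi_{1-s}$, which parallels the forward-direction structural check that underlies \Cref{thm:am-ipf}; once in place, the remaining bookkeeping with normalization constants follows from the calculation above.
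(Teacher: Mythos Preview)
Your proposal is correct and reaches the same conclusion as the paper by a genuinely different decomposition. The paper parametrizes $q$ as a backward SDE $\rd Y_s = [-f_s + \sigma_s v_s]\,\rd s + \sigma_s\,\rd W_s$ with $Y_0 \sim \nu$, writes $p^{u^{(k)}}$ in reverse time, and computes $\KL(p^{u^{(k)}}\|q)$ via Girsanov as an integrated squared-drift mismatch; pointwise minimization in $v_s$ then gives $v_s^\star = \sigma_s \nabla \log \harphi^{(k)}_{1-s}$, where $\harphi^{(k)}$ is the SB potential of $p^{u^{(k)}}$ supplied by \Cref{thm:soc-sb}, and the bridge-matching identity \eqref{eq:bm2} closes $\nabla\log\bar h^{(k)} = \nabla\log\harphi^{(k)}_1$. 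You instead disintegrate the KL along the terminal marginal to pin down $q^\star(\cdot\mid X_1) = p^{u^{(k)}}(\cdot\mid X_1)$ abstractly, then use the SB endpoint factorization of $p^{u^{(k)}}$ together with the conditional score identity to extract $\nabla\log\bar h^\star$ directly---effectively rederiving \eqref{eq:bm-appendix} in place rather than citing it. Your route avoids It\^o/Girsanov machinery in this step and makes transparent that the statement is really about preserving conditional path laws under re-initialization of the terminal marginal at $\nu$; the paper's route instead delivers the optimal backward drift explicitly at every intermediate time $s$, not just at the endpoint. The Nelson--F\"ollmer check you flag as the main obstacle is precisely the time-reversal the paper performs when rewriting $p^{u^{(k)}}$ backwards, so your anticipated verification is routine and aligns with what the paper does.
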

\vspace{-5pt}
Unlike \eqref{eq:fwd-half-brdige}, the objective in \eqref{eq:bwd-half-brdige} disregards the source boundary constraint $\mu$ instead, thereby solving a backward half bridge.
\Cref{thm:am-ipf,thm:cm-ipf} imply that our ASBS in \Cref{alg:asbs} \emph{implicitly} employs an optimization scheme that alternates between solving forward and backward half bridges, thereby instantiating the celebrated Iterative Proportional Fitting algorithm \citep[IPF;][]{fortet1940resolution,kullback1968probability}.
Combining with the analysis by \citep{de2021diffusion} leads to our final result in \Cref{thm:global-convergence}.

\section{Related Works} \label{sec:related_work}

\textbf{Data-driven Schrödinger Bridges}$\quad$
The SB problem 
has attracted notable interests 
in machine learning due to its connection to diffusion-based generative models \citep{wang2021deep}. 
Earlier methods implemented classical IPF algorithms \citep{de2021diffusion,vargas2021solving,chen2022likelihood}, with scalability later enhanced by bridge matching-based methods
\citep{shi2023diffusion,liu2023generalized}.
Unlike ASBS, all of them focus on generative modeling and assume access to extensive target samples during training, making them unsuitable for sampling from Boltzmann distributions.%

\textbf{SB-inspired Diffusion Samplers}$\quad$
Notably, in the context of diffusion samplers, the SB formulation has been constantly emphasized as a mathematically appealing framework for both theoretical analysis and method motivation
\citep{zhang2022path,vargas2023transport,richterimproved,AS}.
None of the prior methods, however, offers general solutions to learning SB-based diffusion samplers, instead specializing to either the memoryless condition or non-matching-based objectives, which largely complicate the learning process (see \Cref{tab:compare}).
Conceptually, our ASBS stands closest to SSB \citep{bernton2019schr} by learning general SB samplers.
However, the two methods differ fundamentally in scalability: 
SSB is a Sequential Monte Carlo-based method \citep{chopin2002sequential} augmented with learned transition kernels using Gaussian-approximated SB potentials.
As with many MCMC-augmented samplers \citep{gabrie2022adaptive,matthews2022continual}, 
SSB requires extensive evaluations on the energy $E(x)$, in contrast to ASBS, which is much more energy-efficient.

\textbf{Learning-augmented MCMC}$\quad$
This class of methods can be thought of as extension of classical sampling methods---such as MCMC \citep{metropolis1953equation,hastings1970monte}, Sequential Monte Carlo \citep[SMC;][]{del2006sequential} and Annealed Importance Sampling \citep[AIS;][]{neal2001annealed}---where traditional proposal distributions are replaced with modern machine learning models.
For instance, \citet{arbel2021annealed} and \citet{gabrie2022adaptive} use normalizing flows~\citep{chen2018neural} as learned proposal distributions, whereas \citet{matthews2022continual} employ stochastic normalizing flow~\citep{wu2020stochastic}. 
More recently, \citet{chen2025sequential} have explored the use of diffusion models \citep{song2021score,ho2020denoising}.
However, training these models typically requires computing importance weights, which necessitates a large number of energy evaluations.

\textbf{MCMC-augmented Diffusion Samplers}$\quad$
Alternatively, methods of this class adopt modern generative models to sampling Boltzmann distributions and incorporate MCMC techniques to mitigate the lack of explicit target samples.
For example, \citet{phillips2024particle}, \citep{de2024target} and \citep{akhound2024iterated} employ score matching objective from score-based diffusion models \citep{song2021score,ho2020denoising}.  In contrast,  
\citet{albergo2024nets} base their method on action matching objectives \citep{neklyudov2023action}.
However, estimating target samples requires computing importance weights, which makes these methods computationally expensive in terms of energy function evaluations.

\section{Experiments} \label{sec:exp}

\textbf{Benchmarks}$\quad$
We evaluate our ASBS on three classes of multi-particle energy functions $E(x)$.
\begin{itemize}[leftmargin=15pt]
    \item \textit{Synthetic energy functions}~~~    
    These are classical potentials based on pair-wise distances of an $n$-particle system, where $E(x)$ is known analytically.
    Following \citep{akhound2024iterated,chen2025sequential}, we consider
    a 2D 4-particle Double-Well potential (DW-4), a 1D 5-particle Many-Well potential (MW-5), a 3D 13-particle Lennard-Jones potential (LJ-13) and a 3D 55-particle Lennard-Jones potential (LJ-55).
    For the ground-truth samples, we sample analytically from MW-5 and use the MCMC samples from \citep{klein2023timewarp} for the rest of three potentials.

    \item \textit{Alanine dipeptide}~~~
    This is a molecule consisting of 22 atoms in 3D.
    Specifically, 
    we consider the alanine dipeptide in an implicit solvent and aim to sample from its Boltzmann distribution at a temperature $300K$. 
    Following prior methods \citep{zhang2022path,wu2020stochastic}, we use the energy function $E(x)$ from the OpenMM library \citep{eastman2017openmm} and consider a more structural internal coordinate with the dimension $d=60$.
    The ground-truth samples contain $10^7$ configurations, simulated from Molecular Dynamics \citep{midgley2023flow}.

    \item \textit{Amortized conformer generation}~~~
    Finally, we consider a new benchmark proposed in \citep{AS} for large-scale conformer generation. 
    Conformers are locally stable configurations located at the local minima of the molecule’s potential energy surface \citep{hawkins2017conformation}. 
    Sampling conformers is essentially a conditional generation task, targeting a Boltzmann distribution $\nu(x|g) \propto e^{-\frac{1}{\tau}E(x|g)}$ at a low temperature $\tau \ll 1$,
    conditioned on the molecular topology $g \in \calG$.
    The training set $\calG_\text{train}$ contains 24,477 molecular topologies from SPICE~\citep{eastman2023spice}, represented by the SMILES strings \citep{weininger1988smiles}, whereas the test set $\calG_\text{test}$ contains 80 topologies from SPICE and another 80 from GEOM-DRUGS~\citep{axelrod2022geom}.
    As with \citep{AS}, we consider $E(x|g)$ a foundation model \textit{eSEN} from \citep{fu2025learning}, which predicts energy with density-functional-theory accuracy at a much lower computational cost. We use
    CREST conformers \citep{pracht2024crest} as the ground-truth samples.%
\end{itemize}

\textbf{Baselines and evaluation}$\quad$
We compare ASBS with a wide range of diffusion samplers, including 
PIS~\citep{zhang2022path},
DDS~\citep{vargas2023denoising},
PDDS~\citep{phillips2024particle},
SCLD~\citep{chen2025sequential},
LV~\citep{richterimproved},
iDEM~\citep{akhound2024iterated} and finally
Adjoint Sampling~\citep[AS;][]{AS}.
For the conformer generation task, we include additionally a domain-specific baseline, RDKit ETKDG \citep{riniker2015better}, which relies on chemistry-based heuristics.
The evaluation pipelines are consistent with prior methods, where we adopt the SCLD setup for MW-5, the PIS setup for alanine dipeptide, and the AS setup for all the rest; see \Cref{sec:exp-detail} for details.%

\begin{figure}[t]
  \captionsetup{type=table}
  \caption{%
    Results on the synthetic energy functions for $n$-particle bodies with their corresponding dimensions $d$.
    Following \citep{chen2025sequential,AS}, we report Sinkhorn for MW-5 and the Wasserstein-2 distances w.r.t samples, $\calW_2$, and energies, $E(\cdot)\calW_2$, for the rest.
    All values are averaged over three random trials. 
    {\sethlcolor{metablue!15}\hl{Best results are highlighted.}}
   }
    \vskip -0.07in %
    \centering
    \resizebox{\textwidth}{!}
    {%
    \renewcommand{\arraystretch}{1.2}
    \setlength{\tabcolsep}{5pt}
    \begin{tabular}{@{} l c cc cc cc @{}}
    \toprule
     & \multicolumn{1}{c}{MW-5 $(d{=}5)$}
     & \multicolumn{2}{c}{DW-4 $(d=8)$}      & \multicolumn{2}{c}{LJ-13 $(d=39)$}    & \multicolumn{2}{c}{LJ-55 $(d=165)$} 
    \\ \cmidrule(lr){2-2} \cmidrule(lr){3-4} \cmidrule(lr){5-6} \cmidrule(lr){7-8} 
    Method & Sinkhorn $\downarrow$ & $\mathcal{W}_2$ $\downarrow$ & $E(\cdot)$ $\mathcal{W}_2$ $\downarrow$ 
    & $\mathcal{W}_2$ $\downarrow$ & $E(\cdot)$ $\mathcal{W}_2$ $\downarrow$  
    & $\mathcal{W}_2$ $\downarrow$ & $E(\cdot)$ $\mathcal{W}_2$ $\downarrow$ 
    \\ \midrule
    PDDS {\scriptsize\citep{phillips2024particle}} & 
    {\color{gray}---} &
    0.92{\color{gray}\tiny$\pm$0.08} & 0.58{\color{gray}\tiny$\pm$0.25} & 
    4.66{\color{gray}\tiny$\pm$0.87} & 
    56.01{\color{gray}\tiny$\pm$10.80} & 
    {\color{gray}---} & {\color{gray}---} 
    \\
    SCLD {\scriptsize\citep{chen2025sequential}} & 
    0.44{\color{gray}\tiny$\pm$0.06} & 
    1.30{\color{gray}\tiny$\pm$0.64} & 
    0.40{\color{gray}\tiny$\pm$0.19} & 
    2.93{\color{gray}\tiny$\pm$0.19} & 27.98{\color{gray}\tiny$\pm$\phantom{0}1.26} & 
    {\color{gray}---} & {\color{gray}---} 
    \\ %
    PIS {\scriptsize\citep{zhang2022path}} &
    0.65{\color{gray}\tiny$\pm$0.25} &
    0.68{\color{gray}\tiny$\pm$0.28} & 0.65{\color{gray}\tiny$\pm$0.25} &
    1.93{\color{gray}\tiny$\pm$0.07} & 18.02{\color{gray}\tiny$\pm$\phantom{0}1.12} &
    4.79{\color{gray}\tiny$\pm$0.45}  & 228.70{\color{gray}\tiny$\pm$131.27} 
    \\
    DDS {\scriptsize\citep{vargas2023denoising}} &
    0.63{\color{gray}\tiny$\pm$0.24} &
    0.92{\color{gray}\tiny$\pm$0.11} & 0.90{\color{gray}\tiny$\pm$0.37} &
    1.99{\color{gray}\tiny$\pm$0.13} & 24.61{\color{gray}\tiny$\pm$\phantom{0}8.99} &
    4.60{\color{gray}\tiny$\pm$0.09}  & 173.09{\color{gray}\tiny$\pm$\phantom{0}18.01} 
    \\
    LV-PIS {\tiny\citep{richterimproved}} &
    {\color{gray}---} &
    1.04{\color{gray}\tiny$\pm$0.29} & 1.89{\color{gray}\tiny$\pm$0.89} &
    {\color{gray}---} & {\color{gray}---} &
     {\color{gray}---}  & {\color{gray}---} 
    \\
    iDEM {\tiny\citep{akhound2024iterated}} &
    {\color{gray}---} &
    0.70{\color{gray}\tiny$\pm$0.06} &  0.55{\color{gray}\tiny$\pm$0.14} &
    1.61{\color{gray}\tiny$\pm$0.01} & 30.78{\color{gray}\tiny$\pm$24.46} &
    4.69{\color{gray}\tiny$\pm$1.52} & \phantom{0}93.53{\color{gray}\tiny$\pm$\phantom{0}16.31}
    \\
    AS {\scriptsize\citep{AS}} &
    0.32{\color{gray}\tiny$\pm$0.06}&
    0.62{\color{gray}\tiny$\pm$0.06} &  0.55{\color{gray}\tiny$\pm$0.12} &
    1.67{\color{gray}\tiny$\pm$0.01} &  \phantom{0}2.40{\color{gray}\tiny$\pm$\phantom{0}1.25} &
    4.04{\color{gray}\tiny$\pm$0.05} &  \phantom{0}30.83{\color{gray}\tiny$\pm$\phantom{00}8.19}
    \\ \hline
    ASBS {(\textbf{Ours})} & 
    \cellhi 0.15{\color{gray}\tiny$\pm$0.02} & 
    \cellhi 0.43{\color{gray}\tiny$\pm$0.05} & \cellhi 0.20{\color{gray}\tiny$\pm$0.11} & 
    \cellhi 1.59{\color{gray}\tiny$\pm$0.03} & \cellhi \phantom{0}1.99{\color{gray}\tiny$\pm$\phantom{0}1.01} & 
    \cellhi 4.00{\color{gray}\tiny$\pm$0.03} & 
    \cellhi%
    \phantom{0}28.10{\color{gray}\tiny$\pm$\phantom{0}\phantom{0}8.15} \\
    \bottomrule
    \end{tabular}
    }
    \label{tab:synthetic}
    \vskip 0.05in
    \begin{minipage}{0.66\textwidth}
        \centering
        \captionsetup{type=figure}
        \includegraphics[width=.8\textwidth]{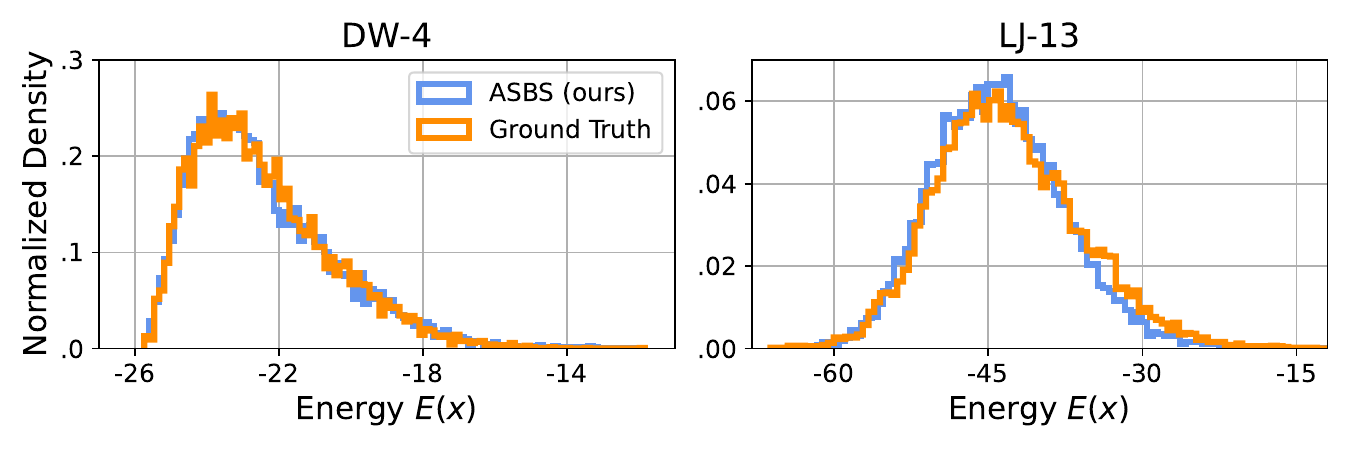}
        \vspace{-5pt}
        \caption{
            The energy histograms of DW-4 and LJ-13 from \Cref{tab:synthetic}.
            ASBS generates samples whose energy profiles closely match those of the ground-truth samples.
        }
        \label{fig:energy-histogram}
    \end{minipage}
    \hfill
    \begin{minipage}{0.31\textwidth}
        \centering
        \captionsetup{type=figure}
        \includegraphics[width=0.8\linewidth]{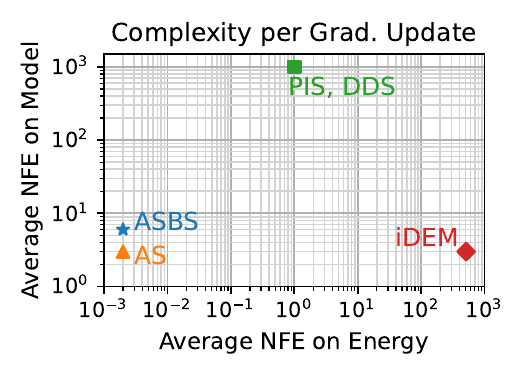}
        \vskip -0.1in
        \caption{
            Complexity w.r.t. the number of function evaluation (NFE) on LJ-13 potential.
        }
        \label{fig:nfe}
    \end{minipage}
\end{figure}

\textbf{ASBS models}$\quad$
For all tasks, we consider a degenerate base drift $f_t := 0$, as discussed in \Cref{sec:generalize-as}, and set $\sigma_t$ a geometric noise schedule.
For energy functions that directly take particle systems as inputs---such as DW, LJ, and eSEN---we parametrize the models $u_\theta$, $h_\phi$ with two Equivariant Graph Neural Networks~\citep{satorras2021n} and consider a domain-specific source distribution---the harmonic prior \citep{jing2023eigenfold}.
Formally, for an $n$-particle system $x = \{x_i\}_{i=0}^n$, the harmonic prior $\mu_\text{harmonic}(x)$ is a quadratic potential that can be sampled analytically from an anisotropic Gaussian:
\begin{equation}\label{eq:harmonic}
    \mu_\text{harmonic}(x) \propto \exp({-\tfrac{\alpha}{2} \textstyle \sum_{i,j}\norm{x_i - x_j}^2}).
\end{equation}
For other energy functions, we use standard fully-connected neural networks and consider Gaussian priors.
All models are trained with Adam \citep{kingma2015adam} and, following standard practices \citep{AS,akhound2024iterated}, utilize replay buffers; see \Cref{sec:impl-detail} for details.%

\textbf{Results}$\quad$ 
\Cref{tab:synthetic} presents the results on synthetic energy functions. Notably, ASBS consistently outperforms prior diffusion samplers across \textit{all} energy functions. 
In \Cref{fig:energy-histogram}, we compare the energy histograms of DW-4 and LJ-13 potentials between the ground-truth MCMC samples and those from ASBS. It is evident that ASBS generates samples that closely resemble the target Boltzmann distribution $\nu(x) \propto e^{-E(x)}$, resulting in energy profiles $E(x)$ that are almost indistinguishable from 
the ground truth.
Computationally, \Cref{fig:nfe} shows the average number of evaluation required on the 
energy $E(x)$ and the model $u_\theta(t,x)$ for each gradient update.
ASBS is much more efficient than most diffusion samplers, with a slight overhead compared to AS due to the additional network $h_\phi(x)$.%

\begin{figure}[t]
    \captionsetup{type=table}
    \caption{
        Comparison between diffusion samplers on sampling the molecular Boltzmann distribution of the alanine dipeptide. We report the KL divergence $\KL$ for the 1D marginal across five torsion angles and the Wasserstein-2 $\calW_2$ on jointly $(\phi,\psi)$, known as Ramachandran plots (see \Cref{fig:R-plot}).
        {\sethlcolor{metablue!15}\hl{Best results are highlighted.}}
    }
    \vskip -0.05in %
    \centering
    \resizebox{\textwidth}{!}
    {%
    \setlength{\tabcolsep}{13pt} %
    \begin{tabular}{@{} l ccccc c @{}}
    \toprule
    & \multicolumn{5}{c}{$\KL$ on each torsion's marginal $\downarrow$}
    & 
    {\small $\mathcal{W}_2$ on joint $\downarrow$ }
    \\ \cmidrule(lr){2-6} \cmidrule(lr){7-7}
    Method & $\phi$ & $\psi$ & $\gamma_1$ & $\gamma_2$ & $\gamma_3$ & 
    $(\phi, \psi)$
    \\ \midrule
    PIS {\small \citep{zhang2022path}}
    & 0.05{\color{gray}\scriptsize$\pm$0.03} %
    & 0.38{\color{gray}\scriptsize$\pm$0.49} %
    & 5.61{\color{gray}\scriptsize$\pm$1.24} %
    & 4.49{\color{gray}\scriptsize$\pm$0.03} %
    & 4.60{\color{gray}\scriptsize$\pm$0.03} %
    & 1.27{\color{gray}\scriptsize$\pm$1.19} %
    \\
    DDS {\small\citep{vargas2023denoising}}
    & 0.03{\color{gray}\scriptsize$\pm$0.01} %
    & 0.16{\color{gray}\scriptsize$\pm$0.07} %
    & 2.44{\color{gray}\scriptsize$\pm$0.96} %
    & 0.03{\color{gray}\scriptsize$\pm$0.00} %
    & 0.03{\color{gray}\scriptsize$\pm$0.00} %
    & 0.68{\color{gray}\scriptsize$\pm$0.09} %
    \\
    AS {\small\citep{AS}}
    & 0.09{\color{gray}\scriptsize$\pm$0.09} %
    & 0.04{\color{gray}\scriptsize$\pm$0.04} %
    & 0.17{\color{gray}\scriptsize$\pm$0.17} %
    & 0.56{\color{gray}\scriptsize$\pm$0.09} %
    & 0.51{\color{gray}\scriptsize$\pm$0.06} %
    & 0.65{\color{gray}\scriptsize$\pm$0.52} %
    \\ \midrule
    ASBS {(\textbf{Ours})}
    &\cellhi 0.02{\color{gray}\scriptsize$\pm$0.00} %
    &\cellhi 0.01{\color{gray}\scriptsize$\pm$0.00} %
    &\cellhi 0.03{\color{gray}\scriptsize$\pm$0.01} %
    &\cellhi 0.02{\color{gray}\scriptsize$\pm$0.00} %
    &\cellhi 0.02{\color{gray}\scriptsize$\pm$0.00} %
    &\cellhi 0.25{\color{gray}\scriptsize$\pm$0.01} %
    \\ \bottomrule
    \end{tabular}
    \label{tab:ad}
    }
    \vskip 0.1in
    \captionsetup{type=table}
    \caption{
        Results on large-scale amortized conformer generation, evaluated on two test sets, SPICE and GEOM-DRUGS, both with and without post-processing relaxation. We report the coverage (\%) and Absolute Mean RMSD (AMR) of the recall at the threshold \textbf{1.0\AA}. Note that ``\textit{+RDKit warmup}'' refers to warm-starting the model $u_\theta$ using RDKit conformers; see \Cref{sec:exp-detail} for details.
        Best results {\sethlcolor{metablue!15}\hl{without}} and {\sethlcolor{myyellow!30}\hl{with}} RDKit warm-up are highlighted separately.%
    }
    \vskip 0.07in %
    \centering
    \renewcommand{\arraystretch}{1.3}
    \resizebox{\textwidth}{!}{%
    \begin{tabular}{@{} l cccc cccc}
    \toprule
    & \multicolumn{4}{c}{{without relaxation}} & \multicolumn{4}{c}{{with relaxation}} \\
    \cmidrule(lr){2-5} \cmidrule(lr){6-9}
    & \multicolumn{2}{c}{{SPICE} } & \multicolumn{2}{c}{{GEOM-DRUGS}} 
    & \multicolumn{2}{c}{{SPICE} } & \multicolumn{2}{c}{{GEOM-DRUGS}} 
    \\ \cmidrule(lr){2-3} \cmidrule(lr){4-5} \cmidrule(lr){6-7} \cmidrule(lr){8-9}
        Method & Coverage $\uparrow$ & AMR $\downarrow$ & Coverage $\uparrow$ & AMR $\downarrow$ & Coverage $\uparrow$ & AMR $\downarrow$ &
        Coverage $\uparrow$ & AMR $\downarrow$ \\
    \midrule
    RDKit ETKDG {\tiny \citep{riniker2015better}}
    & 56.94{\color{gray}\tiny$\pm$35.82}
    & 1.04{\color{gray}\tiny$\pm$0.52}
    & 50.81{\color{gray}\tiny$\pm$34.69}
    & 1.15{\color{gray}\tiny$\pm$0.61}
    & 70.21{\color{gray}\tiny$\pm$31.70}
    & 0.79{\color{gray}\tiny$\pm$0.44}
    & 62.55{\color{gray}\tiny$\pm$31.67}
    & 0.93{\color{gray}\tiny$\pm$0.53}
    \\
    AS {\small \citep{AS}}
    & 56.75{\color{gray}\tiny$\pm$38.15}
    & 0.96{\color{gray}\tiny$\pm$0.26}
    & 36.23{\color{gray}\tiny$\pm$33.42}
    & 1.20{\color{gray}\tiny$\pm$0.43}
    & 82.41{\color{gray}\tiny$\pm$25.85}
    & 0.68{\color{gray}\tiny$\pm$0.28}
    & 64.26{\color{gray}\tiny$\pm$34.57}
    & 0.89{\color{gray}\tiny$\pm$0.45}
    \\
    ASBS w/ Gaussian prior (\textbf{Ours}) 
    & 73.04{\color{gray}\tiny$\pm$31.95}
    & 0.83{\color{gray}\tiny$\pm$0.24}
    & 50.23{\color{gray}\tiny$\pm$35.98}
    & 1.05{\color{gray}\tiny$\pm$0.43}
    & 88.26{\color{gray}\tiny$\pm$20.57}
    & 0.60{\color{gray}\tiny$\pm$0.24}
    & \cellhi 72.32{\color{gray}\tiny$\pm$29.68}
    & \cellhi 0.77{\color{gray}\tiny$\pm$0.35}
    \\
    ASBS w/ harmonic prior (\textbf{Ours}) 
    & \cellhi 74.05{\color{gray}\tiny$\pm$31.61}
    & \cellhi 0.82{\color{gray}\tiny$\pm$0.23}
    & \cellhi 53.14{\color{gray}\tiny$\pm$35.69}
    & \cellhi 1.03{\color{gray}\tiny$\pm$0.42}
    & \cellhi 88.71{\color{gray}\tiny$\pm$18.63}
    & \cellhi 0.59{\color{gray}\tiny$\pm$0.24}
    & 72.77{\color{gray}\tiny$\pm$29.94}
    & 0.78{\color{gray}\tiny$\pm$0.35}
    \\ \midrule
    AS +RDKit warmup {\tiny \citep{AS}}
    & 72.21{\color{gray}\tiny$\pm$30.22}
    & 0.84{\color{gray}\tiny$\pm$0.24}
    & 52.19{\color{gray}\tiny$\pm$35.20}
    & 1.02{\color{gray}\tiny$\pm$0.34}
    & 87.84{\color{gray}\tiny$\pm$19.20}
    & 0.60{\color{gray}\tiny$\pm$0.23}
    & \cellhj 73.88{\color{gray}\tiny$\pm$28.63}
    & \cellhj 0.76{\color{gray}\tiny$\pm$0.34}
    \\ 
    ASBS +RDKit warmup (\textbf{Ours}) 
    & \cellhj 77.84{\color{gray}\tiny$\pm$28.37}
    & \cellhj 0.79{\color{gray}\tiny$\pm$0.23}
    & \cellhj 57.19{\color{gray}\tiny$\pm$35.14}
    & \cellhj 0.98{\color{gray}\tiny$\pm$0.40}
    & \cellhj 88.08{\color{gray}\tiny$\pm$18.84}
    & \cellhj 0.58{\color{gray}\tiny$\pm$0.24}
    & 73.18{\color{gray}\tiny$\pm$30.09}
    & \cellhj 0.76{\color{gray}\tiny$\pm$0.37}
    \\
    \bottomrule
    \end{tabular}
    }
    \label{tab:conformer_generation}
    \vskip 0.15in
    \begin{minipage}{0.35\textwidth}
        \centering
        \captionsetup{type=figure}
        \includegraphics[width=\linewidth]{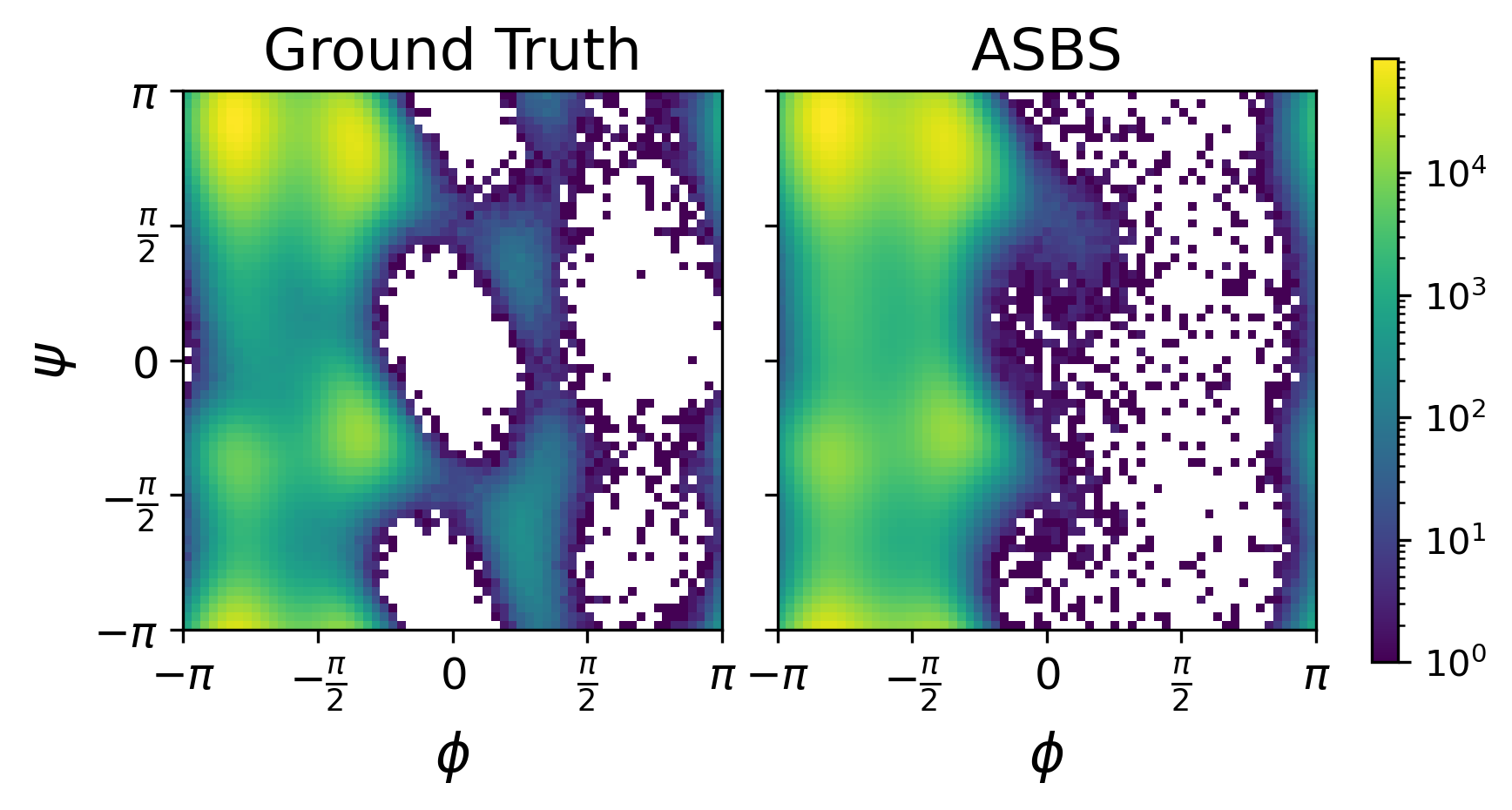}
        \vspace{-20pt}
        \caption{Ramachandran plots for the alanine dipeptide between ground-truth and ASBS samples.}
        \label{fig:R-plot}
    \end{minipage}
    \hfill
    \begin{minipage}{0.63\textwidth}
        \centering
        \captionsetup{type=figure}
        \includegraphics[width=\textwidth]{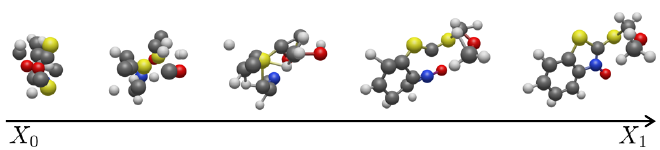}
        \caption{
            Example of ASBS generative process on amortized conformer generation. Given an unseen molecular topology $g \in \calG_\text{test}$ from the test set---\texttt{COCSc1sc2ccccc2[n+]1[O-]} in this case---ASBS transports samples from the harmonic prior $X_0 \sim \mu_{\text{harmonic}}$ to generate conformers $X_1$.
        }
        \label{fig:asbs-gen}
    \end{minipage}
\end{figure}

\Cref{tab:ad} summarizes the results for alanine dipeptide. Following standard pipeline \citep{zhang2022path}, we generate model samples $X_1 \in \R^{60}$ and extract five torsion angles---including the backbone angles $\phi$, $\psi$ and methyl rotation angles $\gamma_1$, $\gamma_2$, $\gamma_3$---all of them exhibit multi-modal distributions.  
Notably, ASBS achieves lowest KL divergence to the ground-truth marginals across all five torsions.
\Cref{fig:R-plot} further compares the joint distributions of $(\phi,\psi)$, known as the Ramachandran plots \citep{spencer2019stereochemistry}, between ground-truth and ASBS.
While ASBS identifies all high-density modes in the region $\phi \in [-\pi, 0]$, it misses few low-density modes. This mode-seeking behavior, inherit in all SOC-based diffusion samplers, could be improved with important weighting. We provide further discussions in \Cref{sec:add-exp}.

\begin{figure}[t]
    \centering
    \captionsetup{type=figure}
    \includegraphics[width=.92\textwidth]{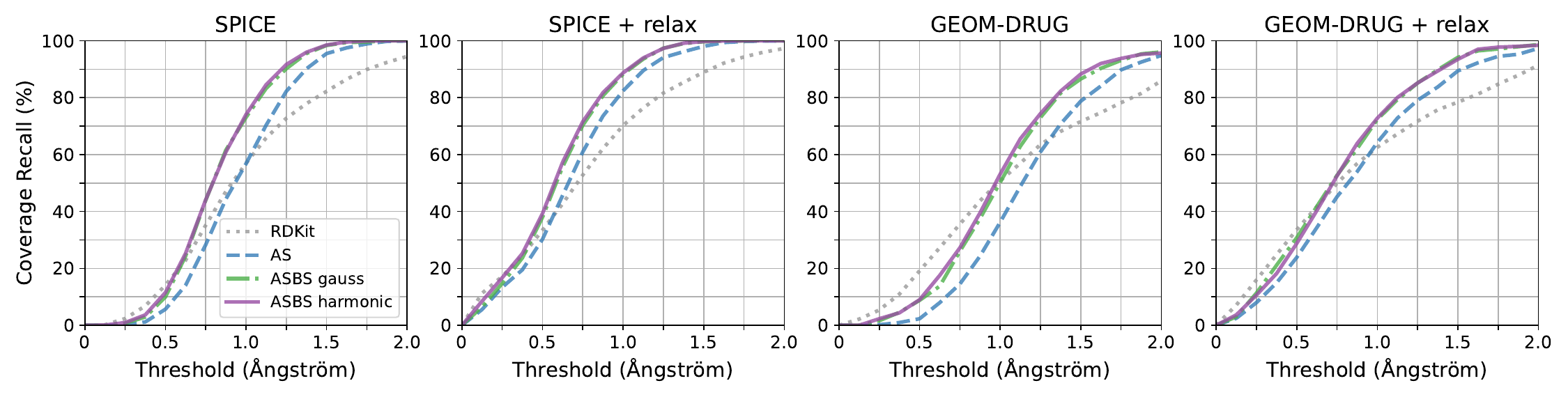}
    \vspace{-5pt}
    \caption{
        Recall coverage curves on amortized conformer generation on the SPICE and GEOM-DRUGS test sets without RDKit warm-start. Note that \Cref{tab:conformer_generation} reports the recall coverages at the threshold \textbf{1.0\AA}.
    }
    \label{fig:wo-pretrain-recall}
\end{figure}

\Cref{tab:conformer_generation} presents the recall for amortized conformer generation compared to ground-truth samples. 
For prior diffusion samplers, we primarily compare to AS \citep{AS} due to the benchmark's scale. Following AS, we ablate a warm-start stage using RDKit conformers,
which are close but not identical to ground-truth samples, %
and include results with relaxation for post-generation optimization.
Since AS is a specific instance of ASBS with a Dirac delta prior---as discussed in \Cref{sec:generalize-as}---any performance improvements from AS to ASBS highlight the added capability to handle arbitrary priors and, consequently, non-memoryless processes.
Remarkably, without any warm-start, ASBS with the harmonic prior \eqref{eq:harmonic} already matches and, in many cases, surpasses the RDKit-warm-up AS. 
With warm-start, ASBS achieves best performance across most metrics. 
This highlights the significance of domain-specific priors, 
aiding exploration as effectively as warm-start with additional data, which may not always be available.
Finally, we visualize the generation process of ASBS with harmonic prior \eqref{eq:harmonic} in \Cref{fig:asbs-gen} and report the recall curves in \Cref{fig:wo-pretrain-recall}. 
In practice, we observe that ASBS achieves slightly better results with a harmonic prior compared to a Gaussian prior, with both significantly outperforming AS \citep{AS}. 
See \Cref{sec:add-exp} for further ablation studies.

\section{Conclusion and Limitation}

We introduced \textbf{Adjoint Schr\"odinger Bridge Sampler (ASBS)}, a new diffusion sampler for Boltzmann distributions that solves general SB problems given only target energy functions. ASBS is based on a scalable matching framework, converges theoretically to the global solution, and performs superiorly across various benchmarks.
Despite these encouraging results, further enhancement with importance sampling techniques is worth investigating to mitigate the mode collapse inherent in SOC-inspired diffusion samplers. 
Exploring its effectiveness in sampling amortized Boltzmann distributions would also be valuable.

\section*{Acknowledgements}

The authors would like to thank Aaron Havens, Juno Nam, Xiang Fu, Bing Yan, Brandon Amos, and Brian Karrer for the helpful discussions and comments.

\bibliographystyle{assets/plainnat} %
\bibliography{reference.bib}

\begin{thebibliography}{84}
\providecommand{\natexlab}[1]{#1}
\providecommand{\url}[1]{\texttt{#1}}
\expandafter\ifx\csname urlstyle\endcsname\relax
  \providecommand{\doi}[1]{doi: #1}\else
  \providecommand{\doi}{doi: \begingroup \urlstyle{rm}\Url}\fi

\bibitem[Akhound-Sadegh et~al.(2024)Akhound-Sadegh, Rector-Brooks, Joey~Bose, Mittal, Lemos, Liu, Sendera, Ravanbakhsh, Gidel, Bengio, Malkin, and Tong]{akhound2024iterated}
Tara Akhound-Sadegh, Jarrid Rector-Brooks, Avishek Joey~Bose, Sarthak Mittal, Pablo Lemos, Cheng-Hao Liu, Marcin Sendera, Siamak Ravanbakhsh, Gauthier Gidel, Yoshua Bengio, Nikolay Malkin, and Alexander Tong.
\newblock {Iterated denoising energy matching for sampling from Boltzmann densities}.
\newblock In \emph{International Conference on Machine Learning (ICML)}, 2024.

\bibitem[Albergo and Vanden-Eijnden(2025)]{albergo2024nets}
Michael~S Albergo and Eric Vanden-Eijnden.
\newblock {NETS: A non-equilibrium transport sampler}.
\newblock In \emph{International Conference on Machine Learning (ICML)}, 2025.

\bibitem[Arbel et~al.(2021)Arbel, Matthews, and Doucet]{arbel2021annealed}
Michael Arbel, Alex Matthews, and Arnaud Doucet.
\newblock Annealed flow transport monte carlo.
\newblock In \emph{International Conference on Machine Learning (ICML)}, 2021.

\bibitem[Axelrod and Gomez-Bombarelli(2022)]{axelrod2022geom}
Simon Axelrod and Rafael Gomez-Bombarelli.
\newblock {GEOM: energy-annotated molecular conformations for property prediction and molecular generation}.
\newblock \emph{Scientific Data}, 9\penalty0 (1):\penalty0 185, 2022.

\bibitem[Bellman(1954)]{bellman1954theory}
Richard Bellman.
\newblock The theory of dynamic programming.
\newblock Technical report, Rand corp santa monica ca, 1954.

\bibitem[Berner et~al.(2024)Berner, Richter, and Ullrich]{berner2023optimal}
Julius Berner, Lorenz Richter, and Karen Ullrich.
\newblock An optimal control perspective on diffusion-based generative modeling.
\newblock \emph{Transactions on Machine Learning Research (TMLR)}, 2024.

\bibitem[Bernton et~al.(2019)Bernton, Heng, Doucet, and Jacob]{bernton2019schr}
Espen Bernton, Jeremy Heng, Arnaud Doucet, and Pierre~E Jacob.
\newblock Schr\"{o}dinger bridge samplers.
\newblock \emph{arXiv preprint arXiv:1912.13170}, 2019.

\bibitem[Binder et~al.(1992)Binder, Heermann, and Binder]{binder1992monte}
Kurt Binder, Dieter~W Heermann, and K~Binder.
\newblock \emph{Monte Carlo simulation in statistical physics}, volume~8.
\newblock Springer, 1992.

\bibitem[Blessing et~al.(2024)Blessing, Jia, Esslinger, Vargas, and Neumann]{blessing2024beyond}
Denis Blessing, Xiaogang Jia, Johannes Esslinger, Francisco Vargas, and Gerhard Neumann.
\newblock {Beyond ELBOs: a large-scale evaluation of variational methods for sampling}.
\newblock In \emph{International Conference on Machine Learning (ICML)}, 2024.

\bibitem[Box and Tiao(2011)]{box2011bayesian}
George~EP Box and George~C Tiao.
\newblock \emph{Bayesian inference in statistical analysis}.
\newblock John Wiley \& Sons, 2011.

\bibitem[Bradbury et~al.(2018)Bradbury, Frostig, Hawkins, Johnson, Leary, Maclaurin, Necula, Paszke, Vander{P}las, Wanderman-{M}ilne, and Zhang]{jax2018github}
James Bradbury, Roy Frostig, Peter Hawkins, Matthew~James Johnson, Chris Leary, Dougal Maclaurin, George Necula, Adam Paszke, Jake Vander{P}las, Skye Wanderman-{M}ilne, and Qiao Zhang.
\newblock {JAX}: composable transformations of {P}ython+{N}um{P}y programs, 2018.
\newblock \url{http://github.com/google/jax}.

\bibitem[Chen et~al.(2025)Chen, Richter, Berner, Blessing, Neumann, and Anandkumar]{chen2025sequential}
Junhua Chen, Lorenz Richter, Julius Berner, Denis Blessing, Gerhard Neumann, and Anima Anandkumar.
\newblock Sequential controlled langevin diffusions.
\newblock In \emph{International Conference on Learning Representations (ICLR)}, 2025.

\bibitem[Chen et~al.(2018)Chen, Rubanova, Bettencourt, and Duvenaud]{chen2018neural}
Ricky T.~Q. Chen, Yulia Rubanova, Jesse Bettencourt, and David~K Duvenaud.
\newblock {Neural ordinary differential equations}.
\newblock In \emph{Advances in Neural Information Processing Systems (NeurIPS)}, 2018.

\bibitem[Chen et~al.(2022)Chen, Liu, and Theodorou]{chen2022likelihood}
Tianrong Chen, Guan-Horng Liu, and Evangelos~A Theodorou.
\newblock {Likelihood training of Schr\"{o}dinger bridge using forward-backward SDEs theory}.
\newblock In \emph{International Conference on Learning Representations (ICLR)}, 2022.

\bibitem[Chen and Georgiou(2015)]{chen2015stochastic}
Yongxin Chen and Tryphon Georgiou.
\newblock Stochastic bridges of linear systems.
\newblock \emph{IEEE Transactions on Automatic Control}, 61\penalty0 (2):\penalty0 526--531, 2015.

\bibitem[Chen et~al.(2016)Chen, Georgiou, and Pavon]{chen2016relation}
Yongxin Chen, Tryphon~T Georgiou, and Michele Pavon.
\newblock {On the relation between optimal transport and Schr{\"o}dinger bridges: A stochastic control viewpoint}.
\newblock \emph{Journal of Optimization Theory and Applications}, 169:\penalty0 671--691, 2016.

\bibitem[Chen et~al.(2021)Chen, Georgiou, and Pavon]{chen2021stochastic}
Yongxin Chen, Tryphon~T Georgiou, and Michele Pavon.
\newblock Stochastic control liaisons: Richard sinkhorn meets gaspard monge on a schr{\"o}dinger bridge.
\newblock \emph{SIAM Review}, 63\penalty0 (2):\penalty0 249--313, 2021.

\bibitem[Chopin(2002)]{chopin2002sequential}
Nicolas Chopin.
\newblock A sequential particle filter method for static models.
\newblock \emph{Biometrika}, 89\penalty0 (3):\penalty0 539--552, 2002.

\bibitem[De~Bortoli et~al.(2021)De~Bortoli, Thornton, Heng, and Doucet]{de2021diffusion}
Valentin De~Bortoli, James Thornton, Jeremy Heng, and Arnaud Doucet.
\newblock {Diffusion Schr\"odinger bridge with applications to score-based generative modeling}.
\newblock In \emph{Advances in Neural Information Processing Systems (NeurIPS)}, 2021.

\bibitem[De~Bortoli et~al.(2024)De~Bortoli, Hutchinson, Wirnsberger, and Doucet]{de2024target}
Valentin De~Bortoli, Michael Hutchinson, Peter Wirnsberger, and Arnaud Doucet.
\newblock Target score matching.
\newblock \emph{arXiv preprint arXiv:2402.08667}, 2024.

\bibitem[Del~Moral et~al.(2006)Del~Moral, Doucet, and Jasra]{del2006sequential}
Pierre Del~Moral, Arnaud Doucet, and Ajay Jasra.
\newblock Sequential monte carlo samplers.
\newblock \emph{Journal of the Royal Statistical Society Series B: Statistical Methodology}, 68\penalty0 (3):\penalty0 411--436, 2006.

\bibitem[Domingo-Enrich et~al.(2025)Domingo-Enrich, Drozdzal, Karrer, and Chen]{domingo2024adjoint}
Carles Domingo-Enrich, Michal Drozdzal, Brian Karrer, and Ricky T.~Q. Chen.
\newblock {Adjoint Matching: Fine-tuning flow and diffusion generative models with memoryless stochastic optimal control}.
\newblock In \emph{International Conference on Learning Representations (ICLR)}, 2025.

\bibitem[Eastman et~al.(2017)Eastman, Swails, Chodera, McGibbon, Zhao, Beauchamp, Wang, Simmonett, Harrigan, Stern, Wiewiora, Brooks, and Pande]{eastman2017openmm}
Peter Eastman, Jason Swails, John~D Chodera, Robert~T McGibbon, Yutong Zhao, Kyle~A Beauchamp, Lee-Ping Wang, Andrew~C Simmonett, Matthew~P Harrigan, Chaya~D Stern, Rafal~P. Wiewiora, Bernard~R. Brooks, and Vijay~S. Pande.
\newblock {OpenMM 7: Rapid development of high performance algorithms for molecular dynamics}.
\newblock \emph{PLoS computational biology}, 13\penalty0 (7):\penalty0 e1005659, 2017.

\bibitem[Eastman et~al.(2023)Eastman, Behara, Dotson, Galvelis, Herr, Horton, Mao, Chodera, Pritchard, Wang, Fabritiis, and Markland]{eastman2023spice}
Peter Eastman, Pavan~Kumar Behara, David~L Dotson, Raimondas Galvelis, John~E Herr, Josh~T Horton, Yuezhi Mao, John~D Chodera, Benjamin~P Pritchard, Yuanqing Wang, Gianni~De Fabritiis, and Thomas~E. Markland.
\newblock {SPICE, a dataset of drug-like molecules and peptides for training machine learning potentials}.
\newblock \emph{Scientific Data}, 10\penalty0 (1):\penalty0 11, 2023.

\bibitem[Finlay et~al.(2020)Finlay, Jacobsen, Nurbekyan, and Oberman]{finlay2020train}
Chris Finlay, J{\"o}rn-Henrik Jacobsen, Levon Nurbekyan, and Adam Oberman.
\newblock {How to train your neural ODE: The world of jacobian and kinetic regularization}.
\newblock In \emph{International Conference on Machine Learning (ICML)}, 2020.

\bibitem[Fortet(1940)]{fortet1940resolution}
Robert Fortet.
\newblock {R{\'e}solution d'un syst{\`e}me d'{\'e}quations de M. Schr{\"o}dinger}.
\newblock \emph{Journal de Math{\'e}matiques Pures et Appliqu{\'e}es}, 19\penalty0 (1-4):\penalty0 83--105, 1940.

\bibitem[Fu et~al.(2025)Fu, Wood, Barroso-Luque, Levine, Gao, Dzamba, and Zitnick]{fu2025learning}
Xiang Fu, Brandon~M Wood, Luis Barroso-Luque, Daniel~S Levine, Meng Gao, Misko Dzamba, and C~Lawrence Zitnick.
\newblock Learning smooth and expressive interatomic potentials for physical property prediction.
\newblock In \emph{International Conference on Machine Learning (ICML)}, 2025.

\bibitem[Gabri{\'e} et~al.(2022)Gabri{\'e}, Rotskoff, and Vanden-Eijnden]{gabrie2022adaptive}
Marylou Gabri{\'e}, Grant~M Rotskoff, and Eric Vanden-Eijnden.
\newblock Adaptive monte carlo augmented with normalizing flows.
\newblock \emph{Proceedings of the National Academy of Sciences}, 119\penalty0 (10):\penalty0 e2109420119, 2022.

\bibitem[HASTINGS(1970)]{hastings1970monte}
WK~HASTINGS.
\newblock Monte carlo sampling methods using markov chains and their applications.
\newblock \emph{Biometrika}, 57\penalty0 (1):\penalty0 97--109, 1970.

\bibitem[Havens et~al.(2025)Havens, Miller, Yan, Domingo-Enrich, Sriram, Wood, Levine, Hu, Amos, Karrer, Fu, Liu, and Chen]{AS}
Aaron Havens, Benjamin~Kurt Miller, Bing Yan, Carles Domingo-Enrich, Anuroop Sriram, Brandon Wood, Daniel Levine, Bin Hu, Brandon Amos, Brian Karrer, Xiang Fu, Guan-Horng Liu, and Ricky T.~Q. Chen.
\newblock {Adjoint Sampling: Highly scalable diffusion samplers via Adjoint Matching}.
\newblock In \emph{International Conference on Machine Learning (ICML)}, 2025.

\bibitem[Hawkins(2017)]{hawkins2017conformation}
Paul~CD Hawkins.
\newblock {Conformation generation: The state of the art}.
\newblock \emph{Journal of chemical information and modeling}, 57\penalty0 (8):\penalty0 1747--1756, 2017.

\bibitem[Ho et~al.(2020)Ho, Jain, and Abbeel]{ho2020denoising}
Jonathan Ho, Ajay Jain, and Pieter Abbeel.
\newblock {Denoising diffusion probabilistic models}.
\newblock In \emph{Advances in Neural Information Processing Systems (NeurIPS)}, 2020.

\bibitem[It{\^o}(1951)]{ito1951stochastic}
Kiyosi It{\^o}.
\newblock \emph{{On stochastic differential equations}}, volume~4.
\newblock American Mathematical Soc., 1951.

\bibitem[Jing et~al.(2023)Jing, Erives, Pao-Huang, Corso, Berger, and Jaakkola]{jing2023eigenfold}
Bowen Jing, Ezra Erives, Peter Pao-Huang, Gabriele Corso, Bonnie Berger, and Tommi~S Jaakkola.
\newblock {EigenFold: Generative protein structure prediction with diffusion models}.
\newblock In \emph{International Conference on Learning Representations (ICLR), Workshop Track}, 2023.

\bibitem[Kappen(2005)]{kappen2005path}
Hilbert~J Kappen.
\newblock {Path integrals and symmetry breaking for optimal control theory}.
\newblock \emph{Journal of Statistical Mechanics: Theory and Experiment}, 2005\penalty0 (11):\penalty0 P11011, 2005.

\bibitem[Karras et~al.(2022)Karras, Aittala, Aila, and Laine]{karras2022elucidating}
Tero Karras, Miika Aittala, Timo Aila, and Samuli Laine.
\newblock {Elucidating the design space of diffusion-based generative models}.
\newblock In \emph{Advances in Neural Information Processing Systems (NeurIPS)}, 2022.

\bibitem[Kingma and Ba(2015)]{kingma2015adam}
Diederik~P Kingma and Jimmy Ba.
\newblock {Adam: A method for stochastic optimization}.
\newblock In \emph{International Conference on Learning Representations (ICLR)}, 2015.

\bibitem[Klein et~al.(2023)Klein, Foong, Fjelde, Mlodozeniec, Brockschmidt, Nowozin, No{\'e}, and Tomioka]{klein2023timewarp}
Leon Klein, Andrew Foong, Tor Fjelde, Bruno Mlodozeniec, Marc Brockschmidt, Sebastian Nowozin, Frank No{\'e}, and Ryota Tomioka.
\newblock Timewarp: Transferable acceleration of molecular dynamics by learning time-coarsened dynamics.
\newblock In \emph{Advances in Neural Information Processing Systems (NeurIPS)}, 2023.

\bibitem[K{\"o}hler et~al.(2020)K{\"o}hler, Klein, and No{\'e}]{kohler2020equivariant}
Jonas K{\"o}hler, Leon Klein, and Frank No{\'e}.
\newblock {Equivariant Flows: Exact likelihood generative learning for symmetric densities}.
\newblock In \emph{International Conference on Machine Learning (ICML)}, 2020.

\bibitem[Kullback(1968)]{kullback1968probability}
Solomon Kullback.
\newblock Probability densities with given marginals.
\newblock \emph{The Annals of Mathematical Statistics}, 39\penalty0 (4):\penalty0 1236--1243, 1968.

\bibitem[Landrum(2006)]{landrum2006rdkit}
Greg Landrum.
\newblock Rdkit: Open-source cheminformatics.
\newblock \url{https://www.rdkit.org}, 2006.

\bibitem[Le~Gall(2016)]{le2016brownian}
Jean-Fran{\c{c}}ois Le~Gall.
\newblock \emph{Brownian motion, martingales, and stochastic calculus}.
\newblock Springer, 2016.

\bibitem[L{\'e}onard(2012)]{leonard2012schrodinger}
Christian L{\'e}onard.
\newblock {From the Schr{\"o}dinger problem to the Monge--Kantorovich problem}.
\newblock \emph{Journal of Functional Analysis}, 262\penalty0 (4):\penalty0 1879--1920, 2012.

\bibitem[L{\'e}onard(2013)]{leonard2013survey}
Christian L{\'e}onard.
\newblock {A survey of the Schr{\"o}dinger problem and some of its connections with optimal transport}.
\newblock \emph{Discrete and Continuous Dynamical Systems}, 2013.

\bibitem[L{\'e}onard et~al.(2014)L{\'e}onard, R{\oe}lly, and Zambrini]{leonard2014reciprocal}
Christian L{\'e}onard, Sylvie R{\oe}lly, and Jean-Claude Zambrini.
\newblock {Reciprocal processes. A measure-theoretical point of view}.
\newblock \emph{Probability Surveys}, 2014.

\bibitem[Levine et~al.(2025)Levine, Shuaibi, Spotte-Smith, Taylor, Hasyim, Michel, Batatia, Cs{\'a}nyi, Dzamba, Eastman, Frey, Fu, Gharakhanyan, Krishnapriyan, Rackers, Raja, Rizvi, Rosen, Ulissi, Vargas, Zitnick, Blau, and Wood]{levine2025open}
Daniel~S Levine, Muhammed Shuaibi, Evan Walter~Clark Spotte-Smith, Michael~G Taylor, Muhammad~R Hasyim, Kyle Michel, Ilyes Batatia, G{\'a}bor Cs{\'a}nyi, Misko Dzamba, Peter Eastman, Nathan~C. Frey, Xiang Fu, Vahe Gharakhanyan, Aditi~S. Krishnapriyan, Joshua~A. Rackers, Sanjeev Raja, Ammar Rizvi, Andrew~S. Rosen, Zachary Ulissi, Santiago Vargas, C.~Lawrence Zitnick, Samuel~M. Blau, and Brandon~M. Wood.
\newblock {The Open Molecules 2025 (OMol25) dataset, evaluations, and models}.
\newblock \emph{arXiv preprint arXiv:2505.08762}, 2025.

\bibitem[Liu et~al.(2023)Liu, Vahdat, Huang, Theodorou, Nie, and Anandkumar]{liu2023i2sb}
Guan-Horng Liu, Arash Vahdat, De-An Huang, Evangelos~A Theodorou, Weili Nie, and Anima Anandkumar.
\newblock {I{$^2$}SB: Image-to-Image Schr{\"o}dinger bridge}.
\newblock In \emph{International Conference on Machine Learning (ICML)}, 2023.

\bibitem[Liu et~al.(2024)Liu, Lipman, Nickel, Karrer, Theodorou, and Chen]{liu2023generalized}
Guan-Horng Liu, Yaron Lipman, Maximilian Nickel, Brian Karrer, Evangelos~A Theodorou, and Ricky T.~Q. Chen.
\newblock {Generalized Schr\"{o}dinger bridge matching}.
\newblock In \emph{International Conference on Learning Representations (ICLR)}, 2024.

\bibitem[Matthews et~al.(2022)Matthews, Arbel, Rezende, and Doucet]{matthews2022continual}
Alex Matthews, Michael Arbel, Danilo~Jimenez Rezende, and Arnaud Doucet.
\newblock Continual repeated annealed flow transport monte carlo.
\newblock In \emph{International Conference on Machine Learning (ICML)}, 2022.

\bibitem[Metropolis et~al.(1953)Metropolis, Rosenbluth, Rosenbluth, Teller, and Teller]{metropolis1953equation}
Nicholas Metropolis, Arianna~W Rosenbluth, Marshall~N Rosenbluth, Augusta~H Teller, and Edward Teller.
\newblock Equation of state calculations by fast computing machines.
\newblock \emph{The journal of chemical physics}, 21\penalty0 (6):\penalty0 1087--1092, 1953.

\bibitem[Midgley et~al.(2023)Midgley, Stimper, Simm, Sch{\"o}lkopf, and Hern{\'a}ndez-Lobato]{midgley2023flow}
Laurence~Illing Midgley, Vincent Stimper, Gregor~NC Simm, Bernhard Sch{\"o}lkopf, and Jos{\'e}~Miguel Hern{\'a}ndez-Lobato.
\newblock Flow annealed importance sampling bootstrap.
\newblock In \emph{International Conference on Learning Representations (ICLR)}, 2023.

\bibitem[Neal(2001)]{neal2001annealed}
Radford~M Neal.
\newblock Annealed importance sampling.
\newblock \emph{Statistics and computing}, 11:\penalty0 125--139, 2001.

\bibitem[Neese(2012)]{orca}
F.~Neese.
\newblock The orca program system.
\newblock \emph{WIRES Comput. Molec. Sci.}, 2\penalty0 (1):\penalty0 73--78, 2012.
\newblock \doi{10.1002/wcms.81}.

\bibitem[Neklyudov et~al.(2023)Neklyudov, Severo, and Makhzani]{neklyudov2023action}
Kirill Neklyudov, Daniel Severo, and Alireza Makhzani.
\newblock {Action matching: A variational method for learning stochastic dynamics from samples}.
\newblock In \emph{International Conference on Machine Learning (ICML)}, 2023.

\bibitem[Nelson(2020)]{nelson2020dynamical}
Edward Nelson.
\newblock \emph{Dynamical theories of Brownian motion}, volume 106.
\newblock Princeton university press, 2020.

\bibitem[No{\'e} et~al.(2019)No{\'e}, Olsson, K{\"o}hler, and Wu]{noe2019boltzmann}
Frank No{\'e}, Simon Olsson, Jonas K{\"o}hler, and Hao Wu.
\newblock Boltzmann generators: Sampling equilibrium states of many-body systems with deep learning.
\newblock \emph{Science}, 365\penalty0 (6457):\penalty0 eaaw1147, 2019.

\bibitem[{\O}ksendal(2003)]{oksendal2003stochastic}
Bernt {\O}ksendal.
\newblock {Stochastic differential equations}.
\newblock In \emph{Stochastic Differential Equations}, pages 65--84. Springer, 2003.

\bibitem[Paszke et~al.(2019)Paszke, Gross, Massa, Lerer, Bradbury, Chanan, Killeen, Lin, Gimelshein, Antiga, Desmaison, Kopf, Yang, DeVito, Raison, Tejani, Chilamkurthy, Steiner, Fang, Bai, and Chintala]{paszke2019pytorch}
Adam Paszke, Sam Gross, Francisco Massa, Adam Lerer, James Bradbury, Gregory Chanan, Trevor Killeen, Zeming Lin, Natalia Gimelshein, Luca Antiga, Alban Desmaison, Andreas Kopf, Edward Yang, Zachary DeVito, Martin Raison, Alykhan Tejani, Sasank Chilamkurthy, Benoit Steiner, Lu~Fang, Junjie Bai, and Soumith Chintala.
\newblock Pytorch: An imperative style, high-performance deep learning library.
\newblock In \emph{Advances in Neural Information Processing Systems (NeurIPS)}, 2019.

\bibitem[Peluchetti(2022)]{peluchetti2022nondenoising}
Stefano Peluchetti.
\newblock {Non-Denoising forward-time diffusions}, 2022.
\newblock \url{https://openreview.net/forum?id=oVfIKuhqfC}.

\bibitem[Peluchetti(2023)]{peluchetti2023diffusion}
Stefano Peluchetti.
\newblock {Diffusion bridge mixture transports, Schr{\"o}dinger bridge problems and generative modeling}.
\newblock \emph{arXiv preprint arXiv:2304.00917}, 2023.

\bibitem[Peyr{\'e} and Cuturi(2019)]{peyre2019computational}
Gabriel Peyr{\'e} and Marco Cuturi.
\newblock {Computational optimal transport: With applications to data science}.
\newblock \emph{Foundations and Trends{\textregistered} in Machine Learning}, 11\penalty0 (5-6):\penalty0 355--607, 2019.

\bibitem[Phillips et~al.(2024)Phillips, Dau, Hutchinson, De~Bortoli, Deligiannidis, and Doucet]{phillips2024particle}
Angus Phillips, Hai-Dang Dau, Michael~John Hutchinson, Valentin De~Bortoli, George Deligiannidis, and Arnaud Doucet.
\newblock Particle denoising diffusion sampler.
\newblock In \emph{International Conference on Machine Learning (ICML)}, 2024.

\bibitem[Pracht et~al.(2024)Pracht, Grimme, Bannwarth, Bohle, Ehlert, Feldmann, Gorges, M{\"u}ller, Neudecker, Plett, Spicher, Steinbach, Wesołowski, and Zeller]{pracht2024crest}
Philipp Pracht, Stefan Grimme, Christoph Bannwarth, Fabian Bohle, Sebastian Ehlert, Gereon Feldmann, Johannes Gorges, Marcel M{\"u}ller, Tim Neudecker, Christoph Plett, Sebastian Spicher, Pit Steinbach, Patryk~A. Wesołowski, and Felix Zeller.
\newblock {CREST—A program for the exploration of low-energy molecular chemical space}.
\newblock \emph{The Journal of Chemical Physics}, 160\penalty0 (11), 2024.

\bibitem[Richter and Berner(2024)]{richterimproved}
Lorenz Richter and Julius Berner.
\newblock Improved sampling via learned diffusions.
\newblock In \emph{International Conference on Learning Representations (ICLR)}, 2024.

\bibitem[Riniker and Landrum(2015)]{riniker2015better}
Sereina Riniker and Gregory~A Landrum.
\newblock Better informed distance geometry: using what we know to improve conformation generation.
\newblock \emph{Journal of chemical information and modeling}, 55\penalty0 (12):\penalty0 2562--2574, 2015.

\bibitem[S{\"a}rkk{\"a} and Solin(2019)]{sarkka2019applied}
Simo S{\"a}rkk{\"a} and Arno Solin.
\newblock \emph{{Applied stochastic differential equations}}, volume~10.
\newblock Cambridge University Press, 2019.

\bibitem[Satorras et~al.(2021)Satorras, Hoogeboom, and Welling]{satorras2021n}
V{\i}ctor~Garcia Satorras, Emiel Hoogeboom, and Max Welling.
\newblock {$E(n)$ equivariant graph neural networks}.
\newblock In \emph{International Conference on Machine Learning (ICML)}, 2021.

\bibitem[Schr{\"o}dinger(1931)]{schr1931uber}
Erwin Schr{\"o}dinger.
\newblock \emph{{{\"U}ber die Umkehrung der Naturgesetze}}, volume~IX.
\newblock Sitzungsberichte der Preuss Akad. Wissen. Phys. Math. Klasse, Sonderausgabe, 1931.

\bibitem[Schr{\"o}dinger(1932)]{schrodinger1932theorie}
Erwin Schr{\"o}dinger.
\newblock Sur la th{\'e}orie relativiste de l'{\'e}lectron et l'interpr{\'e}tation de la m{\'e}canique quantique.
\newblock In \emph{Annales de l'institut Henri Poincar{\'e}}, 1932.

\bibitem[Shaul et~al.(2023)Shaul, Chen, Nickel, Le, and Lipman]{shaul2023kinetic}
Neta Shaul, Ricky T.~Q. Chen, Maximilian Nickel, Matthew Le, and Yaron Lipman.
\newblock {On kinetic optimal probability paths for generative models}.
\newblock In \emph{International Conference on Machine Learning (ICML)}, 2023.

\bibitem[Shi et~al.(2023)Shi, De~Bortoli, Campbell, and Doucet]{shi2023diffusion}
Yuyang Shi, Valentin De~Bortoli, Andrew Campbell, and Arnaud Doucet.
\newblock {Diffusion Schr{\"o}dinger bridge matching}.
\newblock In \emph{Advances in Neural Information Processing Systems (NeurIPS)}, 2023.

\bibitem[Somnath et~al.(2023)Somnath, Pariset, Hsieh, Martinez, Krause, and Bunne]{somnath2023aligned}
Vignesh~Ram Somnath, Matteo Pariset, Ya-Ping Hsieh, Maria~Rodriguez Martinez, Andreas Krause, and Charlotte Bunne.
\newblock {Aligned diffusion Schr{\"o}dinger bridges}.
\newblock In \emph{Conference on Uncertainty in Artificial Intelligence (UAI)}, 2023.

\bibitem[Song et~al.(2021)Song, Sohl-Dickstein, Kingma, Kumar, Ermon, and Poole]{song2021score}
Yang Song, Jascha Sohl-Dickstein, Diederik~P Kingma, Abhishek Kumar, Stefano Ermon, and Ben Poole.
\newblock {Score-based generative modeling through stochastic differential equations}.
\newblock In \emph{International Conference on Learning Representations (ICLR)}, 2021.

\bibitem[Spencer et~al.(2019)Spencer, Butterfoss, Edison, Eastwood, Whitelam, Kirshenbaum, and Zuckermann]{spencer2019stereochemistry}
Ryan~K Spencer, Glenn~L Butterfoss, John~R Edison, James~R Eastwood, Stephen Whitelam, Kent Kirshenbaum, and Ronald~N Zuckermann.
\newblock Stereochemistry of polypeptoid chain configurations.
\newblock \emph{Biopolymers}, 110\penalty0 (6):\penalty0 e23266, 2019.

\bibitem[Stimper et~al.(2022)Stimper, Sch{\"o}lkopf, and Hern{\'a}ndez-Lobato]{stimper2022resampling}
Vincent Stimper, Bernhard Sch{\"o}lkopf, and Jos{\'e}~Miguel Hern{\'a}ndez-Lobato.
\newblock Resampling base distributions of normalizing flows.
\newblock In \emph{International Conference on Artificial Intelligence and Statistics (AISTATS)}, 2022.

\bibitem[Todorov(2007)]{todorov2007linearly}
Emanuel Todorov.
\newblock {Linearly-solvable Markov decision problems}.
\newblock In \emph{Advances in Neural Information Processing Systems (NeurIPS)}, 2007.

\bibitem[Tuckerman(2023)]{tuckerman2023statistical}
Mark~E Tuckerman.
\newblock \emph{Statistical mechanics: theory and molecular simulation}.
\newblock Oxford university press, 2023.

\bibitem[Vargas et~al.(2021)Vargas, Thodoroff, Lawrence, and Lamacraft]{vargas2021solving}
Francisco Vargas, Pierre Thodoroff, Neil~D Lawrence, and Austen Lamacraft.
\newblock {Solving Schr\"odinger bridges via maximum likelihood}.
\newblock \emph{Entropy}, 2021.

\bibitem[Vargas et~al.(2023)Vargas, Grathwohl, and Doucet]{vargas2023denoising}
Francisco Vargas, Will Grathwohl, and Arnaud Doucet.
\newblock Denoising diffusion samplers.
\newblock In \emph{International Conference on Learning Representations (ICLR)}, 2023.

\bibitem[Vargas et~al.(2024)Vargas, Padhy, Blessing, and N{\"u}sken]{vargas2023transport}
Francisco Vargas, Shreyas Padhy, Denis Blessing, and Nikolas N{\"u}sken.
\newblock Transport meets variational inference: Controlled monte carlo diffusions.
\newblock In \emph{International Conference on Learning Representations (ICLR)}, 2024.

\bibitem[Wang et~al.(2021)Wang, Jiao, Xu, Wang, and Yang]{wang2021deep}
Gefei Wang, Yuling Jiao, Qian Xu, Yang Wang, and Can Yang.
\newblock {Deep generative learning via Schr{\"o}dinger bridge}.
\newblock In \emph{International Conference on Machine Learning (ICML)}, 2021.

\bibitem[Weininger(1988)]{weininger1988smiles}
David Weininger.
\newblock Smiles, a chemical language and information system. 1. introduction to methodology and encoding rules.
\newblock \emph{Journal of chemical information and computer sciences}, 28\penalty0 (1):\penalty0 31--36, 1988.

\bibitem[Wu et~al.(2020)Wu, K{\"o}hler, and No{\'e}]{wu2020stochastic}
Hao Wu, Jonas K{\"o}hler, and Frank No{\'e}.
\newblock Stochastic normalizing flows.
\newblock In \emph{Advances in Neural Information Processing Systems (NeurIPS)}, 2020.

\bibitem[Zhang and Chen(2022)]{zhang2022path}
Qinsheng Zhang and Yongxin Chen.
\newblock Path integral sampler: A stochastic control approach for sampling.
\newblock In \emph{International Conference on Learning Representations (ICLR)}, 2022.

\end{thebibliography}

\newpage 

\appendix
\addtocontents{toc}{\protect\setcounter{tocdepth}{2}}
\tableofcontents

\section{Additional Preliminary} \label{sec:additional-prelim}

\subsection{Stochastic Optimal Control (SOC)} \label{appendix:soc}

In this subsection, we expand \Cref{sec:2} with details.
Recall the SOC problem in \eqref{eq:soc}:
\begin{subequations} \label{eq:soc-appendix}
    \begin{align}
    &\min_{u} \E_{X \sim p^u} \br{
        \int \tfrac{1}{2}\norm{u_t (X_t)}^2 \dt
        + g(X_1)
    } 
     \\
     \text{s.t. }
    \rd X_t &= \br{f_t (X_t) + \sigma_t u_t (X_t)} \dt + \sigma_t \rd W_t, ~~ X_0 \sim \mu.
    \label{eq:diffusion_sampler-appendix}
\end{align} 
\end{subequations} 
Similar to \eqref{eq:u*}, the optimal control to \eqref{eq:soc-appendix} can be characterized through an optimality equation:
\begin{equation} \label{eq:hjb}
    u_t^\star(x) = - \sigma_t \nabla V_t(x),
    \quad \text{ where } \quad
    V_t(x) = - \log \int \pbase_{1|t}(y|x) e^{-V_1(y)} \rd y,\
    \quad V_1(x) = {g(x)}
\end{equation}
is the value function known to satisfy the Hamilton–Jacobi–Bellman (HJB) equation \citep{bellman1954theory}. We provide further characterization below.

\vspace{3pt}
\textbf{Optimal distribution}$\quad$
The optimization problem in \eqref{eq:soc-appendix} is known analytically. Specifically, notice that the entropy-regularized objective in \eqref{eq:soc-appendix} can be reformulated as:
\begin{align*}
    &\mathrel{\hphantom{=}} 
    \KL(p(X) || \pbase(X)) + \E_{p(X)}\br{g(X_1)}  \\
    &= \KL\pr{p(X_0) || \pbase(X_0)} + \E_{p(X_0)}\Bigl[ \KL\bigl(p(X|X_0) || \pbase(X|X_0)\bigr) + \E_{p(X|X_0)}\br{
        g(X_1)}
    \Bigr] \\
    &= \KL\pr{p(X_0) || \pbase(X_0)} + \E_{p(X_0)}\Bigl[
        \KL\bigl(p(X|X_0) || \pbase(X|X_0)e^{-g(X_1)}\bigr)
    \Bigr] \numberthis \label{eq:a}
\end{align*}
where we shorthand $X \equiv X_{[0,1]}$ and denote $\pbase$ the base distribution induced by \eqref{eq:diffusion_sampler-appendix} with $u := 0$, \ie the uncontrolled distribution.
Minimizing \eqref{eq:a} w.r.t. $p$ yields%
\begin{equation}\label{eq:p*|0}
    p^\star(X|X_0) = \frac{1}{Z(X_0)} \pbase(X|X_0)e^{-g(X_1)}, \qquad 
    p^\star(X_0) = \pbase(X_0)
\end{equation}
where $Z(X_0)$ is the normalization term defined by
\begin{equation}\label{eq:Z}
    Z(X_0) := \int \pbase(X|X_0)e^{-g(X_1)} \rd X 
    = \int \pbase(X_1|X_0)e^{-g(X_1)} \rd X_1 %
\end{equation}
which is exactly $e^{-V(X_0)}$ due to \eqref{eq:hjb}. Combing \eqref{eq:p*|0} and \eqref{eq:Z} leads to the the optimal distribution in~\eqref{eq:soc-p*}, which we restate below for completeness:
\begin{equation}\label{eq:soc-p*-app}
    p^{\star}(X) = \pbase(X) e^{-g(X_1) + V_0(X_0)}
    \implies
    p^{\star}(X_0, X_1) = \pbase(X_0, X_1) e^{-g(X_1) + V_0(X_0)}
\end{equation}

\vspace{3pt}
\textbf{Adjoint Matching (AM)}$\quad$
Scalable computational methods for solving \eqref{eq:soc-appendix} have been challenging, as
naively back-propagating through \eqref{eq:soc-appendix} induces prohibitively high computational cost. Instead, Adjoint Matching \citep{domingo2024adjoint} employs a matching-based objective, named Adjoint Matching (AM):
\begin{subequations} \label{eq:am-app}
    \begin{align}
        u^\star = &\argmin_u \E_{X\sim p^{\bar u}} \br{ \norm{ u_t(X_t) + \sigma_t {a}_t }^2}, 
        \qquad \bar{u} = \texttt{stopgrad}(u), \label{eq:am-app-2} \\
        &\text{where} \quad - \rd a_t = {a_t \cdot \nabla f_t (X_t)} \dt , \quad a_1 = \nabla g(X_1) 
        \label{eq:lean-adjoint}
    \end{align}\end{subequations}
is the backward dynamics of the (lean) adjoint state $a_t \equiv a(t; X_{[t,1]})$.
It has been proven that the unique critical point of \eqref{eq:am-app} is the optimal control $u^\star$, implying a new characteristics of the optimal control $u^\star$ using the adjoint state:
\begin{equation}\label{eq:u*-a}
    u_t^\star(x) = 
    - \sigma_t \E_{p^{\star}} [a_t | X_t = x].
\end{equation}

\vspace{3pt}
\textbf{Adjoint Sampling (AS)}$\quad$Recently, \citet{AS} introduced an adaptation of AM tailored to sampling Boltzmann distribution $\nu(x) \propto e^{-E(x)}$ by considering
\begin{equation}\label{eq:as-cond}
    f_t := 0, \qquad \mu(x) := \delta_0(x), \qquad g(x) := \log \frac{\pbase_1(x)}{\nu(x)}.
\end{equation}
That is, AS considers the following SOC problem with a degenerate base drift, a Dirac delta prior, and a specific instantiation of the terminal cost $g(x) := \log \frac{\pbase_1(x)}{\nu(x)}$:
\begin{equation}\label{eq:as-soc-appendix}
    \min_{u} \E_{X \sim p^u}\!\br{
        \int_0^1 \tfrac{1}{2}\norm{u_t (X_t)}^2 \dt
        + \log \frac{\pbase_1(X_1)}{\nu(X_1)}
    }
    ~~\text{s.t. }
    \rd X_t = \sigma_t u_t (X_t) \dt + \sigma_t \rd W_t, ~~ X_0 {=} 0.
\end{equation}%
Notably, this SOC problem \eqref{eq:as-soc-appendix} admits a simplified adjoint state $a_t$ and a degenerate initial value function $V_0(x)$:
\begin{align}
    a_t \stackrel{\eqref{eq:lean-adjoint}}{=}& \nabla g(X_1) \stackrel{\eqref{eq:as-cond}}{=} \nabla \log \pbase_1(X_1) + \nabla E(X_1) \qquad \forall t \in [0,1] \label{eq:as-a}
    \\
    V_0(x) \stackrel{\eqref{eq:as-cond}}{=}& - \log \int \pbase_1(y) \frac{\nu(y)}{\pbase_1(y)} \rd y = -\log 1 = 0, \label{eq:as-v}
\end{align}
which further implies that the optimal distribution $p^\star$ is a reciprocal process \citep{leonard2014reciprocal}:
\begin{equation}\label{eq:as-p*}
    p^\star(X) \stackrel{\eqref{eq:as-v}}{=} \pbase(X) e^{-V_1(X_1)} \stackrel{\eqref{eq:as-cond}}{=} \pbase(X) \frac{\nu(X_1)}{\pbase_1(X_1)} 
    = \pbase(X | X_1) p^\star(X_1).
\end{equation}
Combining the adjoint characteristics of the optimal control \eqref{eq:u*-a} with the simplified adjoint state $a_t$ in~\eqref{eq:as-a} and optimal distribution $p^\star$ \eqref{eq:as-p*} motivates the following \textit{Reciprocal Adjoint Matching (RAM)} objective used in AS, where the unique critical point remains to be the optimal control $u^\star$ in \eqref{eq:hjb}.
\begin{equation}\label{eq:ram}
    u^\star
    = \argmin_u \E_{\pbase_{t|1} p_{1}^{\bar u}} \br{\norm{
        u_t(X_t) + \sigma_t \pr{\nabla E + \nabla \log \pbase_1}(X_1)
    }^2},
    \quad \bar u = {\normalfont \texttt{stopgrad}}(u).
\end{equation}

\vspace{3pt}
\textbf{Remark on reciprocal representation}$\quad$
The reciprocal representation of the optimal-controlled distribution $p^\star$ in \eqref{eq:as-p*} extends to general SOC problems \eqref{eq:soc-appendix} with non-trivial base drifts and source distributions. Specifically, any  optimal-controlled distribution that solves \eqref{eq:soc-appendix} can be factorized by
\begin{equation}
    p^\star(X) = \pbase(X | X_0, X_1) p^\star(X_0, X_1).
\end{equation}
We leave a formal statement in \Cref{thm:soc-sb} and \Cref{coro:soc-reciprocal}.

\vspace{3pt}
\textbf{AS with linear base drift and Gaussian prior (\Cref{fig:compare})}$\quad$
Here, we discuss an alternative instantiation of AM for sampling with linear base drift and Gaussian prior, which reproduces the leftmost plot in \Cref{fig:compare}.
Consider
\begin{equation}\label{eq:as-cond-vpsde}
    f_t(x) := - \frac{1}{2} \beta_t x, \qquad \mu(x) := \calN(x; 0, I), \qquad \sigma_t := \sqrt{\beta_t}, \qquad g(x) := \log \frac{\pbase_1(x)}{\nu(x)}.
\end{equation}
where $\beta_t$ is chosen such that $(f_t, \mu, \sigma_t)$ fulfill the memoryless condition. For instance, \Cref{fig:compare} adopts the VPSDE \citep{song2021score} setup:
\begin{equation}\label{eq:beta-vpsde}
    \beta_t = (1-t) \beta_{\max} + t \beta_{\min},
     \qquad \beta_{\max} = 20, \qquad \beta_{\min} = 0.1.
\end{equation}
Similar to \eqref{eq:as-a}, the resulting SOC problem admits a simplified adjoint state $a_t$:
\begin{equation}\label{eq:as-a-vpsde}
    a_t \stackrel{\eqref{eq:lean-adjoint}}{=} \kappa_t \cdot \nabla g(X_1) \stackrel{\eqref{eq:as-cond-vpsde}}{=} \kappa_t \cdot (\nabla \log \pbase_1(X_1) + \nabla E(X_1)), \qquad \kappa_t := e^{-\frac{1}{2} \int_t^1 \beta_\tau \rd \tau} \stackrel{\eqref{eq:beta-vpsde}}{=} e^{-\frac{1}{4}(1-t)(\beta_t + \beta_1)}
\end{equation}
and the RAM objective becomes
\begin{equation}\label{eq:ram-vpsde}
    u^\star
    = \argmin_u \E_{\pbase_{t|0,1} p_{0,1}^{\bar u}} \br{\norm{
        u_t(X_t) + \sigma_t \kappa_t \pr{\nabla E + \nabla \log \pbase_1}(X_1)
    }^2},
    \quad \bar u = {\normalfont \texttt{stopgrad}}(u).
\end{equation}
Note that $\pbase_{t|0,1}$ can be sampled analytically:
\begin{equation}
    \pbase_{t|0,1}(X_t| X_0, X_1) \stackrel{\eqref{eq:as-cond-vpsde}}{=} 
        \calN(X_t; 
            \frac{\bar\kappa_t(1- \kappa_t^2)}{1- \bar\kappa_1^2} X_0 + 
            \frac{\kappa_t(1- \bar\kappa_t^2)}{1- \bar\kappa_1^2} X_1,
            \frac{(1- \kappa_t^2) (1- \bar\kappa_t^2)}{1- \bar\kappa_1^2} I
        ),
\end{equation}
where $\kappa_t$ is defined in \eqref{eq:as-a-vpsde} and $\bar\kappa_t := e^{-\frac{1}{2} \int_0^t \beta_\tau \rd \tau} \stackrel{\eqref{eq:beta-vpsde}}{=} e^{-\frac{1}{4} t(\beta_t + \beta_0)}$.

\subsection{Schrödinger Bridge (SB)} \label{sec:sb}

In this subsection, we provide additional clarification on SB and specifically the derivation of \eqref{eq:bm}.
Recall the optimality equations of SB in \eqref{eq:u*}:
\begin{subequations}
    \label{eq:u*-appendix}
    \begin{empheq}[left={\text{\normalfont
        $u^\star_t(x) = \sigma_t \nabla \log \varphi_t(x)$, \quad where 
    }\empheqlbrace}]{align}
        \varphi_t(x) = \inttt \pbase_{1|t}(y|x) \varphi_1(y) \rd y,
        & \quad \varphi_0(x)\harphi_0(x) = \mu(x) 
        \label{eq:psi-pde-appendix} \\
        \harphi_t(x) = \inttt \pbase_{t|0}(x|y) \harphi_0(y) \rd y,
        & \quad \varphi_1(x)\harphi_1(x) = \nu(x) 
        \label{eq:hpsi-pde-appendix}
    \end{empheq}
\end{subequations}
Just like how the value function of an SOC problem fully characterizes the optimal control and its corresponding optimal distribution, so does the SB potential $\varphi_t(x)$:
\begin{equation}\label{eq:sb-p*-appendix}
    p^\star(X) = \pbase(X) \frac{\varphi_1(X_1)}{\varphi_0(X_0)} = \pbase(X|X_0) \varphi_1(X_1) \harphi_0(X_0),
\end{equation}
where the last equality is due to $\pbase(X) = \pbase(X|X_0) \mu(X_0)$ and then invoking \eqref{eq:psi-pde-appendix}.
Note that~\eqref{eq:sb-p*-appendix} recovers~\eqref{eq:sb-p*} by marginalizing over $t \in (0,1)$.
Due to the construction of $\varphi_t(x)$ and $\harphi_t(x)$ in \eqref{eq:u*-appendix}, the marginal optimal distribution admits a strikingly simple factorization:
\begin{align*}
    p_t^\star(x) 
    =& \int \pbase(X, X_t = x|X_0) \varphi_1(X_1) \harphi_0(X_0) \rd X \\
    =& \int\int \pbase(X_1|X_t = x) \pbase(X_t = x |X_0) \varphi_1(X_1) \harphi_0(X_0) \rd X_0 \rd X_1 \\
    =& \pr{\int \pbase(X_t = x |X_0) \harphi_0(X_0) \rd X_0} \pr{\int \pbase(X_1|X_t = x) \varphi_1(X_1) \rd X_1} \\
    =& \harphi_t(x)\varphi_t(x) , \numberthis \label{eq:sb-factorization}
\end{align*}
or, more generally,
\begin{equation}\label{eq:sb-factorization2}
    p_{s,t}^\star(y, x) = \pbase_{t|s}(x|y) \harphi_s(y)\varphi_t(x) , \qquad s \le t.
\end{equation}

\vspace{3pt}
\textbf{Derivation of \eqref{eq:bm}}$\quad$
We now provide a simpler derivation of \eqref{eq:bm} compared to its original derivation based on path measure theory \citep{shi2023diffusion}:
\begin{align*}
    \nabla \log \harphi_t(x) 
    \stackrel{\eqref{eq:hpsi-pde-appendix}}{=}& \frac{1}{\harphi_t(x)} \nabla_x \int \pbase_{t|0}(x|y) \harphi_0(y) \rd y \\
    =& {\frac{1}{\harphi_t(x)}} \int \nabla_{x} \log \pbase_{t|0}(x|y) {\pbase_{t|0}(x|y) \harphi_0(y)} \rd y \\
    =& \int \nabla_{x} \log \pbase_{t|0}(x|y) {p^\star_{0|t}(y|x)} \rd y, \numberthis \label{eq:bm-appendix}
\end{align*}
where the last equality follows by 
\begin{equation*}
    p^\star_{0|t}(y|x) \stackrel{\eqref{eq:sb-factorization}}{=} \frac{p^\star_{0,t}(y,x)}{\harphi_t(x)\varphi_t(x)}
    \stackrel{\eqref{eq:sb-factorization2}}{=} \frac{\pbase_{t|0}(x|y) \harphi_0(y)\varphi_t(x)}{\harphi_t(x)\varphi_t(x)}
    = \frac{\pbase_{t|0}(x|y) \harphi_0(y)}{\harphi_t(x)}.
\end{equation*}
\Cref{eq:bm-appendix} implies a matching-based variational formulation of $\nabla \log \harphi_t(\cdot)$---also known as the \textit{bridge matching} objective in data-driven SB \citep{shi2023diffusion,liu2023i2sb}.
\begin{equation}\label{eq:bm2}
    \nabla \log \harphi_t
    = \argmin_h \E_{p_{0,t}^{\star}} \br{ \norm{
        h_t (X_t) - \nabla_{x_t} \log \pbase(X_t | X_0)
    }^2}.
\end{equation}
\Cref{eq:bm2} recovers \eqref{eq:bm} at $t=1$.

\section{Proofs} \label{sec:proof}

\subsection{Preliminary and Additional Theoretical Results}

\begin{lemma}[It{\^o} lemma \citep{ito1951stochastic}] \label{lemma:ito}
    Let $X_t$ be the solution to the It\^o SDE:
    \begin{equation*}
    \rd X_t = f_t (X_t) \dt + \sigma_t \rd W_t.
    \end{equation*}
    Then, the stochastic process $v_t(X_t)$, where $v \in C^{1,2}([0,1], \R^d)$, is also an It{\^o} process:
    \begin{equation} \label{eq:ito}
    \rd v_t (X_t) =
        \br{ \partial_t{v_t (X_t)} + \nabla v_t (X_t) \cdot f + \frac{1}{2} \sigma_t^2 \Delta v_t (X_t) } \dt
        + \sigma_t \nabla v_t (X_t) \cdot  \rd W_t.
    \end{equation}
\end{lemma}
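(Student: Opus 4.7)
The plan is to derive the identity as the continuum limit of a discrete Taylor expansion along a partition of $[0,t]$, exploiting the Itô quadratic-variation rule that $(\rd W_s)(\rd W_s)^\top = I \, \rd s$ to second order. Concretely, I would fix a partition $0 = t_0 < t_1 < \cdots < t_N = t$ with mesh $\Delta t_k := t_{k+1} - t_k$, write the telescoping sum $v_t(X_t) - v_0(X_0) = \sum_{k} \br{v_{t_{k+1}}(X_{t_{k+1}}) - v_{t_k}(X_{t_k})}$, and then expand each summand up to second order in space and first order in time, which is justified by $v \in C^{1,2}([0,1], \R^d)$.

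Substituting the SDE increment $\Delta X_k := X_{t_{k+1}} - X_{t_k} = f_{t_k}(X_{t_k})\,\Delta t_k + \sigma_{t_k}\, \Delta W_k + r_k$ (with negligible remainder $r_k$) into the Taylor expansion produces four surviving contributions: the temporal derivative $\partial_t v_{t_k}(X_{t_k}) \Delta t_k$; the drift piece $\nabla v_{t_k}(X_{t_k}) \cdot f_{t_k}(X_{t_k})\, \Delta t_k$; the martingale piece $\sigma_{t_k}\, \nabla v_{t_k}(X_{t_k}) \cdot \Delta W_k$; and the quadratic-variation piece $\tfrac{1}{2}\sigma_{t_k}^2\, (\Delta W_k)^\top \nabla^2 v_{t_k}(X_{t_k})\, \Delta W_k$, whose expectation equals $\tfrac{1}{2}\sigma_{t_k}^2\, \Delta v_{t_k}(X_{t_k}) \Delta t_k$. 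The cross terms $\nabla v \cdot \sigma f\, \Delta t\, \Delta W$ and $(f\Delta t)^\top \nabla^2 v\, (f \Delta t)$, together with the $O(\norm{\Delta X_k}^3)$ Taylor remainder, are all $o(\Delta t_k)$ in the appropriate sense and vanish upon summation.

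Finally, I would take the mesh size to zero: the deterministic drift sum converges pathwise to the Riemann integral $\int_0^t (\partial_t v_s + \nabla v_s \cdot f_s + \tfrac{1}{2}\sigma_s^2 \Delta v_s)(X_s) \, \rd s$ by continuity of the integrand; the Brownian-increment sum converges in $L^2$ to the Itô integral $\int_0^t \sigma_s \nabla v_s(X_s) \cdot \rd W_s$ by the Itô isometry applied to simple adapted integrands approximating the continuous integrand; and the quadratic-variation sum converges in probability to $\tfrac{1}{2}\int_0^t \sigma_s^2 \Delta v_s(X_s) \, \rd s$. The main obstacle is this last convergence: one must show that the random sum $\tfrac{1}{2}\sum_k \sigma_{t_k}^2 (\Delta W_k)^\top \nabla^2 v_{t_k}(X_{t_k})\, \Delta W_k$ concentrates around its mean. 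This is handled by an $L^2$ variance estimate that exploits the independence of disjoint Brownian increments and the fact that $\E\br{\pr{(\Delta W_k)^\top A\, \Delta W_k - \Tr(A) \Delta t_k}^2} = O(\Delta t_k^2)$ for deterministic $A$, applied conditionally on the filtration at $t_k$. If $v$ and its derivatives are not uniformly bounded, a standard localization via stopping times $\tau_M := \inf\{s : \norm{X_s} \ge M\}$ reduces the problem to the bounded setting, after which one passes $M \to \infty$ using non-explosion of $X_t$.
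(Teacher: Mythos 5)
The paper does not prove this lemma; it states It\^o's lemma as a classical result, citing \citep{ito1951stochastic}, and uses it as a black box in the proofs of Theorems~\ref{thm:am-ipf} and~\ref{thm:cm-ipf}. Your sketch is the standard textbook derivation---partition, telescoping sum, second-order Taylor expansion, substitution of the SDE increment, identification of the quadratic-variation term as the extra drift, and passage to the mesh-zero limit with an $L^2$ concentration argument for the quadratic sum and localization by stopping times to remove boundedness assumptions. This is a correct and complete outline of the classical proof; there is nothing in the paper to compare it against, since the paper simply invokes the result.

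Two small points worth tightening if you were to write this out fully. First, the statement in the paper has $v$ scalar-valued (the expression $\Delta v_t$ and the dot products $\nabla v_t \cdot f$ only make sense if $v_t(X_t) \in \R$, despite the notation $C^{1,2}([0,1],\R^d)$, which presumably describes the domain regularity rather than the codomain); your argument implicitly treats $v$ as scalar, which is the right reading, but it is worth saying so explicitly. Second, the noise coefficient $\sigma_t$ here is a deterministic scalar function of time, so the quadratic-variation matrix is $\sigma_t^2 I$ and the trace collapses to $\sigma_t^2 \Delta v_t$; your $(\Delta W_k)^\top \nabla^2 v\, \Delta W_k$ step correctly produces $\sigma_{t_k}^2 \Tr(\nabla^2 v_{t_k}) \Delta t_k = \sigma_{t_k}^2 \Delta v_{t_k}\, \Delta t_k$ in the limit, so this checks out.
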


\begin{lemma}[Laplacian trick] \label{lemma:sbp}
    For any twice-differentiable function $\pi$ such that $\pi(x)\neq 0$, it holds that 
    \begin{equation}\label{eq:lap-trick}
        \tfrac{1}{\pi(x)} \Delta \pi(x) = \norm{\nabla \log \pi(x)}^2 + \Delta \log \pi(x)
    \end{equation}
\end{lemma}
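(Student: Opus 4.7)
The plan is to prove the identity by a direct computation using logarithmic differentiation, then invoking the product rule for divergence. The key observation is that both sides of \eqref{eq:lap-trick} involve only first and second derivatives of $\pi$ and $\log\pi$, so the identity should follow from pointwise calculus without any additional structure.

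First, I would start from the chain rule for the logarithm, which gives $\nabla \log \pi(x) = \nabla \pi(x) / \pi(x)$, and rearrange it to $\nabla \pi(x) = \pi(x)\, \nabla \log \pi(x)$. This rewriting is well-defined since $\pi(x) \neq 0$ by assumption. Next, I would take the divergence of both sides: the left-hand side becomes $\nabla \cdot \nabla \pi(x) = \Delta \pi(x)$, while the right-hand side is a divergence of a product of a scalar and a vector field, to which the standard product rule applies:
\begin{equation*}
\nabla \cdot \bigl(\pi(x)\, \nabla \log \pi(x)\bigr)
= \nabla \pi(x) \cdot \nabla \log \pi(x) + \pi(x)\, \Delta \log \pi(x).
\end{equation*}
Substituting $\nabla \pi(x) = \pi(x)\, \nabla \log \pi(x)$ into the first term on the right gives $\pi(x)\, \norm{\nabla \log \pi(x)}^2$, so altogether
\begin{equation*}
\Delta \pi(x) = \pi(x)\,\bigl(\norm{\nabla \log \pi(x)}^2 + \Delta \log \pi(x)\bigr).
\end{equation*}
Dividing through by $\pi(x)$, which is nonzero by hypothesis, produces \eqref{eq:lap-trick} exactly.

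I do not anticipate a genuine obstacle here: the only subtlety is ensuring that $\pi$ is $C^2$ and nowhere vanishing on the domain where the identity is asserted, which is precisely what the lemma's hypotheses guarantee. This makes the argument self-contained and essentially a one-line manipulation, suitable for use as a pointwise identity throughout the proofs in \Cref{sec:proof}.
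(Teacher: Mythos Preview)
Your proof is correct and follows essentially the same approach as the paper: both rewrite $\nabla\pi = \pi\,\nabla\log\pi$, take the divergence using the product rule, and then factor out $\pi$. The paper's version is terser but the logical content is identical.
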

\begin{proof}
    \begin{align*}
     \Delta \pi(x)
    &= \nabla \cdot \nabla \pi(x) \\
    &= \nabla \cdot \pr{ \pi(x) \nabla \log \pi(x) } \\
    &= { \nabla \pi(x) \cdot \nabla \log \pi(x) + \pi(x) \Delta \log \pi(x)} \\
    &= \pi(x) \pr{\norm{\nabla \log \pi(x)}^2 + \Delta \log \pi(x)}
    \end{align*}
\end{proof}

\begin{theorem}[SB characteristics of SOC] \label{thm:soc-sb}
    The optimal distribution $p^\star$ of the SOC problem in \eqref{eq:soc-appendix} is also the solution to the following SB problem:
    \begin{equation}\label{eq:sb-app}
        \argmin_{ p} \big\{ \KL(p || \pbase): p_0 = \mu, \quad p_1 = p^\star_1 \big\}.
    \end{equation}
\end{theorem}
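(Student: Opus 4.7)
The approach is to recognize that the SOC objective decomposes into a path-space KL divergence plus a terminal expectation, so that once the terminal marginal is fixed by the SB constraint, the terminal term becomes a constant and the SOC and SB minimizers must coincide. The first step is to apply Girsanov's theorem to the controlled SDE in \eqref{eq:diffusion_sampler-appendix}. Because $\pbase$ is the path distribution induced by the same SDE with $u:=0$, for any admissible control $u$ the path distribution $p^u$ is absolutely continuous with respect to $\pbase$ and
\begin{align*}
    \KL(p^u || \pbase) = \E_{X \sim p^u}\br{\int_0^1 \tfrac{1}{2}\norm{u_t(X_t)}^2 \dt}.
\end{align*}
This identifies the running cost in \eqref{eq:soc-appendix} as the path-space KL against $\pbase$.

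Substituting this into the SOC objective rewrites \eqref{eq:soc-appendix} as
\begin{align*}
    \min_{p} \br{\KL(p || \pbase) + \E_{X_1 \sim p_1}[g(X_1)]} \quad \text{s.t. } p_0 = \mu,
\end{align*}
where the minimization ranges over path distributions $p$ with $p_0 = \mu$. Every such $p$ with finite KL against $\pbase$ can be represented as some $p^u$ via Girsanov, so optimizing over $u$ and over $p$ are equivalent. Now for any $p$ that additionally satisfies the SB constraint $p_1 = p^\star_1$, the terminal cost equals $\E_{p^\star_1}[g]$, a constant independent of $p$. Combined with the optimality of $p^\star$ for the SOC problem,
\begin{align*}
    \KL(p^\star || \pbase) + \E_{p^\star_1}[g] \le \KL(p || \pbase) + \E_{p^\star_1}[g],
\end{align*}
and thus $\KL(p^\star || \pbase) \le \KL(p || \pbase)$. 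Since $p^\star$ itself lies in the SB feasible set $\{p : p_0 = \mu, \, p_1 = p^\star_1\}$, it realizes the minimum and solves \eqref{eq:sb-app}.

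The only delicate step is ensuring that the bijection between admissible controls and feasible path distributions (via Girsanov) covers all $p$ relevant to the SB minimization. This is handled by restricting attention to $p \ll \pbase$: any $p$ not absolutely continuous w.r.t.\ $\pbase$ has $\KL(p||\pbase) = \infty$ and hence is dominated by $p^\star$ in both objectives, so nothing is lost. Beyond this standard regularity check, the proof is essentially a one-line calculation once the Girsanov reformulation is in place.
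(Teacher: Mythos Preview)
Your argument is correct and takes a genuinely different route from the paper. You use a direct variational comparison: once the SOC objective is rewritten as $\KL(p\,\|\,\pbase) + \E_{p_1}[g]$ over path measures with $p_0=\mu$, restricting to the SB feasible set $p_1 = p^\star_1$ makes the terminal term a constant, so the SOC minimizer $p^\star$—which lies in that set—automatically minimizes $\KL(\cdot\,\|\,\pbase)$ there. This is clean and essentially elementary. The paper instead proceeds constructively: it defines $\varphi_t := e^{-V_t}$ and $\harphi_t := e^{V_t} p^\star_t$ from the SOC value function, and verifies that this pair satisfies the SB optimality system \eqref{eq:u*-appendix} with boundaries $(\mu, p^\star_1)$, hence $p^\star$ is the SB solution by appeal to those optimality conditions.

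What each buys: your approach avoids any PDE or potential calculations and gives the result in one stroke, but it does not identify the SB potentials. The paper's approach is more laborious but yields the explicit identification $\varphi_t = e^{-V_t}$, $\harphi_t = p^\star_t / \varphi_t$, which is precisely what is reused downstream—e.g., in deriving \Cref{coro:soc-reciprocal} and in the proof of \Cref{thm:cm-ipf}, where the decomposition $p^{(k)}_t = \varphi_t \harphi_t$ is invoked. So the paper's route is doing double duty, while yours proves the stated theorem with minimal machinery.
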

\begin{proof}
We aim to show that there exist a transform such that the SOC's optimality equation \eqref{eq:hjb} can be reinterpreted as the ones for SB \eqref{eq:u*-appendix}.
To this end, consider 
\begin{equation}\label{eq:hc}
    \varphi_t(x) := e^{-V_t(x)},
    \qquad 
    \harphi_t(x) := e^{V_t(x)} p_t^\star(x).
\end{equation}
One can verify that the value function $V_t(x)$ defined in \eqref{eq:hjb} can be rewritten as
\begin{equation*}
    \varphi_t(x) = \int \pbase_{1|t}(y|x) \varphi_1(y) \rd y.
\end{equation*}
On the other hand, we can expand $\harphi_t(x)$ by
\begin{align*}
    \harphi_t(x) 
    &= e^{V_t(x)} \int \myellow{p^\star(X|X_t = x)} \rd X \\
    &= e^{V_t(x)} \int \myellow{\pbase(X_1|X_t = x) \pbase(X_t=x, X_0) e^{-V_1(X_1) + V_0(X_0)}} \rd X_1 \rd X_0
        \shortnote{by \eqref{eq:soc-p*-app}} \\
    &= e^{V_t(x)} \int {\pbase(X_t=x, X_0)} e^{-V_t(x) + V_0(X_0)} \rd X_0 
        \shortnote{by \eqref{eq:hjb}} \\
    &= \int {\pbase(X_t = x|X_0) \mu(X_0) e^{V_0(X_0)}} \rd X_0 \\
    &= \int {\pbase_{t|0}(x|y) \harphi_0(y) } \rd y. 
    \shortnote{by \eqref{eq:hc}}
\end{align*}
Combined, the optimality equation \eqref{eq:hjb} for the SOC problem can be rewritten equivalently as
\begin{subequations}
    \begin{empheq}[left={\text{\normalfont
        $u_t^\star(x) = \sigma_t \nabla \log \varphi_t(x)$, \quad where 
    }\empheqlbrace}]{align*}
        \varphi_t(x) = \int \pbase_{1|t}(y|x) \varphi_1(y) \rd y,
        & \quad \varphi_0(x)\harphi_0(x) = \mu(x),\\
        \harphi_t(x) = \int \pbase_{t|0}(x|y) \harphi_0(y) \rd y,
        & \quad \varphi_1(x)\harphi_1(x) = p^\star_1(x) .
    \end{empheq}
\end{subequations}
We conclude that $p^\star$ indeed solves \eqref{eq:sb-app}.
\end{proof}
\begin{corollary}[Reciprocal process of the SOC problem]\label{coro:soc-reciprocal}
    The optimal distribution $p^\star$ of the SOC problem in \eqref{eq:soc-appendix} is a reciprocal process, \ie
    \begin{equation}
        p^\star(X) = \pbase(X | X_0, X_1) p^\star(X_0, X_1).
    \end{equation}
\end{corollary}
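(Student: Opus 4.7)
The plan is to leverage \Cref{thm:soc-sb} to import the known structural form of Schrödinger bridge solutions into the SOC setting, after which the reciprocal factorization falls out by a direct conditioning argument. Since the corollary is a structural consequence of SB theory rather than a new optimization result, the work is essentially bookkeeping with conditional densities.

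First, I would invoke \Cref{thm:soc-sb} to conclude that $p^\star$ solves a Schrödinger bridge problem with marginals $\mu$ and $p_1^\star$ against the base path measure $\pbase$. This is the only place where the SOC structure is used; after this step everything is an SB computation.

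Next, I would use the explicit SB factorization recalled in \eqref{eq:sb-p*-appendix}, which states $p^\star(X) = \pbase(X \mid X_0)\,\varphi_1(X_1)\,\harphi_0(X_0)$ for appropriate potentials $(\varphi,\harphi)$. Marginalizing out $X_{(0,1)}$ and using the Markov property of $\pbase$ gives
\begin{align*}
p^\star(X_0, X_1) \;=\; \pbase(X_1 \mid X_0)\,\varphi_1(X_1)\,\harphi_0(X_0).
\end{align*}
Then the base process Markov identity $\pbase(X \mid X_0) = \pbase(X \mid X_0, X_1)\,\pbase(X_1 \mid X_0)$ allows me to rewrite the SB factorization as
\begin{align*}
p^\star(X) \;=\; \pbase(X \mid X_0, X_1)\,\pbase(X_1 \mid X_0)\,\varphi_1(X_1)\,\harphi_0(X_0) \;=\; \pbase(X \mid X_0, X_1)\, p^\star(X_0, X_1),
\end{align*}
which is precisely the reciprocal property.

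The only subtlety to watch for is making sure the potentials $(\varphi_1, \harphi_0)$ associated with the SB problem in \Cref{thm:soc-sb} are well-defined and nonvanishing on the support of interest so that all conditional densities make sense; this is handled by the construction in the proof of \Cref{thm:soc-sb}, where $\varphi_t = e^{-V_t}$ and $\harphi_t = e^{V_t} p_t^\star$. No new analytical difficulty is expected, so I view this step as the ``main obstacle'' only in the trivial sense of needing the SB solution structure \eqref{eq:sb-p*-appendix} in hand, which is already established.
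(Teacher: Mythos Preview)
Your proposal is correct and matches the paper's intended approach: the corollary is stated immediately after \Cref{thm:soc-sb} without a separate proof, so it is meant to follow from the SB reinterpretation together with the factorization \eqref{eq:sb-p*-appendix}, exactly as you outline. One could shortcut the argument by working directly from \eqref{eq:soc-p*-app}---the boundary-only tilt $e^{-V_1(X_1)+V_0(X_0)}$ already forces $p^\star(X\mid X_0,X_1)=\pbase(X\mid X_0,X_1)$---but your route through \Cref{thm:soc-sb} is the one the paper has in mind.
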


\subsection{Missing Proofs in Main Paper}

\vspace{5pt}
\textbf{Proof of \Cref{thm:sb-soc}}$\quad$
Comparing \eqref{eq:psi-pde} to \eqref{eq:hjb}, we can reinterpret $\varphi_t(x)$ as an value function $V_t(x)$ by reinterpreting
\begin{equation*}
    V_t(x) := - \log \varphi_t(x), 
    \quad 
    g(x) := - \log \varphi_1(x) \stackrel{\eqref{eq:hpsi-pde}}{=} \log \frac{\harphi_1(x)}{\nu(x)}.
\end{equation*}
That is, the kinetic-optimal drift of SB solves an SOC problem \eqref{eq:soc} with a terminal cost $g(x) := \frac{\harphi_1(x)}{\nu(x)}$.
\hfill $\square$

\vspace{5pt}
\textbf{Proof of \Cref{thm:am-ipf}}$\quad$
For notational simplicity, we will denote $q \equiv q^{\bar h^{(k-1)}}$ throughout the proof.
We first rewrite the backward SDE \eqref{eq:bwd-sde} in the forward direction \citep{nelson2020dynamical}:
\begin{equation*}
    \rd X_t = \br{f_t -\sigma_t^2 \nabla \log \phi_t + \sigma_t^2 \nabla \log q_t} \dt + \sigma_t \rd W_t, 
    \quad X_1 \sim \nu,
\end{equation*}
where we rewrite $\phi_t(x)$ w.r.t. the forward time coordiante:
\begin{equation}\label{eq:f-bwd-sde2}
    \phi_t(x) = \int \pbase_{t|0}(x|y) \phi_0(y) \rd y, \qquad \phi_1(x) = \bar h^{(k-1)}(x).
\end{equation}
Note that \eqref{eq:f-bwd-sde2} admits an equivalent PDE form by invoking Feynman-Kac formula \citep{le2016brownian}:
\begin{equation}\label{eq:f-bwd-sde}
    \partial_t \phi_t(x) = - \nabla \cdot \pr{f_t \phi_t} + \frac{\sigma_t^2}{2}\Delta\phi_t(x),
    \quad \phi_1(x) = \bar h^{(k-1)}(x).
\end{equation}

On the other hand, the dynamics of $\partial_t q$ follows the Fokker Plank equation \citep{oksendal2003stochastic}:
\begin{align*}
    \partial_t q_t &= - \nabla \cdot \pr{ \pr{f_t -\sigma_t^2 \nabla \log \phi_t + \sigma_t^2 \nabla \log q_t  } q_t } + \frac{1}{2}\sigma_t^2 \Delta q_t \\
    &= \nabla \cdot \pr{ \pr{\sigma_t^2 \nabla \log \phi_t - f_t} q_t } - \frac{1}{2}\sigma_t^2 \Delta q_t,
\end{align*}
and straightforward calculation yields
\begin{equation}\label{eq:fpe-q}
    \begin{split}
    \partial_t \log q_t &= 
    \sigma_t^2 \Delta \log \phi_t - \nabla \cdot f_t + \pr{\sigma_t^2 \nabla \log \phi_t - f_t} \cdot \nabla \log q_t 
    - \frac{1}{2}\sigma_t^2 \norm{ \nabla \log q_t }^2 - \frac{1}{2}\sigma_t^2 \Delta \log q_t,        
    \end{split}
\end{equation}
where we apply the Laplacian trick \eqref{eq:lap-trick} to $\frac{1}{q}\Delta q = \norm{ \nabla \log q_t }^2 + \Delta \log q_t$.

Now, recall that $p$ is the path distribution induced by the following SDE:
\begin{equation}\label{eq:uk-sde}
    \rd X_t = \br{f_t(X_t) + \sigma_t u_t(X_t)} \dt + \sigma_t \rd W_t, \qquad X_0 \sim \mu.
\end{equation}
Invoke Ito Lemma \eqref{eq:ito} to $\log q_t(X_t)$, where $X_t$ follows \eqref{eq:uk-sde}:
\begin{align*}
        \rd \log q_t 
        =& \br{\myellow{\partial_t {\log q_t}} + \nabla \log q_t \cdot \pr{\myellow{f_t} + \sigma_t u_t} + \myellow{\frac{1}{2} \sigma_t^2 \Delta \log q_t} } \dt
        + \sigma_t \nabla \log q_t \cdot \rd W_t \\
        \stackrel{\eqref{eq:fpe-q}}{=}&
         \br{ \myellow{\sigma_t^2 \Delta \log \phi_t - \nabla \cdot f_t + \sigma_t^2 \nabla \log \phi_t \cdot \nabla \log q_t - \frac{1}{2}\sigma_t^2 \norm{ \nabla \log q_t }^2}   + \nabla \log q_t \cdot (\sigma_t u_t)} \dt \\
        &\qquad\qquad + \sigma_t \nabla \log q_t \cdot \rd W_t
        \numberthis \label{eq:d_log_q} 
\end{align*}
Likewise, invoke Ito Lemma \eqref{eq:ito} to $\log \phi_t(X_t)$, where $X_t$ follows \eqref{eq:uk-sde}:
\begin{align*}
        &\rd \log \phi_t \\
        =& \br{\mgreen{\partial_t {\log \phi_t}} + \nabla \log \phi_t \cdot (\mgreen{f_t} + \sigma_t u_t) + \frac{1}{2} \sigma_t^2 \Delta \log \phi_t } \dt
        + \sigma_t \nabla \log \phi_t \cdot \rd W_t \\
        \stackrel{\eqref{eq:f-bwd-sde}}{=}&
        \br{ \mgreen{-\nabla \cdot f_t + \frac{\sigma_t^2}{2}\frac{\Delta\phi_t}{\phi_t}}   + \nabla \log \phi_t \cdot (\sigma_t u_t) + \frac{1}{2} \sigma_t^2 \Delta \log \phi_t } \dt
        + \sigma_t \nabla \log \phi_t \cdot \rd W_t
        \\
        \stackrel{\eqref{eq:lap-trick}}{=}& \br{ \mgreen{-\nabla {\cdot} f_t + \frac{\sigma_t^2}{2} \pr{ \norm{\nabla \log \phi_t}^2 + \Delta \log \phi_t }}   + \nabla \log \phi_t {\cdot} ({\sigma_t u_t}) + \frac{1}{2} \sigma_t^2 \Delta \log \phi_t } \dt
        + \sigma_t \nabla \log \phi_t {\cdot} \rd W_t
         \\
        =& \br{-\nabla \cdot f_t +  {\frac{\sigma_t^2}{2}{ \norm{\nabla \log \phi_t}^2 }}   + \nabla \log \phi_t \cdot ({\sigma_t u_t}) + \sigma_t^2 \Delta \log \phi_t } \dt
        + \sigma_t \nabla \log \phi_t \cdot \rd W_t      \numberthis \label{eq:d_log_phi}
\end{align*}
Subtracting \eqref{eq:d_log_phi} from \eqref{eq:d_log_q} leads to
\begin{equation}\label{eq:magic}
    \rd \log \phi_t - \rd \log q_t
    = \br{
        \tfrac{1}{2} \norm{u_t + \sigma_t \nabla \log \phi_t - \sigma_t \nabla \log q_t }^2 - \tfrac{1}{2} \norm{ u_t}^2
    } \dt + \sigma_t \nabla \log \frac{\phi_t}{q_t}  \cdot \rd W_t.
\end{equation}
Finally, we are ready to compute the variational objective in \eqref{eq:fwd-half-brdige}:
\begin{align*}
    \KL(p || q^{\bar h^{(k-1)}}) 
    =&~  \E_{X \sim p^u}  \br{ \int_0^1 \tfrac{1}{2} \norm{u_t(X_t) + \sigma_t \nabla \log \phi_t(X_t) - \sigma_t \nabla \log q_t(X_t) }^2 \dt  } 
    \\
    \stackrel{\eqref{eq:magic}}{=}&
    \E_{X \sim p^u}  \br{ \int_0^1 \pr{\tfrac{1}{2} \norm{ u_t(X_t)}^2 + \rd \log \phi_t(X_t) - \rd \log q_t(X_t)} \dt} 
    \\
    =& \E_{X \sim p^u} \br{
            \int_0^1 \tfrac{1}{2}\norm{u_t(X_t)}^2 \dt + \log \frac{ \phi_1(X_1)}{ q_1(X_1)} - \log \frac{ \phi_0(X_0)}{ q_0(X_0)}
        }
    \numberthis \label{eq:bbb}
    \\
    \propto&~ \E_{X \sim p^u} \br{
            \int_0^1 \tfrac{1}{2}\norm{u_t(X_t)}^2 \dt + {\log \frac{\bar h^{(k-1)}(X_1)}{\nu(X_1)}}
        }. \numberthis \label{eq:fwd-half-bridge-soc}
\end{align*}
That is, we have shown that the variational objective $\KL(p || q^{\bar h^{(k-1)}})$ is equivalent (up to an additive constant) to an SOC problem \eqref{eq:fwd-half-bridge-soc}. Applying Reciprocal Adjoint Matching \citep{AS} with the reciprocal process from \Cref{coro:soc-reciprocal} conclude that $\KL(p || q^{\bar h^{(k-1)}})$ is minimized by $p^{u^{(k)}}$. \hfill $\square$

\vspace{5pt}
\textbf{Proof of \Cref{thm:cm-ipf}}$\quad$
For notational simplicity, we will denote $p^{(k)} \equiv p^{u^{(k)}}$ throughout the proof.
Let $q$ be the path distribution induced by a backward SDE, propagating along the time coordinate $s := 1-t$:
\begin{equation*}
    \rd Y_s = \br{-f_s(Y_s) + \sigma_s v_s(Y_s)} \rd s + \sigma_s \rd W_s, \qquad Y_0 \sim \nu.
\end{equation*}
Next, rewrite the forward SDE $p^{{(k)}}$ in the backward direction:
\begin{equation*}%
    \rd Y_s = \br{-f_s - \sigma_s u_s^{(k)} + \sigma_s^2 \nabla \log p^{{(k)}}_s} \rd s + \sigma_s \rd W_s, 
    \quad Y_0 \sim p^{{(k)}}_{t}|_{t=1}.
\end{equation*}
By \Cref{thm:soc-sb}, we know that $p^{{(k)}}$ is the SB solution, thereby satisfying
\begin{subequations}
    \begin{empheq}[left={\text{\normalfont
        $u_t^{(k)}(x) = \sigma_t \nabla \log \varphi_t(x)$, where 
    }\empheqlbrace}]{align}
        \varphi_t(x) = \int \pbase_{1|t}(y|x) \varphi_1(y) \rd y,
        & ~~ \varphi_0(x) \harphi_0(x) = \mu(x)
        \label{eq:psi-pde-appendix2} \\
        \harphi_t(x) = \int \pbase_{t|0}(x|y) \harphi_0(y) \rd y,
        & ~~ \varphi_1(x)\harphi_1(x) = p^{(k)}_1(x) 
        \label{eq:hpsi-pde-appendix2}
    \end{empheq}
\end{subequations}

Since we are working with the backward time coordinate $s$, it is convenience to define $\phi_s := \harphi_{1-t}$ and rewrite \eqref{eq:hpsi-pde-appendix2} by
\begin{equation}\label{eq:phis}
    \phi_s(y) = \int \pbase_{1-s|0}(y|z) \phi_1(z) \rd z,
    \quad \phi_0(y) = \tfrac{p^{(k)}_1(y)}{\varphi_1(y)}.
\end{equation}

Now, expanding the variational objective with Girsanov Theorem yields \citep{sarkka2019applied}
\begin{equation}\label{eq:kl-bwd-half-bridge}
    \KL(p^{{(k)}} || q) 
    = \E_{Y \sim p^{{(k)}}}  \br{ \int_0^1 \tfrac{1}{2} \norm{-\sigma_s \nabla \log \varphi_s(Y_s) + \sigma_s \nabla \log p^{{(k)}}_s(Y_s) - v_s(Y_s) }^2 \rd s  },
\end{equation}
which is minimized point-wise at
\begin{equation*}
    v_s^\star(y) = \sigma_s \nabla \log \frac{p^{{(k)}}_s(y)}{\varphi_s(y)} 
    \stackrel{\eqref{eq:sb-factorization}}{=}
    \sigma_s \nabla \log  \harphi_s(y).
\end{equation*}
In other words, the backward SDE that minimizes \eqref{eq:kl-bwd-half-bridge} must obey
\begin{equation*}%
        \rd Y_s = \br{-f_s(Y_s) + \sigma_s^2 \nabla \log \phi_s(Y_s)} \rd s + \sigma_s \rd W_s, 
        \quad Y_0 \sim \nu, 
\end{equation*}
with $\phi_s$ defined in \eqref{eq:phis}.
That is, we have concluded so far that 
\begin{equation}\label{eq:bwd-half-brdige2}
        q^{\nicefrac{p^{(k)}_1}{\varphi_1}} = \argmin_{q} \big\{ \KL(p^{{(k)}} || q): q_1 = \nu \big\}.
    \end{equation}
Hence, it remains to be shown that the minimizer $\bar h_1^{(k)}$ of the CM objective at stage $k$ equals $\frac{p^{(k)}_1}{\varphi_1}$.
This is indeed the case since $p^{{(k)}}$ is the SB solution:
\begin{equation*}
    \nabla \log \bar h^{(k)} \stackrel{\eqref{eq:cm-obj}}{:=}
    \argmin_h \E_{p_{0,1}^{{(k)}}} \br{ \norm{
        h(X_1) - \nabla_{x_1} \log \pbase(X_1 | X_0)
    }^2} \stackrel{\eqref{eq:bm2}}{=} \nabla \log \harphi_1
    \stackrel{\eqref{eq:sb-factorization}}{=} \nabla \log \frac{p^{(k)}_1}{\varphi_1}.
\end{equation*}
\hfill $\square$

\section{Practical Implementation of ASBS} \label{sec:impl-detail}

\Cref{alg:asbs-full} summarizes the practical implementation of ASBS, where we expand the adjoint and corrector matching steps (\ie lines 3 and 4 in \Cref{alg:asbs}) to full details.
\Cref{tab:hyps} provides the hyper-parameters for each task. 
We break down each component as follows:

\bgroup
\begin{algorithm}[t]
    \caption{Adjoint Schrödinger Bridge Sampler (ASBS)}
    \label{alg:asbs-full}
    \begin{algorithmic}[1]
        \REQUIRE Sample-able source $X_0 \sim \mu$, differentiable energy $E(x)$, parametrized drift $u_\theta(t,x)$ and corrector $h_\phi(x)$, 
        replay buffers $\calB_\text{adj}$ and $\calB_\text{crt}$,
        number of stages $K$,
        numbers of AM and CM epochs $M_\text{adj}$ and $M_\text{crt}$, 
        number of resamples $N$,
        number of gradient steps $L$, 
        time scaling $\lambda_t$,
        maximum energy gradient norm $\alpha_\text{max}$.
        \STATE 
        Initialize $h_\phi^{(0)} := 0$ \COMMENT{IPF initialization}
        \FOR{stage $k$ \textbf{in} $1, 2 , \dots, K$}
            \hspace*{-\fboxsep}\colorbox{mygray}{\parbox{\linewidth}{%
            \FORR{epoch \textbf{in} $1, 2, \dots, M_\text{adj}$  \textbf{do} \hfill {\color{gray!70} $\triangleright$ adjoint matching}}
                \STATE Sample from model $\{(X_0^{(i)}, X_1^{(i)})\}_{i=1}^N \sim p^{\bar u^{(k)}}$, where $\bar u^{(k)} = \texttt{stopgrad}(u^{(k)}_\theta) $
                \STATE Compute adjoint target $a_t^{(i)} := \mgreen{\texttt{stopgrad}\Big(}\myellow{\texttt{clip}\big(}\nabla E(X_1^{(i)})\myellow{, \alpha_\text{max}\big)} + h_\phi^{(k)}(X_1^{(i)})\mgreen{\Big)}$
                \STATE Update replay buffer $\calB_\text{adj} \leftarrow \calB_\text{adj} \cup \{(X_0^{(i)}, X_1^{(i)}, a_t^{(i)})\}_{i=1}^N$
                \vspace{5pt}
                \STATE Take $L$ gradient steps $\nabla_\theta\calL_\text{AM}$ w.r.t. the AM objective: \begin{equation*}
                    \calL_\text{AM}(\theta) := 
                    \E_{t \sim \calU[0,1], (X_0, X_1, a_t) \sim \calB_\text{adj}, X_t \sim \pbase(\cdot|X_0, X_1) }\br{ \lambda_t \norm{u_\theta^{(k)}(t, X_t) + \sigma_t a_t}^2}
                \end{equation*}
            \ENDFORR
            }}\vspace{0.5em}
            \hspace*{-\fboxsep}\colorbox{mygray}{\parbox{\linewidth}{%
            \FORR{epoch \textbf{in} $1, 2, \dots, M_\text{crt}$  \textbf{do} \hfill {\color{gray!70} $\triangleright$ corrector matching}}
                \STATE Sample from model $\{(X_0^{(i)}, X_1^{(i)})\}_{i=1}^N \sim p^{\bar u^{(k)}}$, where $\bar u^{(k)} = \texttt{stopgrad}(u^{(k)}_\theta) $
                \STATE Update replay buffer $\calB_\text{crt} \leftarrow \calB_\text{crt} \cup \{(X_0^{(i)}, X_1^{(i)})\}_{i=1}^N$
                \vspace{5pt}
                \STATE Take $L$ gradient steps $\nabla_\phi\calL_\text{CM}$ w.r.t. the CM objective: \begin{equation*}
                    \calL_\text{CM}(\phi) := 
                    \E_{(X_0, X_1) \sim \calB_\text{crt}}\br{ \norm{h_\phi^{(k)}(X_1) - \nabla_{x_1} \log \pbase(X_1|X_0)}^2}
                \end{equation*}
            \ENDFORR
            }}
        \ENDFOR
        \RETURN Kinetic-optimal drift $u^\star \approx u_\theta(t,x)$
    \end{algorithmic}
\end{algorithm}
\egroup

\vspace{3pt}
\textbf{Harmonic prior {\normalfont$\mu_\text{harmonic}$}}$\quad$
Recall the harmonic prior in \eqref{eq:harmonic}:
\begin{equation}\label{eq:harmonic2}
    \mu_\text{harmonic}(x) \propto \exp({-\tfrac{1}{2} \textstyle \sum_{i,j}\norm{x_i - x_j}^2}).
\end{equation}
In practice, we set $\alpha=1$ and implement \eqref{eq:harmonic2} as an anisotropic Gaussian. For instance, for a 2-particle system in 3D, \ie $x = [x_1; x_2] \in \R^6$, we can rewrite \eqref{eq:harmonic2} as a quadratic potential,
\begin{equation}
    \exp({-\tfrac{1}{2} \norm{x_1 - x_2}^2}) = 
    \exp(x^\T R x), \quad \text{where }
    R = \begin{bmatrix}
        1 & 0 & 0 & -\tfrac{1}{2} & 0 & 0 \\
        0 & 1 & 0 & 0 & -\tfrac{1}{2} & 0 \\
        0 & 0 & 1 & 0 & 0 & -\tfrac{1}{2} \\
        -\tfrac{1}{2} & 0 & 0 & 1 & 0 & 0 \\
        0 & -\tfrac{1}{2} & 0 & 0 & 1 & 0 \\
        0 & 0 & -\tfrac{1}{2} & 0 & 0 & 1 
    \end{bmatrix},
\end{equation}
and then sample $x$ from the Gaussian $\calN(x; 0, (R + \epsilon I)^{-1} )$, where we set $\epsilon = 10^{-4}$.

\vspace{3pt}
\textbf{Noise schedule $\sigma_t$}$\quad$
We consider two types of noise schedule. 
\begin{itemize}[leftmargin=15pt]
    \item The \textit{geometric noise schedule} \citep{song2021score,karras2022elucidating} monotonically decays from $t=0$ to $1$ according to some prescribed $\beta_{\text{min}}$ and $\beta_{\text{max}}$:
    \begin{equation}
        \sigma_t \stackrel{\text{geometric}}{:=} \beta_{\text{min}} \left(\tfrac{\beta_\text{max}}{\beta_{\text{min}}}\right)^{1-t} \sqrt{2 \log \tfrac{\beta_\text{max}}{\beta_\text{min}}}.
    \end{equation}
    It is convenience to further define
    \begin{equation}\label{eq:app-geo-noise-coeff}
        \kappa_{t|s} := \int_s^t \sigma_\tau^2 \rd \tau \stackrel{\text{geometric}}{=} \beta_\text{max}^2 \cdot \left(\left( \tfrac{\beta_\text{min}}{\beta_\text{max}} \right)^{2s} - \left( \tfrac{\beta_\text{min}}{\beta_\text{max}} \right)^{2t} \right),
        \quad \bar \beta^2 := \beta_\text{max}^2 - \beta_\text{min}^2, 
        \quad \gamma_t := \frac{\kappa_{t|0}}{\bar\beta^2}.
    \end{equation}
    With them, the conditional base distribution when $f:=0$ can be represented compactly by
    \begin{subequations}
    \begin{align}
         \pbase(X_t|X_0) &= \calN(X_t; X_0, \kappa_{t|0} I) \label{eq:aaa} \\
         \pbase(X_t|X_0, X_1) &= \calN(X_t; (1-\gamma_t) X_0 + \gamma_t X_1, \bar \beta^2 \gamma_t (1-\gamma_t) I)
    \end{align}    
    \end{subequations}

    \item The \textit{constant noise schedule} simply sets
    \begin{equation}
        \sigma_t \stackrel{\text{constant}}{:=} \sigma.
    \end{equation}
    When $f:=0$, the base SDE is effectively a standard Brownian motion whose conditional distributions obey
    \begin{subequations}
    \begin{align}
         \pbase(X_t|X_0) &= \calN(X_t; X_0, \sigma^2 t I) \\
         \pbase(X_t|X_0, X_1) &= \calN(X_t; (1-t) X_0 + t X_1, \sigma^2 t (1-t) I)
    \end{align}    
    \end{subequations}
\end{itemize}

\vspace{3pt}
\textbf{Replay buffers {\normalfont$\calB_\text{adj}$} and {\normalfont$\calB_\text{crt}$}}
$\quad$
Similar to many previous diffusion samplers \citep{AS,akhound2024iterated,chen2025sequential}, we employ replay buffers $\calB$ in computation of both adjoint \eqref{eq:am-obj} and corrector \eqref{eq:cm-obj} matching objectives.
Specifically, we rebase the expectation over model samples $p^{u^{(k)}}$ onto a replay buffer $\calB$, which stores the most latest $|\calB|$ samples. We update the buffer with $N$ new samples every $L$ gradient steps.
Note that the use of replay buffers effectively render ASBS a hybrid method between on-policy and off-policy.

\vspace{3pt}
\textbf{Parametrization of $u_\theta$ and $h_\phi$}
$\quad$
For each energy function, we parametrize the drift $u_\theta(t,x)$ and the corrector $h_\phi(x)$ with two neural networks, $v_\theta(t,x)$ and $v_\phi(t,x)$, of the same architecture.

Specifically, we parametrize the drift as $u_\theta(t,x) := \sigma_t v_\theta(t,x)$, which effectively eliminates the noise schedule ``$\sigma_t$'' in matching target (see \eqref{eq:am-obj}), making it time-invariant for each sampled trajectory. 
The only exception is the conformer generation task, where we keep the original parametrization $u_\theta(t,x) := v_\theta(t,x)$, which empirically yields better results.
On the other hand, since $h_\phi(x)$ is independent of time, we simply set a fixed time input $t=1$, \ie $h_\phi(x) := v_\phi(1,x)$.

The specific parametrization $v(t,x)$ employed for each task are detailed below.

\begin{itemize}[leftmargin=15pt]
    \item \textit{MW-5}: We consider $v(t,x)$ a standard fully-connected network with 4 layers with 64 hidden features of the following form:
    \begin{equation*}
        \texttt{output} = \texttt{layer\_n} \circ \cdots \circ \texttt{layer\_1} \circ \texttt{(x\_embed($x$) + t\_embed($t$))}
    \end{equation*}
    
    \item \textit{DW-4, LJ-13, LJ-55}: We consider $v(t,x)$ a Equivariant Graph Neural Network~\citep[EGNN;][]{satorras2021n} with 5 layers and 128 hidden features.
    The architecture of EGNN is aligned with prior methods \citep{akhound2024iterated,AS}.

    \item \textit{Alanine dipeptide}: We use the same architecture as in MW-5, except with 8 layers with 256 hidden features.

    \item \textit{Conformer generation}: We consider $v(t,x)$ a similar EGNN used in Adjoint Sampling \citep{AS}, except with 20 layers. Ablation study on the same EGNN architecture can be found in \Cref{sec:add-exp}.
\end{itemize}

\vspace{3pt}
\textbf{Clipping {\normalfont$\alpha_\text{max}$}} 
$\quad$
We clip the energy gradient to prevent its maximum norm from exceeding $\alpha_\text{max}$.

\vspace{3pt}
\textbf{Time scaling $\lambda_t$} 
$\quad$
Following standard practices for AM objective, we employ a time scaling $\lambda_t$ to improve numerical stability. Note that this does not affect the minimizer of the AM objective.
We set $\lambda_t := \frac{1}{\sigma_t^2}$ for all tasks.

\vspace{3pt}
\textbf{Translation invariance}
$\quad$
For DW-4, LJ-13, LJ-55, and conformer generation tasks, we follow prior methods \citep{akhound2024iterated,AS} by restricting the state space to a zero center-of-mass (ZCOM) subspace and thereby enforcing translation invariance.

For a $n$-particle $k$-dimensional system, \ie 
$x = [x_1;\cdots;x_n]$ where $x_i \in \R^k $, the ZCOM subspace is defined as
$\calX^\text{ZCOM} = \{ x \in \R^{nk}: \sum_{i=1}^n x_i = 0 \}$.
Practically, this is achieved by projecting the initial sample $X_0 \sim \mu$, the SDE's noise $\rd W_t$, and the energy gradient $\nabla E(\cdot)$ onto $\calX^\text{ZCOM}$. Note that the output of EGNN is by construction ZCOM.

Formally, the adaption is equivalent to augmenting the SDE with a projection matrix $A \in \R^{nk\times nk}$:
\begin{equation}
\rd X_t = \sigma_t A u_t(X_t) \dt + \sigma_t A \rd W_t, \quad X_0 = A Y_0, \quad Y_0 \sim \mu, \quad A = \left( I_n - \frac{1}{n} \mathbf{1}_n \mathbf{1}_n^\T \right)\otimes I_k,
\end{equation}
where $\otimes$ is the Kronecker product, $I_n \in \R^{n\times n}$ is an identity matrix, and $\mathbf{1}_n \in \R^n$ is a vector of ones.

\vspace{3pt}
\textbf{Initialization and alternate procedure} 
$\quad$
As ASBS is an instantiation of the IPF algorithm (see \Cref{thm:global-convergence}), it must adhere to the IPF initialization protocol to ensure theoretical convergence to the global solution.
Specifically, the IPF initialization can be implemented in two ways 
\begin{itemize}[leftmargin=15pt]
    \item Initialize with $h_\phi^{(0)} := 0$ and run AM, CM, ... until convergence. We adopt this setup for all tasks.

    \item Initialize with $u_\theta^{(0)} := 0$ and run CM, AM, ... until convergence. 
    Since $p^{u^{(0)}} = \pbase$ in this setup, the optimal corrector at the first CM stage is known analytically:
    \begin{align*}
        h^{(1)}(x) 
        \stackrel{\eqref{eq:cm-obj}}{=}&
        \int \pbase_{0|1}(y|x) \nabla_x \log \pbase_{1|0}(x|y) \rd y \\
        =& \int \tfrac{\pbase_{0|1}(y|x)}{\pbase_{1|0}(x|y)} \nabla_x  \pbase_{1|0}(x|y) \rd y \\
        =& \tfrac{1}{\pbase_{1}(x)} \nabla_x \int \pbase_{0}(y) \pbase_{1|0}(x|y) \rd y \\
        =& \nabla \log \pbase_1(x) \numberthis
    \end{align*}
\end{itemize}
In practice, we find that the two setups yield similar performance.

\vspace{3pt}
\textbf{RDKit warm-start}
$\quad$
This warm-starts the drift $u_\theta$ using RDKit samples.
The procedure is inspired by the fact that \citep{shi2023diffusion,liu2023i2sb}:
\begin{align*}
    u^\star_t 
    &= \sigma_t \nabla \log \varphi_t \\
    &= \argmin_{u_t} \E_{p_{t,1}^{\star}} \br{ \norm{
        u_t(X_t) - \sigma_t \nabla_{x_t} \log \pbase(X_1 | X_t)
    }^2}  \\
    &= \argmin_{u_t} \E_{(X_0, X_1) \sim p_{0,1}^{\star}, X_t \sim \pbase(\cdot|X_0, X_1)} \br{ \norm{
        u_t(X_t) - \sigma_t \nabla_{x_t} \log \pbase(X_1 | X_t)
    }^2}. \numberthis \label{eq:bm3}
\end{align*}
where the last equality is due to
\begin{align*}
    p^\star_{0,t,1}(x,y,z) \stackrel{\eqref{eq:sb-p*-appendix}}{=}& 
        \pbase_{t,1|0}(y,z|x) \harphi_0(x)\varphi_1(z) \\
        =& \pbase_{t|0,1}(y|x,z) \pbase_{1|t}(z|y) \harphi_0(x)\varphi_1(z)
            \shortnote{by Markov property} \\
        \stackrel{\eqref{eq:sb-factorization2}}{=}&
            \pbase_{t|0,1}(y|x,z) p^\star_{0,1}(x,z).
        \numberthis 
\end{align*}

\Cref{eq:bm3} can be understood as an analogy of \eqref{eq:bm2} for another SB potential $\varphi_t$.
In practice, given RDKit samples $X_1 \sim q^\text{RDKit}$, we warm-start ASBS by minimizing w.r.t. the following objective:
\begin{align}\label{eq:pretrain}
    \calL_\text{warmup}(\theta) 
    =& \E_{t \sim \calU[0,1], X_0 \sim \mu, X_1 \sim q^\text{RDKit}, X_t \sim \pbase(\cdot|X_0, X_1)} \br{ \tilde\lambda_t \norm{
        u_t(X_t) - \sigma_t \nabla_{x_t} \log \pbase(X_1 | X_t)
    }^2} \nonumber \\
    \stackrel{\eqref{eq:aaa}}{=}& \E_{t \sim \calU[0,1], X_0 \sim \mu, X_1 \sim q^\text{RDKit}, X_t \sim \pbase(\cdot|X_0, X_1)} \br{ \tilde\lambda_t \norm{
        u_t(X_t) - \frac{\sigma_t}{\kappa_{1|t}}(X_1 - X_t)
    }^2},
\end{align}
where $\kappa_{1|t}$ is defined in \eqref{eq:app-geo-noise-coeff} for the geometric noise schedule. We set the time scaling 
$\tilde\lambda_t := \sqrt{\frac{\sigma_t}{\kappa_{1|t}}}$. Note that, unlike AS, the minimizer of \eqref{eq:pretrain} does not equal $u^\star$, since $(X_0, X_1) \sim \mu \otimes q^\text{RDKit} \neq p^\star_{0,1}$ are sampled independently.

\begin{table}[t]
\captionsetup{justification=centering}
\caption{Hyperparameters of ASBS for the each task.
}
\vspace{-5pt} %
\centering
\renewcommand{\arraystretch}{1.2}
\begin{tabular}{l cccc c c}
     \toprule
     & \multicolumn{4}{c}{Synthetic energy functions} & \multirow{2}{*}{\shortstack{ Alanine \\ dipeptide}} & \multirow{2}{*}{\shortstack{ Conformer \\ generation}}
     \\ \cmidrule(lr){2-5} 
     & MW-5 & DW-4 & LJ-13 & LJ-55 \\
     \midrule
     $\mu$ 
        & $\calN(0,1)$ & \multicolumn{3}{c}{$\mu_\text{harmonic}$ in \eqref{eq:harmonic} with $\alpha{=}2,2,1$} & $\calN(0,0.25)$ & $\mu_\text{harmonic}$ \\
    $\beta_\text{min}$  
        & --- & 0.001 & 0.001 & 0.001 & 0.001 & 0.001 \\
    $\beta_\text{max}$ 
        & --- & 1 & 1 & 2 & 0.5 & 1 \\
    $\sigma$ 
        & 0.2 & --- & --- & --- & --- & --- \\
    $K$ 
        & 5 & 20 & 15 & 15 & 15 & 3 \\
    $M_{\text{adj}}$
        & 100 & 200 & 300 & 300 & 4000 & 2500 \\
    $M_{\text{crt}}$
        &  20 &  20 & 20 &  20 & 2000 & 2000 \\
    $N$ 
        & 1000 & 1000 & 1000 & 1000 & 1000 & 128 \\
    $L$ 
        & 200 & 100 & 100 & 100 & 100 & 100 \\
    $|\calB|$
        & $10^{4}$ & $10^{4}$ & $10^{4}$ & $10^{4}$ & $10^{4}$ & $6.4 \times 10^{4}$ \\
    $\alpha_\text{max}$
        & --- & 100 & 100 & 100 & 100 & 150 \\
    $\lambda_t$
        & $\tfrac{1}{\sigma_t^2}$ & $\tfrac{1}{\sigma_t^2}$ & $\tfrac{1}{\sigma_t^2}$ & $\tfrac{1}{\sigma_t^2}$ & $\tfrac{1}{\sigma_t^2}$ & $\tfrac{1}{\sigma_t^2}$  \\
    \bottomrule
\end{tabular}
\label{tab:hyps}
\end{table}

\section{Experiment Details} \label{sec:exp-detail}

\subsection{Synthetic Energy Functions}
\subsubsection{Energy functions}
In this section, we provide the exact setup for our synthetic energy experiments in Table \ref{tab:synthetic}. We consider four synthetic energy functions that have been widely used in recent literature to benchmark sampling and generative algorithms: MW-5, DW-4, LJ-13, and LJ-55.

\vspace{3pt}
\textbf{MW-5}$\quad$
The MW-5 (Many-Well in 5D) energy is a 5-particle 1D system adopted from \citet{chen2025sequential}, where $x = [x_1;\cdots;x_5] \in \R^5$ with $x_i \in \R$, . The energy function is defined as follows:
\begin{equation}
	E(x) = \sum^5_{i=1} (x^2_{i} - \delta)^2 %
\end{equation}
where we set $\delta=4$. 
This creates distinct modes centered at combinations of $\pm\sqrt{\delta}$ in each of the $d$ dimensions.

\vspace{3pt}
\textbf{DW-4}$\quad$
The DW-4 (Double-Well for 4 particles in 2D) energy is a physically motivated pairwise potential originally proposed in \citet{kohler2020equivariant} and subsequently used in \citet{akhound2024iterated, AS}. It defines a system of four particles, each living in $\mathbb{R}^2$, leading to an 8D state vector $x = [x_1; x_2; x_3; x_4] \in \R^8$ with $x_i \in \mathbb{R}^2$. The energy function reads
\begin{equation}
    E(x) = \exp\left[ \frac{1}{2\tau} \sum_{i < j} \left( a(d_{ij} - d_0) + b(d_{ij} - d_0)^2 + c(d_{ij} - d_0)^4 \right) \right],
\end{equation}
where $d_{ij} = \Vert x_i - x_j \Vert_2$ is the Euclidean distance between particles $i$ and $j$. We follow the standard configuration with $a = 0$, $b = -4$, $c = 0.9$, $d_0 = 1$, and temperature $\tau = 1$.

\vspace{3pt}
\textbf{LJ-13 and LJ-55}$\quad$
The Lennard-Jones (LJ) potentials are classical intermolecular potentials commonly used in physics to model atomic interactions. These are defined for a system of $n$ particles in 3D space, with $x = [x_1; \dots; x_n] \in \R^{3n}$ and $x_i \in \mathbb{R}^3$. The index following "LJ-" indicates the number of particles (e.g., 13 or 55). The unnormalized energy function takes the form:
\begin{equation}
    E(x) = \frac{\epsilon}{2\tau} \sum_{i < j} \left[ \left( \frac{r_m}{d_{ij}} \right)^6 - \left( \frac{r_m}{d_{ij}} \right)^{12} \right] + \frac{c}{2} \sum_i \|x_i - C(x)\|^2,
\end{equation}
where $d_{ij} = \|x_i - x_j\|_2$ is the pairwise distance and $C(x)$ denotes the center of mass of the particles. We use the parameter values $r_m = 1$, $\epsilon = 1$, $c = 0.5$, and $\tau = 1$, following prior work. The LJ-13 and LJ-55 systems correspond to 39D and 165D, respectively.

\subsubsection{Baselines} \label{sec:baseline}

Here, we outline the procedure used to obtain the values reported in Table~\ref{tab:synthetic} for the baseline methods.

For PIS \citep{zhang2022path}, DDS \citep{vargas2023denoising}, and LV-PIS \citep{richterimproved}, iDEM \citep{akhound2024iterated}, and AS \citep{AS}, we reuse the values reported in AS \citep[Table 1]{AS} for DW-4, LJ-13, and LJ-55 energy functions. As for MW-5, which is not included in AS, we run iDEM using their official implementation and the rest of baseline methods using our own implementation in PyTorch \citep{paszke2019pytorch}. We were unable to obtain reportable results for LV-PIS and iDEM on this energy function.

For PDDS \citep{phillips2024particle} and SCLD \citep{chen2025sequential}, we run their official implementations in JAX \citep{jax2018github} using the default hyperparameter settings specified for the Log-Gaussian Cox Process experiment in their respective papers. To enhance stability and convergence on synthetic energy functions, we tune the gradient clipping parameters.
For PDDS, we apply clipping to the gradient of the energy function. For SCLD, we clip both the energy gradient and the Langevin norm. In both cases, the clipping magnitude is selected from the set $\{1, 10, 100, 1000\}$ based on the best validation performance.
Training is performed for 100{,}000 iterations across all runs. For SCLD, we use subtrajectory splitting with the default value of 4, so that it does not degenerate to CMCD \citep{vargas2023transport}. In practice, we find that using subtrajectories yields better results. 

\subsubsection{Evaluation Metrics} \label{sec:eval-toy}

In this subsection, we outline the evaluation criteria used to quantitatively assess the quality of samples generated from synthetic energy functions. We employ three primary metrics: Sinkhorn distance, geometric $\mathcal{W}_2$, and energy $\mathcal{W}_2$, each designed to capture different aspects of distributional similarity between generated and ground truth samples.

\vspace{3pt}
\textbf{Sinkhorn distance}$\quad$
To evaluate the similarity between the empirical distributions of generated and reference samples, we compute the Sinkhorn distance using the entropy-regularized optimal transport formulation \citep{peyre2019computational}, following the implementation of \citet{blessing2024beyond} and \citet{chen2025sequential}. The Sinkhorn regularization coefficient is set to $10^{-3}$ throughout.
We use 2,000 samples from both the generated and ground truth distributions to compute the metric.

\vspace{3pt}
\textbf{Geometric $\mathcal{W}_2$}$\quad$
For DW and LJ tasks, the potential energy functions---and consequently, the sample distributions---exhibit invariance to both particle permutations and rigid transformations such as rotations and reflections. To appropriately account for these symmetries, we employ the geometric $\mathcal{W}_2$ distance as defined by \citet{akhound2024iterated} and \citet{AS}. Formally, the 2-Wasserstein distance is computed as:
\begin{equation}\label{eq:w2}
    \mathcal{W}_2^2(\hat{\nu}, \nu) = \inf_{\pi \in \Pi(\hat{\nu}, \nu)} \int D(x, y)^2 \, \pi(x, y) \, dxdy,
\end{equation}
where $\Pi(\hat{\nu}, \nu)$ denotes the set of joint couplings with prescribed marginals $\hat{\nu}$ (generated) and $\nu$ (ground truth), and $D(x, y)$ is a symmetry-aware distance between samples defined as:
\begin{equation}
    D(x, y) = \min_{R \in O(s), \, P \in S(n)} \left\| x - (R \otimes P) y \right\|_2.
\end{equation}
Here, $O(s)$ denotes the group of orthogonal transformations in $s$ spatial dimensions (rotations and reflections), and $S(n)$ represents the symmetric group over $n$ particles. As exact minimization over these symmetry groups is computationally infeasible, we adopt the approximation scheme of \citet{kohler2020equivariant}. We use 2000 samples from each generated and ground truth distribution to compute the metric.

\vspace{3pt}
\textbf{Energy $\mathcal{W}_2$}$\quad$
To evaluate fidelity with respect to the target energy landscape, we also compute the 2-Wasserstein distance between the energy values of generated samples and those of ground truth samples. For each target distribution, we generate 2{,}000 samples from both the model and the reference, and compare their respective energy histograms. This scalar-based Wasserstein metric serves as a proxy for how well the generative model captures the energy histogram of the target distribution.

\subsection{Alanine dipeptide}

\vspace{3pt}
\textbf{Benchmark description}$\quad$
We adopt the experiment setup primarily from \citep{midgley2023flow}.
Given a configuration of alanine dipeptide, which consists of 22 particles in 3D, \ie $x = [x_1;\cdots;x_{22}] \in \R^{66}$ where $x_i \in \R^3$, we apply the same coordinate transform $\calT$ proposed by \citet{midgley2023flow}. This coordinate transform maps the Cartesian coordinates to internal coordinates, $\calT(x) =: z \in \R^{60}$, which include bond lengths, bond angles, and dihedral angles \citep{stimper2022resampling}. 
This process effectively removes six degrees of freedom---three for translation and three for rotation---thereby enforcing structural invariance. 
Non-angular coordinates are further normalized using samples with minimal energies.
We refer readers to \citep[Appendix F.1]{midgley2023flow} for further details.
Note that the internal coordinate transformation is bijective. Hence, we can compute the energy via 
\begin{equation}
    E(x) = E(\calT^{-1}(z))
\end{equation}

\vspace{3pt}
\textbf{Evaluation and baselines}
$\quad$
For each sample $x = \calT^{-1}(z) \in \R^{66}$, we extract five torsion angles, including the backbone angles $\phi$, $\psi$ and methyl rotation angles $\gamma_1$, $\gamma_2$, $\gamma_3$. We report two divergence metrics with respect to the ground-truth distribution, which contains $10^7$ samples simulated by Molecular Dynamics.
We implement the baseline methods, including PIS \citep{zhang2022path}, DDS \citep{vargas2023denoising}, AS \citep{AS}, using PyTorch \citep{paszke2019pytorch}.

For the KL divergences, we adopt setup from \citep{wu2020stochastic} and compute the divergence of the ground-truth marginal to model marginal for each torsion angle:
\begin{equation}
    \KL(p^\star(\cdot) || p^{u_\theta}(\cdot)) \approx \sum P^\star(\cdot) \log \frac{P^\star(\cdot) + \epsilon}{P^{u_\theta}(\cdot) + \epsilon}, \qquad \epsilon = 10^{-5},
\end{equation}
where $P^\star$ and $P^{u_\theta}$ are histograms of $10^7$ samples, discretized between $[-\pi, \pi]$ with 200 intervals. %

For the Wasserstein-2 distance, we use the Geometric $\calW_2$ in \eqref{eq:w2},
where each sample is now in 2D, $x = [\phi,\psi] \in \R^2$.
Due to the high computational cost, we compute the value using a subset of $10^4$ samples from the test set ground-truth samples, which is fixed for all methods.

Finally, both Ramachandran plots in \Cref{fig:R-plot} are generated using $10^7$ samples.

\subsection{Amortized conformer generation}

In this subsection, we provide some context for the experimental results found in Table~\ref{tab:conformer_generation} regarding the generation of conformers.

\vspace{3pt}
\textbf{Benchmark description}$\quad$
Conformers are atomic representations of molecules in cartesian space with their constituent atoms arranged into local minima on the potential energy surface. Molecules are defined to be a graph of atoms (nodes) connected by bonds (edges); conformers are geometric realizations of that molecule. Torsion angles, or rotatable bonds, are particularly important degrees of freedom for defining conformations since bond lengths and bond angles are typically much more stable due to a high sensitivity to perturbations. It is common to consider bond lengths and bond angles fixed, while the torsional degrees of freedom define the conformer.

The task in this benchmark is to take a representation of the molecular graph, usually a SMILES string \citep{weininger1988smiles}, and comprehensively sample the conformational configuration space. In flexible molecules, there can be a large number of conformers with many separated modes in a $3n - 6$ dimensional space. (Where $n$ represents the number of atoms and 6 comes from the irrelevance of rotation and translation of the conformer.) We quantify the notion of comprehensively sampling the space by comparing generated structures to a set of conformers sampled using expensive, standard search techniques \citep{pracht2024crest} that were further relaxed using extremely precise density function theory-based, quantum chemistry methods \citep{orca, levine2025open}. A detailed description of this benchmark can be found in its source \citep[Appendix~F.]{AS}.

\vspace{3pt}
\textbf{Evaluation and baselines}$\quad$
The method of comparison between proposed structure and reference conformer is to use RDKit's \citep{landrum2006rdkit} implementation of \emph{Root Mean Squared Displacement} (RMSD), a measure of distance between atomic structures that is invariant to translation and rotation.
We set a threshold RMSD for two structures to match and computed the Recall Coverage and Recall Average Minimum RMSD (AMR). The experiment was performed with both generated structures and with generated structures after a so-called relaxation, i.e. geometry optimization of energy, using eSEN \citep{fu2025learning}. The equations for computing these metrics are:
\begin{align}
    \text{COV-R}(\delta) &:= \frac{1}{L} \left| \left\{ l \in \{1,\ldots,L\} : \exists k \in \{1,\ldots, K\}, \quad \text{RMSD}(C_k, C^*_l) < \delta \right\} \right| \\
    \text{AMR-R} &:= \frac{1}{L} \sum_{l \in \{1,\ldots, L\}} \min_{k \in \{1,\ldots, K\}} \text{RMSD}(C_k, C^*_l)
\end{align}
where $\delta=0.75$ \AA\ is the coverage threshold, $L = \text{max}(L', 128)$, where $L'$ is the number of reference conformers, $K = 2L$, and let $\{C^*_l\}_{l \in [1,L]}$ and $\{C_k\}_{k \in [1,K]}$ be the sets of ground truth and generated conformers respectively. We capped the reference conformers per molecule at 512 in COV-R.

The values for the baselines are adopted from AS \citep{AS}.

\begin{table}[t]
\vspace{-5pt}
\caption{
    Ablation study on amortized conformer generation using the same EGNN architecture as in AS \citep{AS}. We report the recall at the thresholds \textbf{1.0\AA} and \textbf{1.25\AA}, where the latter was reported in AS.
}
\vskip -0.05in %
\centering
\renewcommand{\arraystretch}{1.35}
\resizebox{\textwidth}{!}{%
\begin{tabular}{@{} l l cccc cccc}
\toprule
& & \multicolumn{4}{c}{{without relaxation}} & \multicolumn{4}{c}{{with relaxation}} \\
\cmidrule(lr){3-6} \cmidrule(lr){7-10}
& & \multicolumn{2}{c}{{SPICE} } & \multicolumn{2}{c}{{GEOM-DRUGS}} 
  & \multicolumn{2}{c}{{SPICE} } & \multicolumn{2}{c}{{GEOM-DRUGS}} 
\\ 
\cmidrule(lr){3-4} \cmidrule(lr){5-6} \cmidrule(lr){7-8} \cmidrule(lr){9-10}
  &   Method & Coverage $\uparrow$ & AMR $\downarrow$ & Coverage $\uparrow$ & AMR $\downarrow$ & Coverage $\uparrow$ & AMR $\downarrow$ &
    Coverage $\uparrow$ & AMR $\downarrow$ \\
\midrule
\parbox[t]{2mm}{\multirow{5}{*}{\rotatebox[origin=c]{90}{{Threshold 1.0\AA~}}}} 
& RDKit ETKDG {\tiny \citep{riniker2015better}}
& 56.94{\color{gray}\tiny$\pm$35.82}
& 1.04{\color{gray}\tiny$\pm$0.52}
& 50.81{\color{gray}\tiny$\pm$34.69}
& 1.15{\color{gray}\tiny$\pm$0.61}
& 70.21{\color{gray}\tiny$\pm$31.70}
& 0.79{\color{gray}\tiny$\pm$0.44}
& 62.55{\color{gray}\tiny$\pm$31.67}
& 0.93{\color{gray}\tiny$\pm$0.53}
\\
& AS {\small \citep{AS}}
& 56.75{\color{gray}\tiny$\pm$38.15}
& 0.96{\color{gray}\tiny$\pm$0.26}
& 36.23{\color{gray}\tiny$\pm$33.42}
& 1.20{\color{gray}\tiny$\pm$0.43}
& 82.41{\color{gray}\tiny$\pm$25.85}
& 0.68{\color{gray}\tiny$\pm$0.28}
& 64.26{\color{gray}\tiny$\pm$34.57}
& 0.89{\color{gray}\tiny$\pm$0.45}
\\
& ASBS w/ Gaussian prior (\textbf{Ours}) 
& 68.61{\color{gray}\tiny$\pm$33.48}
& 0.88{\color{gray}\tiny$\pm$0.25}
& 46.03{\color{gray}\tiny$\pm$35.99}
& 1.08{\color{gray}\tiny$\pm$0.36}
& 84.77{\color{gray}\tiny$\pm$22.65}
& 0.64{\color{gray}\tiny$\pm$0.25}
& 68.83{\color{gray}\tiny$\pm$31.53}
& \cellhi 0.80{\color{gray}\tiny$\pm$0.37}
\\
& ASBS w/ Harmonic prior (\textbf{Ours}) 
& \cellhi 70.70{\color{gray}\tiny$\pm$33.21}
& \cellhi 0.86{\color{gray}\tiny$\pm$0.24}
& \cellhi 52.19{\color{gray}\tiny$\pm$35.93}
& \cellhi 1.05{\color{gray}\tiny$\pm$0.41}
& \cellhi 86.79{\color{gray}\tiny$\pm$22.86}
& \cellhi 0.61{\color{gray}\tiny$\pm$0.24}
& \cellhi 70.08{\color{gray}\tiny$\pm$31.60}
& \cellhi 0.80{\color{gray}\tiny$\pm$0.37}
\\ \cmidrule(lr){2-10}
& AS +RDKit warmup {\tiny \citep{AS}}
& 72.21{\color{gray}\tiny$\pm$30.22}
& 0.84{\color{gray}\tiny$\pm$0.24}
& 52.19{\color{gray}\tiny$\pm$35.20}
& 1.02{\color{gray}\tiny$\pm$0.34}
& \cellhj 87.84{\color{gray}\tiny$\pm$19.20}
& \cellhj 0.60{\color{gray}\tiny$\pm$0.23}
& 73.88{\color{gray}\tiny$\pm$28.63}
& 0.76{\color{gray}\tiny$\pm$0.34}
\\
& ASBS +RDKit warmup (\textbf{Ours})
& \cellhj 74.29{\color{gray}\tiny$\pm$31.25}
& \cellhj 0.82{\color{gray}\tiny$\pm$0.24}
& \cellhj 55.88{\color{gray}\tiny$\pm$36.51}
& \cellhj 0.98{\color{gray}\tiny$\pm$0.34}
& 87.25{\color{gray}\tiny$\pm$20.77}
& \cellhj 0.60{\color{gray}\tiny$\pm$0.24}
& \cellhj 74.11{\color{gray}\tiny$\pm$30.16}
& \cellhj 0.75{\color{gray}\tiny$\pm$0.34}
\\ \hline\hline
\parbox[t]{2mm}{\multirow{5}{*}{\rotatebox[origin=c]{90}{{Threshold 1.25\AA~}}}} 
& RDKit ETKDG {\tiny \citep{riniker2015better}}
& 72.74{\color{gray}\tiny$\pm$33.18}
& 1.04{\color{gray}\tiny$\pm$0.52}
& 63.51{\color{gray}\tiny$\pm$34.74}
& 1.15{\color{gray}\tiny$\pm$0.61}
& 81.61{\color{gray}\tiny$\pm$27.58}
& 0.79{\color{gray}\tiny$\pm$0.44}
& 71.72{\color{gray}\tiny$\pm$29.73}
& 0.93{\color{gray}\tiny$\pm$0.53}
\\
& AS {\small \citep{AS}}
& 82.22{\color{gray}\tiny$\pm$25.72}
& 0.96{\color{gray}\tiny$\pm$0.26}
& 60.93{\color{gray}\tiny$\pm$35.15}
& 1.20{\color{gray}\tiny$\pm$0.43}
& 94.10{\color{gray}\tiny$\pm$15.67}
& 0.68{\color{gray}\tiny$\pm$0.28}
& 79.08{\color{gray}\tiny$\pm$29.44}
& 0.89{\color{gray}\tiny$\pm$0.45}
\\
& ASBS w/ Gaussian prior (\textbf{Ours}) 
& 87.20{\color{gray}\tiny$\pm$21.88}
& 0.88{\color{gray}\tiny$\pm$0.25}
& 70.86{\color{gray}\tiny$\pm$31.98}
& 1.08{\color{gray}\tiny$\pm$0.36}
& 95.19{\color{gray}\tiny$\pm$10.29}
& 0.64{\color{gray}\tiny$\pm$0.25}
& \cellhi 84.66{\color{gray}\tiny$\pm$25.03}
& \cellhi 0.80{\color{gray}\tiny$\pm$0.37}
\\
& ASBS w/ Harmonic prior (\textbf{Ours}) 
& \cellhi 89.66{\color{gray}\tiny$\pm$19.42}
& \cellhi 0.86{\color{gray}\tiny$\pm$0.24}
& \cellhi 74.50{\color{gray}\tiny$\pm$32.32}
& \cellhi 1.05{\color{gray}\tiny$\pm$0.41}
& \cellhi 96.64{\color{gray}\tiny$\pm$10.15}
& \cellhi 0.61{\color{gray}\tiny$\pm$0.24}
& 83.76{\color{gray}\tiny$\pm$24.77}
& \cellhi 0.80{\color{gray}\tiny$\pm$0.37}
\\ \cmidrule(lr){2-10}
& AS +RDKit warmup {\tiny \citep{AS}}
& 89.42{\color{gray}\tiny$\pm$17.48}
& 0.84{\color{gray}\tiny$\pm$0.24}
& 72.98{\color{gray}\tiny$\pm$30.82}
& 1.02{\color{gray}\tiny$\pm$0.34}
& 96.65{\color{gray}\tiny$\pm$7.51}
& \cellhj 0.60{\color{gray}\tiny$\pm$0.23}
& 87.01{\color{gray}\tiny$\pm$22.79}
& 0.76{\color{gray}\tiny$\pm$0.34}
\\
& ASBS +RDKit warmup (\textbf{Ours})
& \cellhj 90.85{\color{gray}\tiny$\pm$17.74}
& \cellhj 0.82{\color{gray}\tiny$\pm$0.24}
& \cellhj 77.86{\color{gray}\tiny$\pm$30.37}
& \cellhj 0.98{\color{gray}\tiny$\pm$0.34}
& \cellhj 97.28{\color{gray}\tiny$\pm$6.55}
& \cellhj 0.60{\color{gray}\tiny$\pm$0.24}
& \cellhj 87.81{\color{gray}\tiny$\pm$22.75}
& \cellhj 0.75{\color{gray}\tiny$\pm$0.34}
\\
\bottomrule
\end{tabular}
}
\label{tab:ablation}
\end{table}

\begin{figure}[t]
        \centering
        \includegraphics[width=.95\textwidth]{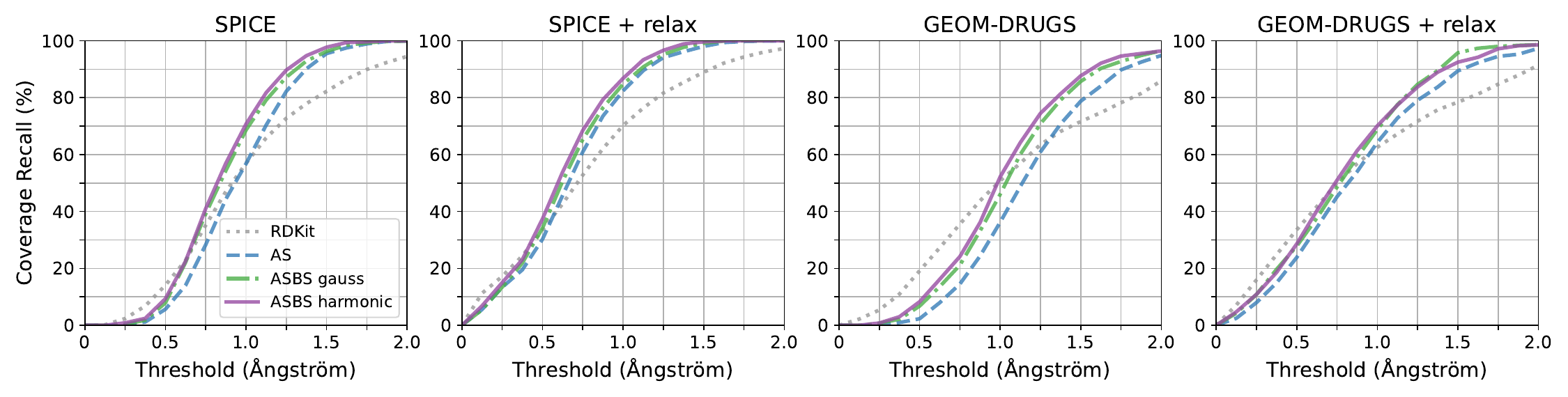}
        \vspace{-5pt}
        \caption{
            Ablation study on full recall coverage curves (without RDKit warm-start) using the same EGNN architecture as in AS \citep{AS}.
            Note that \Cref{tab:ablation} reports the values at the thresholds \textbf{1.0\AA} and \textbf{1.25\AA}.
        }
        \label{fig:wo-pretrain-recall-12layers}
\end{figure}

\subsection{Additional Experiments and Discussions} \label{sec:add-exp}

\vspace{3pt}
\textbf{Ablation study between AS and ASBS using the same EGNN}
$\quad$
For the amortized conformer generation task in \Cref{tab:conformer_generation}, we use an EGNN architecture with 20 layers, whereas AS employs the same architecture with 12 layers. In \Cref{tab:ablation}, we report the results of ASBS using the same 12-layer EGNN as AS. Notably, our ASBS consistently outperforms AS on all metrics across all setups, except the coverage for GEOM-DRUGS with relaxation and RDKit warm-start, where ASBS falls slightly behind AS by only 1.0\%.
Finally, \Cref{fig:wo-pretrain-recall-12layers} reports the full recall coverage curves that reproduce \Cref{tab:conformer_generation}.

\vspace{3pt}
\textbf{Ability of ASBS in finding modes}
$\quad$
We conduct additional experiments on the 40-mode GMM in 2D. Specifically, we instantiate ASBS with a uni-variance Gaussian source distribution centered at zero, effectively assuming no prior knowledge of the target modes, as the initial distribution does not coincide with any target modes. We also run a vanilla Langevin baseline for 1 million steps, starting from the same source distribution.

\Cref{fig:mode} represents the quantitative results. Notably, ASBS is able to identify almost all modes. In contrast, the vanilla Langevin baseline appears to suffer from a slow mixing rate, recovering less than half of the total modes even after 1 million steps. We highlight this distinction as an advantage of constructing diffusion samplers from the stochastic control and Schrödinger Bridge frameworks, which allows theoretical convergence to target distribution within a finite horizon. Finally, we believe that with proper tuning of the ASBS noise schedule, its performance can be further enhanced.

\begin{figure}[t]
        \centering
        \includegraphics[width=.7\textwidth]{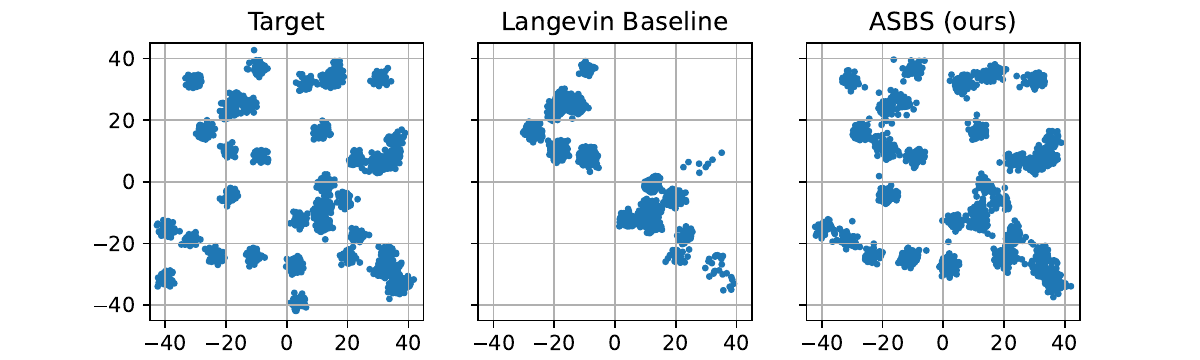}
        \caption{
            Compared to vanilla Langevin baseline, our ASBS---instantiated with a standard uni-variance Gaussian---is able to identify almost all modes without any prior knowledge of where the target modes were located.
        }
        \label{fig:mode}
\end{figure}

\vspace{3pt}
\textbf{Discussion on important weights}
$\quad$
Finally, we discuss the potential integration of ASBS with importance weights, emphasizing that our theoretical and algorithmic frameworks do not preclude the use of importance weights to further enhance performance or robustness.

Formally, the importance weights over model path $X \sim p^u$ admit the following representation:
\begin{equation}\label{eq:is}
    w(X) := \frac{\rd p^\star(X)}{\rd p^u(X)}
    = \exp \pr{ \int_0^1 -\tfrac{1}{2}\norm{u_t(X_t)}^2 \dt - \int^1_0 u_t(X_t) \cdot \rd W_t - \log \frac{\harphi_1(X_1)}{\nu(X_1)} + \log \frac{\harphi_0(X_0)}{\mu(X_0)} },
\end{equation}
which can be obtained from \eqref{eq:bbb} by setting $\bar h := \harphi_1$ so that $q^{\bar h} = p^\star$ is the optimal distribution of SB.

Note that when the source distribution degenerates to the Dirac delta $\mu(X_0) = \delta_0(X_0)$, the last term $\log \frac{\harphi_0(X_0)}{\mu(X_0)}$ becomes a constant and---as discussed in \Cref{sec:generalize-as}---$\harphi_1 = \pbase_1$, thereby recovering the weights used in prior SOC-based methods \citep{zhang2022path,AS}.

\Cref{eq:is} is also a more concise representation than the one derived in \citep{richterimproved}, by recognizing the following relation through the application of Ito Lemma \eqref{eq:ito} to $\log \harphi_t(X_t)$:
\begin{equation}\label{eq:Bridge-in-improved}
    \frac{\log \harphi_1(X_1)}{\log \harphi_0(X_0)} = \int_0^1 \br{\tfrac{1}{2}\norm{v_t(X_t)}^2 + (u_t \cdot v_t)(X_t) + \nabla \cdot (\sigma_t v_t(X_t) - f_t(X_t)) } \dt + \int_0^1 v_t(X_t) \cdot \rd W_t, 
\end{equation}
where we shorthand $v_t(x) := \sigma_t \nabla \log \harphi_t(x)$.

Estimating the weight in \eqref{eq:is} requires knowing the ratios $\frac{\harphi_1(x)}{\nu(x)}$ and $\frac{\harphi_0(x)}{\mu(x)}$, which are not immediately available with the current parametrization, $u_\theta(t,x) \approx \sigma_t \nabla \log \varphi_t(x)$ and $h_\phi(x) \approx \nabla \log \harphi_1(x)$.
One accommodation is to reparametrize the functions with potential network $v(t,x): [0,1]\times \calX \rightarrow \R$,
\begin{equation}
    u_\theta(t,x) := \sigma_t \nabla v_\theta(t,x),
    \qquad 
    h_\phi(x) := \nabla v_\phi(1,x)
\end{equation}
and then regress their gradients onto the adjoint and corrector targets.
With that, the logarithmic ratios can be easily estimated:
\begin{equation}
    \log \frac{\harphi_1(x)}{\nu(x)} = v_\phi(1,x) + E(x),
    \qquad 
    \log \frac{\harphi_0(x)}{\mu(x)} \stackrel{\eqref{eq:sb-factorization}}{=} - \log \varphi_0(x) = v_\theta(0,x).
\end{equation}
A more detailed investigation of this importance sampling scheme is left for future work.

\end{document}